\DeclareRobustCommand\full  {\tikz[baseline=-0.6ex]\draw[thick] (0,0)--(0.5,0);}
\DeclareRobustCommand\dashed{\tikz[baseline=-0.6ex]\draw[thick,dashed] (0,0)--(0.54,0);}
\newcommand{\code}[1]{\textsf{#1}}
\newcommand{\mln}{\Phi}
\newcommand{\sentence}{\Gamma}
\newcommand{\generalsentence}{{\widehat{\sentence}}}
\newcommand{\recursivesentence}{{\widetilde{\sentence}}}
\newcommand{\fotwoformula}{\psi}
\newcommand{\formula}{\alpha}
\newcommand{\weight}{w}
\newcommand{\vecweight}{\mathbf{\weight}}
\newcommand{\negweight}{\bar{w}}
\newcommand{\world}{\omega}
\newcommand{\wfomc}{WFOMC}
\newcommand{\symwfomc}{\ensuremath{\mathsf{WFOMC}}}
\newcommand{\fotwo}{\ensuremath{\mathbf{FO}^2}\xspace}
\newcommand{\ctwo}{\ensuremath{\mathbf{C}^2}\xspace}
\newcommand{\sctwo}{\ensuremath{\mathbf{SC}^2}}
\newcommand{\indicator}{\mathbbm{1}}
\newcommand{\domain}{\Delta}
\newcommand{\vecg}{\mathbf{g}}
\newcommand{\vecy}{\mathbf{y}}
\newcommand{\vecx}{\mathbf{x}}
\newcommand{\nat}{\mathbb{N}}
\newcommand{\real}{\mathbb{R}}
\newcommand{\pro}{\mathbb{P}}
\newcommand{\extformula}{\varphi}
\newcommand{\pair}[1]{\{#1\}}
\newcommand{\fomodels}[2]{\mathcal{M}_{#1, #2}}
\newcommand{\ufotwo}{\ensuremath{\mathbf{UFO}^2}\xspace}
\newcommand{\structure}{\mathcal{A}}
\newcommand{\typeweight}[1]{\langle \weight, \negweight\rangle(#1)}
\newcommand{\relaxcelltype}[2]{{#1}\downarrow{{#2}}}
\newcommand{\relaxcelltypeb}[2]{({#1}\downarrow{{#2}})}
\newcommand{\proj}[2]{\langle#1\rangle_{#2}}
\newcommand{\boldparagraph}[1]{\paragraph{{\normalfont \textbf{#1}}}}
\newcounter{ct}
\newtheorem{definition}{Definition}
\newtheorem{example}{Example}
\newtheorem{remark}{Remark}
\newtheorem{theorem}{Theorem}
\newtheorem{lemma}{Lemma}
\algrenewcommand\textproc{\code}
\journal{Artificial Intelligence}
\begin{document}

\begin{frontmatter}

%% Title, authors and addresses

%% use the tnoteref command within \title for footnotes;
%% use the tnotetext command for theassociated footnote;
%% use the fnref command within \author or \address for footnotes;
%% use the fntext command for theassociated footnote;
%% use the corref command within \author for corresponding author footnotes;
%% use the cortext command for theassociated footnote;
%% use the ead command for the email address,
%% and the form \ead[url] for the home page:
%% \title{Title\tnoteref{label1}}
%% \tnotetext[label1]{}
%% \author{Name\corref{cor1}\fnref{label2}}
%% \ead{email address}
%% \ead[url]{home page}
%% \fntext[label2]{}
%% \cortext[cor1]{}
%% \affiliation{organization={},
%%             addressline={},
%%             city={},
%%             postcode={},
%%             state={},
%%             country={}}
%% \fntext[label3]{}

% \title{On Exact Sampling in the Two-Variable Fragment of First-Order Logic}
\title{Lifted Algorithms for Symmetric Weighted First-Order Model Sampling}

%% use optional labels to link authors explicitly to addresses:
%% \author[label1,label2]{}
%% \affiliation[label1]{organization={},
%%             addressline={},
%%             city={},
%%             postcode={},
%%             state={},
%%             country={}}
%%
%% \affiliation[label2]{organization={},
%%             addressline={},
%%             city={},
%%             postcode={},
%%             state={},
%%             country={}}

\author[inst1,inst2]{Yuanhong Wang\corref{cor1}}
\ead{lucienwang@buaa.edu.cn}
\address[inst1]{State Key Laboratory of Software Development Environment, Beihang University, Beijing, China}
\address[inst2]{Zhongfa Aviation Institute of Beihang University, Hangzhou, China China}
% \affiliation[inst1]{organization={State Key Laboratory of Software Development Environment,  Beihang University},%Department and Organization
%             % addressline={Address One}, 
%             city={Beijing},
%             % postcode={00000}, 
%             % state={State One},
%             country={China}}

% \affiliation[inst2]{organization={Beihang Hangzhou Innovation Institute Yuhang},%Department and Organization
%             % addressline={Address Two}, 
%             city={Hangzhou},
%             % postcode={22222}, 
%             % state={State Two},
%             country={China}}

\author[inst1,inst2]{Juhua Pu}
% \ead{pujh@buaa.edu.cn}
            
\author[inst3,inst4]{Yuyi Wang}
% \ead{yuyiwang920@gmail.com}
\address[inst3]{CRRC Zhuzhou Insitute, Zhuzhou, China}
\address[inst4]{ETH Zurich, Zurich, Switzerland}

% \affiliation[inst3]{organization={CRRC Zhuzhou Insitute},%Department and Organization
%             % addressline={Address Two}, 
%             city={Zhuzhou},
%             % postcode={22222}, 
%             % state={State Two},
%             country={China}}
% \affiliation[inst4]{organization={ETH Zurich},%Department and Organization
%             % addressline={Address Two}, 
%             % city={Zhuzhou},
%             % postcode={22222}, 
%             % state={State Two},
%             country={Switzerland}}

\author[inst5]{Ond\v{r}ej Ku\v{z}elka\corref{cor1}}
\ead{ondrej.kuzelka@fel.cvut.cz}
\address[inst5]{Czech Technical University in Prague, Prague, Czech Republic}

% \affiliation[inst5]{organization={Czech Technical University in Prague},%Department and Organization
%             % addressline={Address Two}, 
%             % city={Hangzhou},
%             % postcode={22222}, 
%             % state={State Two},
%             country={Czech Republic}}

\cortext[cor1]{Corresponding author}

\begin{abstract}
Weighted model counting (WMC) is the task of computing the weighted sum of all satisfying assignments (i.e., models) of a propositional formula. 
Similarly, weighted model sampling (WMS) aims to randomly generate models with probability proportional to their respective weights. 
Both WMC and WMS are hard to solve exactly, falling under the \#\P-hard complexity class.
However, it is known that the counting problem may sometimes be tractable, if the propositional formula can be compactly represented and expressed in first-order logic. 
In such cases, model counting problems can be solved in time polynomial in the domain size, and are known as \textit{domain-liftable}.
The following question then arises: Is it also the case for WMS?
This paper addresses this question and answers it affirmatively.
Specifically, we prove the \textit{domain-liftability under sampling} for the two-variables fragment of first-order logic with counting quantifiers in this paper, by devising an efficient sampling algorithm for this fragment that runs in time polynomial in the domain size.
We then further show that this result continues to hold even in the presence of cardinality constraints.
To empirically validate our approach, we conduct experiments over various first-order formulas designed for the uniform generation of combinatorial structures and sampling in statistical-relational models. 
The results demonstrate that our algorithm outperforms a state-of-the-art WMS sampler by a substantial margin, confirming the theoretical results.

% In this paper, we study the sampling problem for first-order logic proposed recently by Wang et al.---how to efficiently sample a model of a given first-order sentence on a finite domain? We extend their result for the universally-quantified subfragment of two-variable logic $\mathbf{FO}^2$ ($\mathbf{UFO}^2$) to the entire fragment of $\mathbf{FO}^2$. Specifically, we prove the domain-liftability under sampling of $\mathbf{FO}^2$, meaning that there exists a sampling algorithm for $\mathbf{FO}^2$ that runs in time polynomial in the domain size. We then further show that this result continues to hold even in the presence of counting constraints, such as $\forall x\exists_{=k} y: \varphi(x,y)$ and $\exists_{=k} x\forall y: \varphi(x,y)$, for some quantifier-free formula $\varphi(x,y)$. Our proposed method is constructive, and the resulting sampling algorithms have potential applications in various areas, including the uniform generation of combinatorial structures and sampling in statistical-relational models such as Markov logic networks and probabilistic logic programs.
\end{abstract}

%%Graphical abstract
\begin{graphicalabstract}
\includegraphics{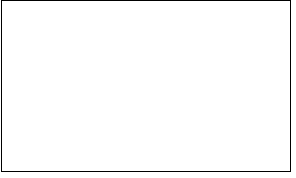}
\end{graphicalabstract}

%Research highlights
% \begin{highlights}
% \item Model sampling in first-order logic is liftable, without the need of domain grounding
% \item The two variables fragment of first-order logic with counting quantifiers is tractable for model sampling, with a complexity in time polynomial in the domain size
% \item Domain recursion scheme is useful for designing first-order model sampling algorithms
% \end{highlights}

\begin{keyword}
%% keywords here, in the form: keyword \sep keyword
model sampling,  first-order logic, domain-liftability, counting quantifier
%% PACS codes here, in the form: \PACS code \sep code
% \PACS 0000 \sep 1111
%% MSC codes here, in the form: \MSC code \sep code
%% or \MSC[2008] code \sep code (2000 is the default)
% \MSC 0000 \sep 1111
\end{keyword}

\end{frontmatter}

% \linenumbers

%% main text
% \section{Sample Section Title}
% \label{sec:sample1}
% Lorem ipsum dolor sit amet, consectetur adipiscing \cite{Fabioetal2013} elit, sed do eiusmod tempor incididunt ut labore et dolore magna aliqua. Ut enim ad minim veniam, quis nostrud exercitation ullamco laboris nisi ut aliquip ex ea commodo consequat. Duis aute irure dolor in reprehenderit in voluptate velit esse cillum dolore eu fugiat nulla pariatur. Excepteur sint occaecat cupidatat non proident, sunt in culpa qui officia deserunt mollit \cite{Blondeletal2008,FabricioLiang2013} anim id est laborum.

% Lorem ipsum dolor sit amet, consectetur adipiscing elit, sed do eiusmod tempor incididunt ut labore et dolore magna aliqua. Ut enim ad minim veniam, quis nostrud exercitation ullamco laboris nisi ut aliquip ex ea commodo consequat. Duis aute irure dolor in reprehenderit in voluptate velit esse cillum dolore eu fugiat nulla pariatur. Excepteur sint occaecat cupidatat non proident, sunt in culpa qui officia deserunt mollit anim id est laborum see \ref{sec:sample:appendix}.

%% The Appendices part is started with the command \appendix;
%% appendix sections are then done as normal sections

\section{Introduction}

Given a propositional formula and a weight for each truth assignment, \textit{weighted model counting} (WMC) aims to compute the cumulative weight of all satisfying assignments (i.e., models) of the input formula.
A closely related problem to WMC is \textit{weighted model sampling} (WMS), which samples models of the propositional formula, where the probability of choosing a model is proportional to its weight.
These problems find applications in various domains, including machine learning~\cite{deraedtProbLogProbabilisticProlog2007,sang2005solving}, probabilistic inference~\cite{chavira2008probabilistic}, statistics, planning and combinatorics~\cite{domshlak2007probabilistic,van2021symmetric}, and constrained random verification~\cite{chakrabortyScalableNearlyUniform2013,chakrabortyParallelScalableUniform2015,soosTintedDetachedLazy2020}. 
Unfortunately, both WMC and WMS are computationally challenging, falling within the \#\P-hard complexity class~\cite{valiant1979SIAMJ.Comput.,dyer2002counting}.
Nevertheless, a glimmer of hope emerges when the input propositional formula can be naturally and compactly represented using first-order logic. 
In such cases, the WMC problem may become tractable by exploiting the symmetries present in the problem.

When the input formula $\sentence$ is expressed in first-order logic, the problem of calculating the Weighted Model Count (WMC) is known as Weighted First-Order Model Counting (\wfomc{}). 
In \wfomc{}, a domain $\domain$ containing a finite number of objects is provided as part of the input. 
The models of $\sentence$ over $\domain$ are usually defined as the models of the grounding of $\sentence$ over $\domain$.
The weights are assigned to atomic facts and their negations in the models, and the overall weight of a model is determined by the product of the weights of its constituent facts. 
The objective of a WFOMC problem is to compute the weighted summation over all models of the formula $\sentence$ over $\domain$.
Many real-world problems, including probabilistic inference and weight learning in various \textit{statistical-relational learning} (SRL) models, can be directly reduced to \wfomc{}~\cite{vandenbroeckLiftedProbabilisticInference2011,van2016lifted}.

Compared with WMC, an important advantage of \wfomc{} is the existence of \textit{domain-lifted} (or simply \textit{lifted}) algorithms for certain fragments of first-order logic, which are algorithms that have a polynomial runtime with respect to the domain size \cite{broeckCompletenessFirstOrderKnowledge2011,beameSymmetricWeightedFirstOrder2015,kuusistoWeightedModelCounting2018,kuzelkaWeightedFirstorderModel2021}. 
The intuition behind lifted algorithms is to lift the computation from the propositional (ground) level to the higher level of first-order logic, avoiding the need to ground the input formula~\cite{10.7551/mitpress/10548.001.0001}.
For instance, the model count of $\Phi = \forall x\exists y: R(x,y)$ over a domain of size $n$ can be easily computed as $(2^n-1)^n$ using the fact that all objects in the domain are indistinguishable.
However, if we ground $\Phi$ to a propositional logic formula, the indistinguishability of the objects will be lost.
% In contrast to algorithms for WMC that operate on the propositional representation of the first-order sentence, lifted algorithms can provide an exponential acceleration as the domain size increases.

The existence of lifted algorithms for \wfomc{} raises an intriguing question: Are there likewise lifted algorithms for WMS variations in first-order logic?
In this work, we answer this question by studying the problem of \textit{weighted first-order model sampling} (WFOMS).
This problem, which is a sampling counterpart of \wfomc{}, aims to generate a model of the input first-order sentence $\sentence$ based on a probability that is proportional to its weight. 
% Similarly, we require that the weights assigned to atomic facts are determined solely by the relation symbol of the facts.
The WFOMS problem offers a natural reduction for a wide range of sampling problems without the necessity of converting them into WMS by grounding the input first-order sentence. 
Many problems relating to the generation of combinatorial structures can be readily formulated as WFOMS.
For instance, suppose we are interested in uniformly sampling labeled 2-colored graphs with $n$ nodes, it is equivalent to solving the WFOMS problem on the sentence:
\begin{equation}
  \begin{aligned}
      \forall x :\ &\neg E(x,x)\ \land \\
      \forall x \forall y :\ &E(x,y) \Rightarrow E(y,x)\ \land \\
      \forall x :\ &Red(x) \lor Black(x)\ \land \\
      \forall x :\ &\neg Red(x) \lor \neg Black(x) \land \\
      \forall x \forall y :\ &E(x,y) \Rightarrow \neg (Red(x) \land Red(y)) \land \neg (Black(x) \land Black(y))
  \end{aligned}
  \label{eq:2-color-graph}
\end{equation}
over a domain of size $n$, where $Red$ and $Black$ are two unary predicates representing the two colors of vertices.
Moreover, sampling problems in SRL can be also reduced to WFOMS as well.
An illustrative example is the sampling of possible worlds from a Markov logic network (MLN)~\cite{richardsonMarkovLogicNetworks2006}.
This can be easily transformed into a WFOMS problem using the same reduction technique employed for converting probabilistic inference to \wfomc{} (please see \ref{sub:expsettings}).

The answer to the question of whether there exist lifted algorithms for WFOMS depends on the input sentence of the problem.
Indeed, by the direct reduction from the spectrum membership problem~\cite{jaegerLowerComplexityBounds2015}, one can easily show that it is unlikely for every first-order sentence to have a lifted sampling algorithm\footnote{Recall that the spectrum, $\textsf{Spec}(\sentence)$, of a formula $\sentence$ is the set of numbers $n$ for which $\sentence$ has a model over a domain of size $n$. 
The spectrum membership problem, ``is $n\in \textsf{Spec}(\sentence)?$'', can be reduced to WFOMS by checking whether the sampling algorithm fails to sample a model.
Even if the sampling problem is guaranteed to have models, the spectrum decision problem can still be reduced to WFOMS.
Consider a WFOMS on $\sentence\lor A()$ over a domain of size $n$, where $A$ is a nullary predicate not in $\sentence$.
If the sampler can generate models with $\textsf{False}$ $A()$, we can confirm that $n\in \textsf{Spec}(\sentence)$, or if the sampler always generates models with $\textsf{True}$ $A()$, we can conclude that $n\notin \textsf{Spec}(\sentence)$ with high probability.}.
In the context of model counting, a first-order sentence or a class of sentences that allows for a lifted counting algorithm is referred to as \textit{domain-liftable} (or simply \textit{liftable}).
% the availability of lifted algorithms for WFOMC is closely tied to the fragment of first-order logic, in which the input statements are encoded.
% Specifically, the fragment that allows for a lifted counting algorithm is referred to as \textit{domain-liftable} (or \textit{liftable}).
% Similarly, for WFOMS, there also exists a connection between the tractability of the problem and the scope of its input sentences. 
Similarly, for WFOMS, we use the terminology of \textit{domain-liftable under sampling} (or simply \textit{sampling liftable}) to describe the fragments that allow for a lifted sampling algorithm.
The objective of this paper is to investigate the sampling liftability for certain fragments of first-order logic.
% Given the impossibility of the entire scope of first-order logic being amenable to sampling, a direct consequence of the hardness of the spectrum decision problem~\cite{jaegerLowerComplexityBounds2015}, 

The main contribution of this work is establishing the sampling liftability for the two-variables fragment of first-order logic with counting quantifiers $\exists_{=k}$, $\exists_{\le k}$ and $\exists_{\ge k}$ (see, e.g., \cite{gradelTwovariableLogicCounting1997}), which stand for \textit{exist exactly $k$}, \textit{exist at most $k$}, and \textit{exist at least $k$} respectively.
We note that this fragment, denoted by \ctwo{}, is expressive enough to encode various interesting sampling problems.
For instance, the sentence for sampling 2-colored graphs mentioned above contains only two variables and is in \ctwo{}.
The uniform generation of $k$-regular graphs, a problem that has been widely studied in the combinatorics community \cite{cooperSamplingRegularGraphs2007, gaoUniformGenerationRandom2015}, can be also solved by a lifted sampling algorithm for \ctwo{}.
This problem can be formulated as a WFOMS on the following \ctwo{} sentence:
\begin{equation}
  \label{eq:k-regular-graph}
  \forall x\forall y: (E(x,y)\Rightarrow E(y,x))\land \forall x: \neg E(x,x)\land \forall x\exists_{=k}y: E(x,y),
\end{equation}
where $\forall x\exists_{=k} y: E(x,y)$ expresses that every vertex $x$ has exactly $k$ incident edges. 

Our proof for the sampling liftability of \ctwo{} proceeds by progressively demonstrating the sampling liftability for the following fragments:
\begin{itemize}
    \item \ufotwo{}: The first-order fragment comprising of universally quantified sentences of the form $\forall x\forall y: \fotwoformula(x,y)$ with some quantifier-free formula $\fotwoformula(x,y)$;
    \item \fotwo{}: The two-variable fragment of first-order logic obtained by restricting the variable vocabulary to $\pair{x,y}$;
    % \item \sctwo{}: \fotwo{} with additional counting constraints of the form $\forall x\exists_{=k}y: \extformula(x,y)$ and $\exists_{=k}x\forall y: \extformula(x,y)$, where $\extformula$ is a quantifier-free formula.
    \item \ctwo{}: The two-variable fragment of first-order logic with counting quantifiers $\exists_{=k}$, $\exists_{\le k}$ and $\exists_{\ge k}$.
\end{itemize}
Note that \ufotwo{} is a sub-fragment of \fotwo{}, which in turn is a sub-fragment of \ctwo{}.
The analysis of sampling liftability for smaller fragments can serve as a basis for that of the larger ones.
This is analogous to the case of \wfomc{}, where the liftability for \ctwo{} is established by first proving the liftability for \ufotwo{}~\cite{broeckCompletenessFirstOrderKnowledge2011} and \fotwo{}~\cite{vandenbroeck2014Proc.FourteenthInt.Conf.Princ.Knowl.Represent.Reason.}, and then extending the result to \ctwo{}~\cite{kuzelkaWeightedFirstorderModel2021}.
% \lnote{I feel like I missed something here..}

% The aforementioned fragments demonstrate a hierarchical structure, wherein \ufotwo{} is a sub-fragment of \fotwo{}, which in turn is a sub-fragment of \sctwo{}. Consequently, the process of demonstrating the liftability of sampling is performed incrementally, with the proof of the larger fragment relying on that of the smaller fragment.
% \lnote{Do we need to present the known results of \wfomc{} for these fragments?}

This manuscript is an expanded version of a conference paper~\cite{DBLP:conf/lics/WangP0K23} that appeared in LICS 2023. 
It includes all the technical preliminaries and provides comprehensive details of the proofs. 
Additionally, this journal version extends the positive result therein to the \ctwo{} fragment.
The rest of the paper is organized as follows.
Section~\ref{sec:preliminaries} presents the essential concepts used throughout this paper.
In Section~\ref{sec:wfoms}, we formally define the problem of weighted first-order model sampling.
The domain-liftability under sampling for \ufotwo{}, \fotwo{} and \ctwo{} is then established in Sections~\ref{sec:ufo2_liftable}, \ref{sec:fo2_liftable} and \ref{sec:c2_liftable}, respectively. 
In Section~\ref{sec:ccliftability}, we extend the result to the case with additional cardinality constraints, which in turn provides a more practically efficient sampling algorithm for a particular subfragment of \ctwo{}.
Section~\ref{sec:exp} presents the experimental results, which demonstrate the efficiency of our proposed algorithm.
Section~\ref{sec:related_work} discusses the relevant literature concerning the WFOMS problems. 
Finally, Section~\ref{sec:conclusion} provides the concluding remarks and outlines potential directions for future research.

\section{Preliminaries}
\label{sec:preliminaries}

In this section, we briefly review the main necessary technical concepts that we will use in the paper.

\subsection{First-order Logic}

We consider the function-free fragment of first-order logic.
An \emph{atom} of arity $k$ takes the form $P(x_1,\dots,x_k)$ where $P/k$ is from a vocabulary of predicates (also called relations), and $x_1,\dots,x_k$ are logical variables from a vocabulary of variables.
A \emph{literal} is an atom or its negation. 
A \emph{formula} is defined inductively as an atom, the negation of a single formula, or the conjunction or disjunction of two formulas.
A formula may optionally be surrounded by one or more quantifiers of the form $\forall x$ or $\exists x$, where $x$ is a logical variable. 
A logical variable in a formula is said to be \emph{free} if it is not bound by any quantifier.
A formula with no free variables is called a \emph{sentence}. 
The vocabulary of a formula $\formula$ is taken to be $\mathcal{P}_\formula$.
% A formula is \emph{ground} if it contains no logical variables. 
% The \emph{groundings} of a quantifier-free formula is the set of formulas obtained by instantiating the free variables with any possible combination of constants from $\domain$.
% A \emph{possible world} $\world$ is represented as the set of ground atoms that are true in $\world$.
% The set of all true (resp. false) atoms in a possible world $\world$ is denoted by $\world_T$ (resp. $\world_F$).
% A possible world $\world$ is called a \textit{model} of a sentence $\sentence$ if it satisfies $\sentence$ (i.e., $\world \models \sentence$), according to the usual semantics of first-order logic. 

Given a predicate vocabulary $\mathcal{P}$, a $\mathcal{P}$-structure $\structure$ is a tuple $(\domain, \mathcal{I})$, where $\domain$, called \textit{domain}, is an arbitrary finite set, and $\mathcal{I}$ interprets each predicate in $\mathcal{P}$ over $\domain$.
In the context of this paper, the domain is usually predefined, and thus we can leave out the domain from a structure and instead treat a structure as either a set of ground literals in $\mathcal{I}$ or their conjunction.
Given a $\mathcal{P}$-structure $\structure$ and $\mathcal{P}'\subseteq\mathcal{P}$, we write $\proj{\structure}{\mathcal{P}'}$ for the $\mathcal{P}'$-reduct of $\structure$ that only contains the literals for predicates in $\mathcal{P}'$.
We follow the standard semantics of first-order logic for determining whether a structure is a model of a formula.
We denote the set of all models of a formula $\alpha$ over the domain $\domain$ by $\fomodels{\formula}{\domain}$. 
A set $L$ of ground literals is said to be \textit{valid} w.r.t. $(\formula, \domain)$, if there exists a model $\mu$ in $\fomodels{\alpha}{\domain}$ such that $L\subseteq \mu$.
% \lnote{the definition of validity is moved here.}
% We call a subset $\mathcal{P}$ of the vocabulary $\mathcal{P}_\sentence$ of a sentence $\sentence$ an \textit{independent support} of $\sentence$, if for any domain $\domain$ and any model $\mu$ of $\sentence$ over $\domain$, $\mathcal{P}$ in $\mu$ fully determines $\mathcal{P}_\sentence \setminus\mathcal{P}$.
% \ctwo{} is an extension to \fotwo{} by introducing the counting quantifiers ``exists exactly $k$'' $\exists_{=k}$.
% We will provide the details of $\ctwo{}$ later in the paper.

\subsection{Weighted First-Order Model Counting}

The \textit{first-order model counting problem}~\cite{vandenbroeckLiftedProbabilisticInference2011} asks, when given a domain $\domain$ and a sentence $\sentence$, how many models $\sentence$ has over $\domain$.
The \textit{weighted first-order model counting (\wfomc{}) problem} adds a pair of weighting functions $(\weight, \negweight)$ to the input, that both map the set of all predicates in $\sentence$ to a set of weights: $\mathcal{P}_\sentence \to \real$.
\begin{definition}[Weight of literals]
  Given a pair of weighting function $(\weight, \negweight)$ and a set $L$ of literals, the weight of $L$ under $(\weight, \negweight)$ is defined as
  \begin{equation*}
    \typeweight{L} := \prod_{l\in L_T}\weight(\mathsf{pred}(l)) \cdot \prod_{l\in L_F}\negweight(\mathsf{pred}(l))
  \end{equation*}
  where $L_T$ (resp. $L_F$) denotes the set of true ground (resp. false) literals in $L$, and $\mathsf{pred}(l)$ maps a literal $l$ to its corresponding predicate name.
\end{definition}

\begin{definition}[Weighted first-order model counting]
    Let $(\weight, \negweight)$ be a weighting on a sentence $\sentence$. The \wfomc{} problem on $\sentence$ over a finite domain $\domain$ under $(\weight, \negweight)$ is to compute
    \begin{equation*}
        \symwfomc(\sentence, \domain, \weight, \negweight) := \sum_{\mu\in\fomodels{\sentence}{\domain}}\typeweight{\mu}.
    \end{equation*}
\end{definition}
Note that since these weightings are defined on the predicate level, all groundings of the same predicate get the same weights. For this reason, the notion of \wfomc{} defined here is also referred to as \textit{symmetric} \wfomc{}.

Given a sentence, or a class of sentences, prior research has mainly focused on its \textit{data complexity} for \wfomc{}---the complexity of computing $\symwfomc(\sentence, \domain, \weight, \negweight)$ when fixing the input sentence $\sentence$ and weighting $(\weight, \negweight)$, and treating the domain size $n$ as a unary input.
A sentence, or class of sentences, that exhibits polynomial-time data complexity is said to be \textit{domain-liftable} (or \textit{liftable}). 
Various fragments of first-order logic have been proven to be liftable, such as \ufotwo{}\cite{broeckCompletenessFirstOrderKnowledge2011}, \fotwo{}\cite{vandenbroeck2014Proc.FourteenthInt.Conf.Princ.Knowl.Represent.Reason.}, $\mathbf{S}^2\fotwo{}$~\cite{kazemiDomainRecursionLifted2017}, and $\mathbf{S}^2\mathbf{RU}$~\cite{kazemiDomainRecursionLifted2017}.
% For a comprehensive discussion of the domain-liftability of these fragments, as well as other related fragments, please refer to~\cite{beameSymmetricWeightedFirstOrder2015}.

For technical purposes, when the domain is fixed, we allow the input sentence to contain some ground literals, e.g., $(\forall x\forall y: fr(x,y) \land sm(x) \Rightarrow sm(y))\land sm(e_1)\land \neg fr(e_1, e_3)$ over a fixed domain of $\{e_1, e_2, e_3\}$.
These ground literals are often called evidence, and \wfomc{} on such sentences is known as \wfomc{} with evidence~\cite{vandenbroeckConditioningFirstorderKnowledge2012,vandenbroeckComplexityApproximationBinary2013}.
In this paper, we also call this counting problem \textit{conditional} \wfomc{}.
An important result of conditional \wfomc{} is its maintenance of polynomial complexity when the ground literals are unary.
This result was provided in \cite{vandenbroeckConditioningFirstorderKnowledge2012} for a specific lifted counting algorithm, called first-order knowledge compilation (refer to \textbf{Positive Result} in Section~4).
We generalize the result to any lifted counting algorithm in Proposition~\ref{pro:liftable_unary_evidence} based on a similar technique.
Please find the details in~\ref{sec:wfomc-unary}.
\begin{restatable}{proposition}{liftableunaryevidence}
\label{pro:liftable_unary_evidence}
    Let $\sentence$ be a domain-liftable first-order sentence, and let $\domain$ be a domain.
    For any set $L$ of unary literals grounding on $\domain$, and any weighting functions $(\weight, \negweight)$, $\symwfomc(\sentence\land\bigwedge_{l\in L}l, \domain, \weight, \negweight)$ can be computed in time polynomial in both the domain size and the size of $L$.
\end{restatable}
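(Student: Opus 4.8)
The plan is to exploit the permutation symmetry of symmetric \wfomc{} to reduce the conditional count to a polynomial number of \emph{unconditional} \wfomc{} evaluations. The starting observation is that a set $L$ of unary literals constrains each domain element only through the unary $1$-type it must realise: writing $U_1,\dots,U_m$ for the unary predicates of $\sentence$ (a \emph{constant} number, since $\sentence$ is fixed), every element is forced into one of at most $3^m$ \emph{evidence profiles} according to which of the $U_j$ are required true, required false, or left free by $L$. First I would group the domain accordingly into $t\le 3^m$ classes $G_1,\dots,G_t$ of mutually interchangeable elements, with sizes $n_1,\dots,n_t$, and record for each class $G_r$ the set $S_r$ of $1$-types compatible with its profile.

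Next I would rewrite the target as a combination of symmetric, evidence-free quantities. For a $1$-type histogram $\vec h=(h_\tau)_\tau$ over the $u=2^m$ unary $1$-types, let $C_{\vec h}$ be the total weight $\sum\typeweight{\mu}$ of those $\mu\in\fomodels{\sentence}{\domain}$ that realise exactly $h_\tau$ elements of each $1$-type $\tau$. Permutation symmetry makes $C_{\vec h}$ depend only on $\vec h$, and a short counting argument then gives
\begin{equation*}
\symwfomc\Big(\sentence\wedge\bigwedge_{l\in L}l,\ \domain,\ \weight,\ \negweight\Big)=\sum_{\vec h} N(\vec h)\, C_{\vec h},
\end{equation*}
where $N(\vec h)$ counts the ways to assign a compatible $1$-type (from $S_r$) to each element of each class $G_r$ so as to realise $\vec h$. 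Since $t$ and $u$ are constants while the number of histograms is $O(n^{u-1})$, there are only polynomially many terms, and each $N(\vec h)$ is a product of multinomial coefficients summed over compositions, hence computable in time polynomial in $n$ and $|L|$.

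It therefore remains to compute all the $C_{\vec h}$, which I would obtain by polynomial interpolation. Introduce fresh unary predicates $R_\tau$, one per $1$-type, defined by the unary axioms $\forall x:\big(R_\tau(x)\leftrightarrow\bigwedge_{U\in\tau^+}U(x)\wedge\bigwedge_{U\in\tau^-}\neg U(x)\big)$, and let $\sentence^{\dagger}$ be $\sentence$ conjoined with these definitions. Giving $R_\tau$ the weight $z_\tau$ and $\negweight(R_\tau)=1$ turns the unconditional count into the generating polynomial $\symwfomc(\sentence^{\dagger},\domain,\dots)=\sum_{\vec h}C_{\vec h}\prod_\tau z_\tau^{h_\tau}$; evaluating it at $O(n^{u})$ choices of $(z_\tau)_\tau$ and inverting the resulting multivariate Vandermonde system recovers every coefficient $C_{\vec h}$. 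For a single unary predicate this already works verbatim with one variable $x=z_U$: from $\symwfomc(\sentence,\domain,\weight[U\mapsto x],\negweight)=\sum_k c_k x^k$ one gets, for evidence forcing $U$ true on $a$ and false on $b$ elements, the value $\sum_{k}\binom{n-a-b}{k-a}\binom{n}{k}^{-1}c_k\,\weight(U)^{k}$, with no fresh predicate needed.

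The crux, and the step I expect to be the main obstacle, is guaranteeing that each interpolation point can be evaluated in polynomial time, i.e.\ that $\sentence^{\dagger}$ is itself domain-liftable. Merely reweighting the predicates of $\sentence$ cannot separate the \emph{joint} statistics $C_{\vec h}$ from their per-predicate marginals—distinct histograms with equal marginals collapse to the same factorised weight—so the fresh, non-factorising definitions of the $R_\tau$ are genuinely necessary. I would close this gap with a modularity lemma: conjoining a domain-liftable sentence with axioms of the form $\forall x:\lambda(x)$ over fresh unary predicates only refines each $1$-type into a constant number of sub-types and hence preserves domain-liftability. This is the property that any algorithm witnessing the liftability of $\sentence$ must respect, and phrasing the argument through it is precisely how we avoid tying the proof to a particular counting procedure. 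Granting the lemma, every \wfomc{} call above runs in time polynomial in $n$, and since the numbers of calls, histograms, and evidence profiles are all polynomial in $n$ and $|L|$, the overall computation is polynomial, as claimed.
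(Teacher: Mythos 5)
Your overall strategy is the same one the paper uses: group the domain elements by the unary evidence they receive, exploit permutation symmetry so that only a histogram over (constantly many) unary profiles matters, and extract the histogram-resolved counts by attaching fresh unary indicator predicates to $\sentence$. The paper's version introduces one fresh predicate $\xi^i$ per evidence type together with $\forall x:\xi^i(x)\Rightarrow\sigma^i(x)$, an exactly-one partition axiom, and cardinality constraints $|\xi^i|=n_i$, and then reads off the conditional count as $\binom{n}{n_1,\dots,n_m}^{-1}\symwfomc(\sentence'\land\bigwedge_i(|\xi^i|=n_i),\domain,\weight,\negweight)$, citing Van den Broeck et al.\ (2012) for the fact that this augmented counting problem stays polynomial for any domain-liftable $\sentence$. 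Your interpolation of the generating polynomial $\sum_{\vec h}C_{\vec h}\prod_\tau z_\tau^{h_\tau}$ is essentially the standard implementation of that cardinality-constraint step, made explicit.

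The genuine gap is exactly where you suspect it: the ``modularity lemma'' is asserted, not proved, and the justification you offer does not work. Domain-liftability of $\sentence$ only asserts the existence of \emph{some} polynomial-time algorithm for $\symwfomc(\sentence,\domain,\cdot,\cdot)$; that algorithm is a black box, so there is nothing an ``algorithm witnessing the liftability of $\sentence$'' is forced to respect about 1-types. Moreover the lemma cannot be recovered from black-box calls to that oracle: as you yourself note, reweighting the predicates of $\sentence$ only exposes per-predicate marginals, and liftability of $\sentence^{\dagger}$ under arbitrary weights on the $R_\tau$ is \emph{equivalent} to polynomial-time computability of all the $C_{\vec h}$, which already implies the proposition you are trying to prove --- so, as written, the argument is circular. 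You must either prove the modularity lemma by independent means or, as the paper does, import a citable result that conjoining a liftable sentence with fresh unary predicates, unary axioms, and cardinality constraints preserves liftability. A second, minor slip: the displayed identity $\symwfomc(\sentence\land\bigwedge_{l\in L}l,\domain,\weight,\negweight)=\sum_{\vec h}N(\vec h)\,C_{\vec h}$ is missing the factor $\binom{n}{\vec h}^{-1}$, since by symmetry each individual 1-type assignment realising $\vec h$ carries weight $C_{\vec h}/\binom{n}{\vec h}$; your univariate example has the correct $\binom{n}{k}^{-1}$, so this appears to be an oversight in the general formula rather than a misunderstanding.
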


% \lnote{add more details. change the term conditional wfomc.}

\subsection{Types and Tables}

We define a \textit{1-literal} as an atomic predicate or its negation using only the variable $x$, and a \textit{2-literal} as an atomic predicate or its negation using both variables $x$ and $y$.
An atom like $R(x,x)$ or its negation is considered a 1-literal, even though $R$ is a binary relation.
A 2-literal is always of the form $R(x,y)$ and $R(y,x)$, or their respective negations.

Let $\mathcal{P}$ be a finite vocabulary.
A \textit{1-type} over $\mathcal{P}$ is a maximally consistent set\footnote{A set of literals is maximally consistent if it is consistent (does not contain both a literal and its negation) and cannot be extended to a larger consistent set.} of 1-literals formed by $\mathcal{P}$.
Denote the set of all 1-types over $\mathcal{P}$ as $U_\mathcal{P}$.
The size of $U_\mathcal{P}$ is finite and only depends on the size of $\mathcal{P}$.
We often view a 1-type $\tau$ as a conjunction of its elements, and write $\tau(x)$ when viewing $\tau$ as a formula.

A \textit{2-table} over $\mathcal{P}$ is a maximally consistent set of 2-literals formed by $\mathcal{P}$.
% ~\footnote{Note that 2-tables defined here are different from 2-types well known in the logic literature; a 2-table does not include information about the unary types of its two constituent elements.}.
We often identify a 2-table $\pi$ with a conjunction of its elements and write it as a formula $\pi(x,y)$.
The total number of 2-tables over $\mathcal{P}$ also only depends on the size of $\mathcal{P}$.

Consider a structure $\structure$ defined over a predicate vocabulary $\mathcal{P}$. 
An element $e$ in the domain of $\structure$ \textit{realizes} the 1-type $\tau$ if $\structure\models\tau(e)$.
Every domain element in $\structure$ realizes exactly one 1-type over $\mathcal{P}$, which we call the \textit{1-type of the element}.
% Given a first-order formula, a 1-type is called \textit{feasible} if it is the 1-type of some element in some model of the formula.
% The size of the partitioned subset of a 1-type is referred to as its \textit{cardinality}.
% We always assumed that the linear order of 1-types is fixed, and call the vector of cardinalities of 1-types w.r.t. $\structure$ as the \textit{1-type configuration} of $\structure$. 
% It is crucial to note that every structure has exactly one 1-type configuration.
The \textit{2-table of an element tuple} $(a,b) \in \domain^2$ is the unique 2-table $\pi$ that $(a, b)$ satisfies in $\structure$: $\structure\models \pi(a,b)$.
% Similar to 1-types, we call a 2-table \textit{feasible} in a formula if it is the 2-table of some element tuple in some model of the formula.
A structure is fully characterized by the 1-types of its elements and the 2-tables of its element tuples.

\begin{example}
  Consider the vocabulary $\mathcal{P} = \{F/2, G/1\}$ and the structure over $\mathcal{P}$
  \begin{equation*}
      \{F(a,a), G(a), F(b,b), \neg G(b), F(a,b), \linebreak \neg F(b,a)\}
  \end{equation*}
  with the domain $\{a, b\}$.
  The 1-types of the elements $a$ and $b$ are $F(x,x)\land G(x)$ and $F(x,x)\land \neg G(x)$ respectively.
  The 2-table of the element tuples $(a,b)$ and $(b,a)$ are $F(x,y) \land \neg F(y,x)$ and $\neg F(x,y) \land F(y,x)$ respectively.
\end{example}

\subsection{Notations}

We will use $[n]$ to denote the set of $\{1, 2, \dots, n\}$.
The notation $\{x_i\}_{i\in[n]}$ represents the set of terms $\{x_1, x_2, \dots, x_n\}$, and $(x_i)_{i\in[n]}$ the vector of $(x_1, x_2, \linebreak \dots, x_n)$, which is also denoted by $\vecx$.
We use $\vecx^\vecy$ to denote the product over element-wise power of two vectors $\vecx$ and $\vecy$: $\vecx^\vecy = \prod_{i\in[n]}x_i^{y_i}$.
% The notation $\oplus$ is used to denote the concatenation of two vectors.
Using the vector notation, we write the multinomial coefficient $\binom{N}{x_1, x_2, \dots, x_n}$ as $\binom{N}{\vecx}$.

An important operation in our sampling algorithms is {\em partitioning}.
A partition of a set is defined as a grouping of its elements into disjoint subsets.
In this paper, all partitions under consideration are presumed to be \textit{order dependent}, and are represented by a vector of subsets $(S_i)_{i\in[m]}$.
Given a partition $\mathcal{S} = (S_i)_{i\in[m]}$ of a finite set, we refer to the vector of cardinalities $(|S_i|)_{i\in[m]}$ as the \textit{configuration} of  $\mathcal{S}$~\footnote{The notion of the partition configuration is analogous to many other concepts in the lifted inference literature, e.g., the histogram in lifted variable elimination algorithms~\cite{DBLP:conf/aaai/MilchZKHK08,DBLP:journals/jair/TaghipourFDB13}.}.
We adopt the term \textit{configuration space} to refer to the set of all partition configurations with a constant length over a given domain.
\begin{definition}[\textbf{Configuration space}]
  Given a non-negative integer $M$ and a positive integer $m$, we define the configuration space $\mathcal{T}_{M, m}$ as
  \begin{equation*}
    \mathcal{T}_{M,m} = \left\{(n_i)_{i\in[m]}\mid \sum_{i\in[m]} n_i = M, n_1, n_2, \dots, n_m\in\nat\right\}.
  \end{equation*}
\end{definition}
\noindent We remark that the size of $\mathcal{T}_{M, m}$ is given by $\binom{M + m - 1}{m - 1}$, which is polynomial in $M$ (while exponential in $m$).
% Based on the correspondence between the 1-type of domain elements and the partition of the domain, the configuration space $\mathcal{T}_{M, m}$ can be viewed as the collection of all possible 1-type configurations over a domain that contains $M$ elements, with the number of 1-types being $m$.
% It is from this context that the term ``configuration'' is derived.

\section{Weighted First-Order Model Sampling}
\label{sec:wfoms}

We are now ready to formally define the problem of \textit{weighted first-order model sampling} (WFOMS).

\begin{definition}[\textbf{Weighted first-order model sampling}]
    \label{def:swfos}
  Let $(\weight, \negweight)$ be a pair of weighting functions: $\mathcal{P}_\sentence\to\real_{\ge 0}$~\footnote{The non-negative weights assumption ensures that the sampling probability of a model is well-defined.}.
  The symmetric WFOMS problem on $\sentence$ over a domain $\domain$ under $(\weight, \negweight)$ is to generate a model $\mu\in\fomodels{\sentence}{\domain}$ according to the probability
  \begin{equation}
      \label{eq:WMS}
      \begin{aligned}
      \pro[\mu] = \frac{\typeweight{\mu}}{\symwfomc(\sentence, \domain, \weight, \negweight)}.
      \end{aligned}
  \end{equation}
\end{definition}

We call a probabilistic algorithm that realizes a solution to the WFOMS a \textit{weighted model sampler}.
Throughout this paper, when the context is clear, we simply refer to the weighted model sampler as a \emph{sampler}.
We use $G(\sentence, \domain, \weight, \negweight)$ to denote a sampler of the WFOMS problem on $\sentence$ over $\domain$ under $(\weight, \negweight)$.
We adapt the notion of data complexity of \wfomc{} to the sampling problem, and say a sampler is \textit{domain-lifted} (or simply \textit{lifted}) if the model sampling algorithm runs in time polynomial in the domain size $n$. 
We call a sentence, or class of sentences, that admits a domain-lifted sampler \textit{domain-liftable under sampling} (or simply \textit{sampling liftable}).

\begin{example}
\label{ex:planted_clique}
    A sampler of the sentence $\forall x\forall y: (E(x,y)\to E(y,x))\land \neg E(x,x)$ over a domain of size $n$ under the weighting $\weight(E) = 3, \negweight(E) = 1$ samples undirected graphs, where the probability of each edge is 
    \begin{equation*}
      \frac{w(E) \cdot w(E)}{w(E) \cdot w(E) + w(\neg E) \cdot w(\neg E)} = 0.9.
    \end{equation*}
    The sampler actually corresponds to an Erd\"{o}s-R\'{e}nyi graph $\mathcal{G}_{n,p=0.9}$~\cite{erdHos1960evolution}.
\end{example}

We define the probability of a sentence $\Phi$ conditional on another sentence $\sentence$ over a domain $\domain$ under $(\weight, \negweight)$ as
\begin{equation*}
  \pro[\Phi\mid\sentence; \domain, \weight, \negweight] := \frac{\symwfomc(\Phi\land\sentence, \domain, \weight, \negweight)}{\symwfomc(\sentence, \domain, \weight, \negweight)}.
\end{equation*}
With a slight abuse of notation, we also write the probability of a set $L$ of ground literals conditional on a sentence $\sentence$ over a domain $\domain$ under $(\weight, \negweight)$ in the same form:
\begin{equation*}
  \pro[L \mid \sentence; \domain, \weight, \negweight] := \pro\left[\bigwedge_{l\in L}l \mid \sentence; \domain, \weight, \negweight\right].
\end{equation*}
Then, the required sampling probability of a model $\mu$ in the WFOMS problem can be written as $\pro[\mu] = \pro[\mu\mid \sentence; \domain, \weight, \negweight]$.
When the context is clear, we omit $\domain$ and $(\weight, \negweight)$ in the conditional probability.

% A \textit{skeleton} of a WFOMS $(\sentence, \domain, \weight, \negweight)$ is a subset $\mathcal{P}$ of the vocabulary $\mathcal{P}_\sentence$, such that the interpretation for $\mathcal{P}$ fully determines $\mathcal{P}_\sentence \setminus\mathcal{P}$ in the models of $\sentence$, and for any predicate $P\in\mathcal{P}_\sentence\setminus\mathcal{P}$, $\weight(P) = \negweight(P)= 1$.
% Using the notion of skeleton, a WFOMS $(\sentence,\domain,\weight, \negweight)$ can be reduced to randomly generating a valid $\mathcal{P}$-structure $G(\sentence, \domain, \weight, \negweight)$ such that
% \begin{equation*}
%   \pro[G(\sentence, \domain,\weight, \negweight) = \proj{\mu}{\mathcal{P}}] = \pro[\proj{\mu}{\mathcal{P}}\mid \sentence; \domain, \weight, \negweight]
% \end{equation*}
% for every $\mu\in\fomodels{\sentence}{\domain}$, where $\mathcal{P}$ is a skeleton of the problem.

In this paper, we often convert one WFOMS problem into another, which is commonly referred to as a \textit{reduction}. 
The essential property of such reductions extensively used in this paper is their \textit{soundness}.
\begin{definition}[\textbf{Sound reduction}]
  \label{de:soundness}
  A reduction of the WFOMS problem of $(\sentence, \domain, \weight, \negweight)$ to $(\sentence', \domain', \weight', \negweight')$ is sound iff there exists a polynomial-time deterministic function $f$, such that $f$ is a
  mapping from $\fomodels{\sentence'}{\domain'}$ to $\fomodels{\sentence}{\domain}$, and for every model $\mu\in\fomodels{\sentence}{\domain}$, 
  \begin{equation}
    \pro[\mu\mid \sentence; \domain, \weight, \negweight] = \sum_{\substack{\mu'\in\fomodels{\sentence'}{\domain'}:\\ f(\mu') = \mu}}\pro[\mu'\mid\sentence'; \domain', \weight', \negweight'].
  \end{equation}
\end{definition}

% In this case, the mapping function is bijective and preserves the weight of the mapped models.
Through a sound reduction, we can easily transform a sampler $G'$ of $(\sentence', \weight', \negweight', \domain')$ to a sampler $G$ of $(\sentence, \weight, \negweight, \domain)$ by 
\begin{equation*}
    G(\sentence, \domain, \weight, \negweight) := f(G'(\sentence', \domain', \weight', \negweight')).
\end{equation*}
The soundness is transitive, i.e., if the reductions from a WFOMS problem $\mathfrak{S}_1$ to $\mathfrak{S}_2$ and from $\mathfrak{S}_2$ to $\mathfrak{S}_3$ are both sound, the reduction from $\mathfrak{S}_1$ to $\mathfrak{S}_3$ is also sound.
For sound reductions in this paper, the most used mapping function is the $\mathcal{P}_\sentence$-reduct $f(\mu') = \proj{\mu'}{\mathcal{P}_\sentence}$, where $\mathcal{P}_\sentence\subseteq\mathcal{P}_{\sentence'}$.

% Let $\gamma$ be either a 1-type or a 2-table over $\mathcal{P}$. 
% Let $\gamma_T$ and $\gamma_F$ be the sets of positive and negative literals in $\gamma$. 
% With a slight abuse of notations, we  the weight of $\gamma$, denoted by $\typeweight{\gamma}$, is the product $\prod_{l\in\gamma_T}\weight(\code{pred}(l))\cdot\prod_{l\in\gamma_F}\negweight(\code{pred}(l))$.

\section{Universally Quantified \fotwo{} is Sampling Liftable}
\label{sec:ufo2_liftable}

In this section, we provide our first result of domain-liftability under sampling.
We consider the fragment of \ufotwo{}, a fragment containing all sentences in the form $\forall x\forall y: \fotwoformula(x,y)$, where $\fotwoformula(x,y)$ is a quantifier-free formula.
% An example of \ufotwo{} sentences, as shown in \eqref{eq:2-color-graph}, presents the 2-colored graphs.

% \lnote{can we also add an example of sampling 2-colored graphs to illustrate the idea?}

\begin{theorem}
    \label{th:ufo_liftable}
    The fragment of \ufotwo{} is domain-liftable under sampling.
\end{theorem}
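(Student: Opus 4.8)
The plan is to exploit the 1-type/2-table decomposition of structures together with the fact that a \ufotwo{} sentence imposes only \emph{pairwise} constraints, which makes the weight of a model factorize and lets us sample a model in three stages: first a \emph{cell configuration}, then an assignment of 1-types to the domain elements, and finally a 2-table for every pair of elements. Because the only quantifiers in $\sentence=\forall x\forall y:\fotwoformula(x,y)$ are universal, a structure $\mu$ is a model iff for \emph{every} ordered pair $(a,b)\in\domain^2$ the formula $\fotwoformula(a,b)$ holds; and whether $\fotwoformula(a,b)$ holds is determined entirely by the 1-type of $a$, the 1-type of $b$, and the 2-table of $(a,b)$. This locality is the crux of the whole argument.

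Concretely, enumerate the 1-types $\tau_1,\dots,\tau_u$ over $\mathcal{P}_\sentence$ that are \emph{admissible}, i.e.\ satisfy $\fotwoformula(x,x)$; their number $u$ depends only on the (fixed) vocabulary. For each admissible $\tau_i$ let $w_i$ be the weight of its 1-literals, and for each ordered pair $(\tau_i,\tau_j)$ let
\begin{equation*}
 r_{ij}=\sum_{\pi}\typeweight{\pi},
\end{equation*}
the sum ranging over all 2-tables $\pi$ with $\tau_i(x)\wedge\tau_j(y)\wedge\pi(x,y)\models\fotwoformula(x,y)\wedge\fotwoformula(y,x)$. Grouping the models of $\sentence$ by their cell configuration $\vecn=(n_1,\dots,n_u)\in\mathcal{T}_{n,u}$, where $n_i$ counts the elements of 1-type $\tau_i$, the weight factorizes and yields
\begin{equation*}
 \symwfomc(\sentence,\domain,\weight,\negweight)=\sum_{\vecn\in\mathcal{T}_{n,u}}\binom{n}{\vecn}\Big(\prod_{i}w_i^{\,n_i}\Big)\Big(\prod_i r_{ii}^{\binom{n_i}{2}}\Big)\Big(\prod_{i<j}r_{ij}^{\,n_in_j}\Big).
\end{equation*}
Since $|\mathcal{T}_{n,u}|=\binom{n+u-1}{u-1}$ is polynomial in $n$ and each summand is computable in polynomial time, the whole sum, and every individual summand, can be computed in time polynomial in $n$.

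The sampler then proceeds as follows. First, draw a configuration $\vecn$ with probability proportional to its summand above; this is a categorical draw over the polynomially many elements of $\mathcal{T}_{n,u}$, hence lifted. Second, conditioned on $\vecn$, assign 1-types to the $n$ domain elements uniformly among the $\binom{n}{\vecn}$ admissible assignments (e.g.\ by a random permutation), which is valid because every such assignment carries the same 1-type weight $\prod_i w_i^{\,n_i}$. Third, for each unordered pair $\{a,b\}$ whose elements have already received 1-types $\tau_i,\tau_j$, sample a 2-table $\pi$ independently with probability $\typeweight{\pi}/r_{ij}$; there are $\binom{n}{2}$ such pairs, each a constant-time categorical draw over a fixed number of 2-tables. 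Multiplying the three stage-probabilities telescopes the normalizers $\binom{n}{\vecn}$ and the $r_{ij}$ away and leaves exactly $\typeweight{\mu}/\symwfomc(\sentence,\domain,\weight,\negweight)=\pro[\mu]$, establishing correctness, while each stage runs in time polynomial in $n$.

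The main obstacle is to justify the factorization and the conditional independence used in the third stage. This is where the restriction to \emph{universally} quantified \fotwo{} is essential: with no existential quantifiers, satisfaction decomposes over the pairs, so conditioned on the 1-type assignment the valid 2-table choices for distinct pairs are genuinely independent and can be sampled separately. One must also verify the diagonal bookkeeping---that $\fotwoformula(a,a)$ is exactly the admissibility condition on 1-types, since $R(x,x)$ counts as a 1-literal---and handle the degenerate case $r_{ij}=0$ (configurations forcing such a pair simply receive zero weight and are never drawn), but these are routine once the locality is in place.
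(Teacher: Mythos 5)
Your proof is correct and follows essentially the same two-stage decomposition as the paper: enumeratively sample a 1-type configuration (polynomially many, each with polynomial-time computable probability), randomly assign 1-types consistent with it, then exploit the pairwise locality of $\forall x\forall y:\fotwoformula(x,y)$ to sample each pair's 2-table independently. The only difference is presentational --- you compute the configuration weights via an explicit closed-form product formula, whereas the paper treats the \wfomc{} of \ufotwo{} (conditioned on unary evidence, via Proposition~\ref{pro:liftable_unary_evidence}) as a black-box oracle, which is what lets its framework extend later to \fotwo{} and \ctwo{}.
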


% % \begin{proof}
We prove the sampling liftability of \ufotwo{} by constructing a lifted sampler. 
Suppose that we wish to sample models from some input \ufotwo{} sentence $\sentence=\forall x\forall y:\fotwoformula(x,y)$ over a domain $\domain = \{e_i\}_{i\in[n]}$ under weights $(\weight, \negweight)$.
Given a $\mathcal{P}_\sentence$-structure $\structure$ over $\domain$, we denote $\tau_i$ the 1-type of the domain element $e_i$ and $\pi_{i,j}$ the 2-table of the elements tuple $(e_i, e_j)$.
The structure $\structure$ is fully characterized by the ground 1-types $\tau_i(e_i)$ and 2-tables $\pi_{i,j}(e_i, e_j)$, and we can write the sampling probability of $\structure$ as
$$\pro[\structure\mid\sentence] = \pro\left[\bigwedge_{i\in[n]}\tau_i(e_i)\land\bigwedge_{i,j\in[n]: i<j}\pi_{i,j}(e_i, e_j)\mid\sentence\right].$$
By the definition of conditional probability, the sampling probability can be further decomposed as
\begin{equation}
    \begin{aligned}
      \pro[\structure\mid\sentence] = \underbrace{\pro\left[\bigwedge_{\substack{i,j\in[n]:\\ i<j}}\pi_{i,j}(e_i, e_j)\mid \sentence\land \bigwedge_{i\in[n]}\tau_i(e_i)\right]}_{\mathfrak{P}_2}\cdot \underbrace{\pro\left[\bigwedge_{i\in[n]}\tau_i(e_i)\mid \sentence\right]}_{\mathfrak{P}_1}.
    \end{aligned}
    \label{eq:decompose_sampling}
\end{equation}
This decomposition naturally gives rise to a two-phase sampling algorithm: 
\begin{enumerate}
  \item sample the 1-types $\tau_i$ of all elements according to the probability $\mathfrak{P}_1$, and
  \item sample the 2-tables $\pi_{i,j}$ of all elements tuples according to $\mathfrak{P}_2$.
\end{enumerate}
The weighted model sampler is shown in Algorithm~\ref{alg:weightedmodelsamplerufo2} with $\code{OneTypesSampler}$ and $\code{TwoTablesSamplerForUFO2}$ being the two sampling algorithms for the two phases, respectively.

\begin{algorithm}[!htb]
  \caption{$\code{WeightedModelSamplerForUFO2}(\sentence, \domain, \weight, \negweight)$} 
  \label{alg:weightedmodelsamplerufo2}
  \textbf{Input}: A \ufotwo{} sentence $\sentence$, a domain $\domain$ of size $n$ and a weighting $(\weight, \negweight)$ \\
  \textbf{Output}: A model $\mu$ of $\sentence$ sampled according to the probability $\pro[\mu\mid\sentence]$
  \begin{algorithmic}[1]
    \State $\{\tau_i(e_i)\}_{i\in[n]} \gets \code{OneTypesSampler}(\sentence, \domain, \weight, \negweight)$
    \State $\{\pi_{i,j}(e_i, e_j)\}_{i,j\in[n]} \gets \mathsf{TwoTablesSamplerForUFO2}( \sentence, \domain, \weight, \negweight, (\tau_i)_{i\in[n]})$
    \State \Return $\{\tau_i(e_i)\}_{i\in[n]} \cup \{\pi_{i,j}(e_i, e_j)\}_{i,j\in[n]}$
  \end{algorithmic}
\end{algorithm}

\begin{example}
    \label{ex:2-colored-graph}
    Consider the sampling of the 2-colored graphs mentioned in the Introduction.
    Let the number of vertices be $4$.
    It corresponds to a WFOMS problem on the sentence~\eqref{eq:2-color-graph} over a domain of size $4$ with the weights being $1$ for all predicates.
    The 1-types that can be sampled are $Red(x)\land \neg Black(x)$ and $\neg Red(x)\land Black(x)$, which correspond to coloring the vertices as red and black, respectively.
    The 2-tables that can be sampled are $E(x,y)\land E(y,x)$ and $\neg E(x,y)\land \neg E(y,x)$, indicating the presence or absence of an edge between two vertices.
    This sampling problem can be decomposed into two steps: coloring of the vertices and sampling of the edges.
\end{example}

% \noindent\textbf{\textit{Sampling 1-types}}
\subsection{Sampling 1-types}
\label{sub:sampling-1-types}

Recall that the number of all possible 1-types over a predicates vocabulary $\mathcal{P}$ is $|U_\mathcal{P}|$.
Any assignment of 1-types to elements can be viewed as a $|U_{\mathcal{P}_\sentence}|$-length partition of the domain, where each disjoint subset precisely contains the elements realizing the corresponding 1-type.
Therefore, sampling 1-types is equivalent to randomly partitioning the domain $\domain$ into subsets of size $|U_{\mathcal{P}_\sentence}|$.
Furthermore, the symmetry property of the weighting function guarantees that any permutation of the elements in the domain does not impact the satisfaction or weight of the models. 
Thus, partitions with the same configuration are equally likely to be sampled.
This allows us to further split the sampling problem of 1-types into two stages: 1) sampling a partition configuration and 2) randomly partitioning the domain according to the sampled configuration.
The latter stage of random partitioning is trivial, and we will demonstrate that the first stage of sampling a partition configuration can be accomplished in time polynomial in the domain size.

% \lnote{I'd like not to complicate this example. The different probabilities of difference configuration are from the different number of graphs that can be generated under the configuration.}
\begin{example}
    Consider 1-types sampling for Example~\ref{ex:2-colored-graph}.
    Let $\weight(Red) = 2$ and all other weights be $1$.
    The sampling of 1-types is to color the vertices as red and black, and realized by a binary partition.
    The probability of a given partition or coloring scheme is proportional to the summed weight of all graphs with the coloring scheme.
    As shown in Table~\ref{tab:1-type-sampling}, the coloring schemes with the same partition configuration have the same weight $W$, and thus have the same sampling probability.
\end{example}

\begin{table}[!tb]
  \begin{threeparttable}
\centering
\caption{1-types sampling for 2-colored graphs}
\label{tab:1-type-sampling}
\begin{tabular}{@{}clll@{}}
\toprule
Configuration & Coloring scheme & $W$ & $|\mathcal{G}|$ \\ \midrule
$(0,4)$       &       \parbox[c]{.38in}{\includegraphics[width=.38in]{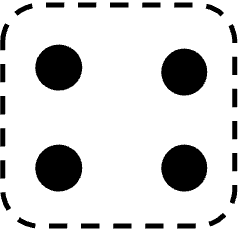}}            & 1 & $2^0 = 1$       \\
$(1,3)$       &   \parbox[c]{1.6in}{\includegraphics[width=1.6in]{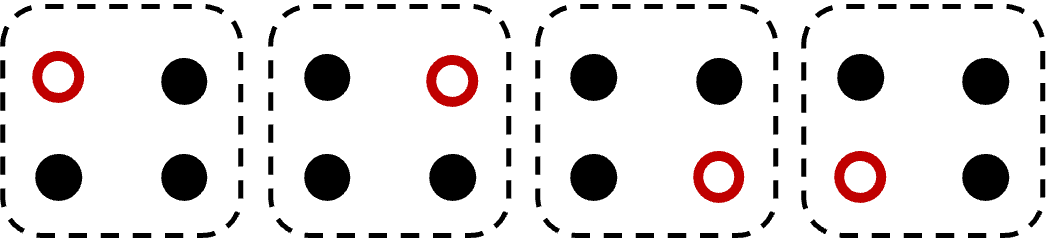}}                 & 2 & $2^3=8$         \\
$(2,2)$       &    \parbox[c]{2.4in}{\includegraphics[width=2.4in]{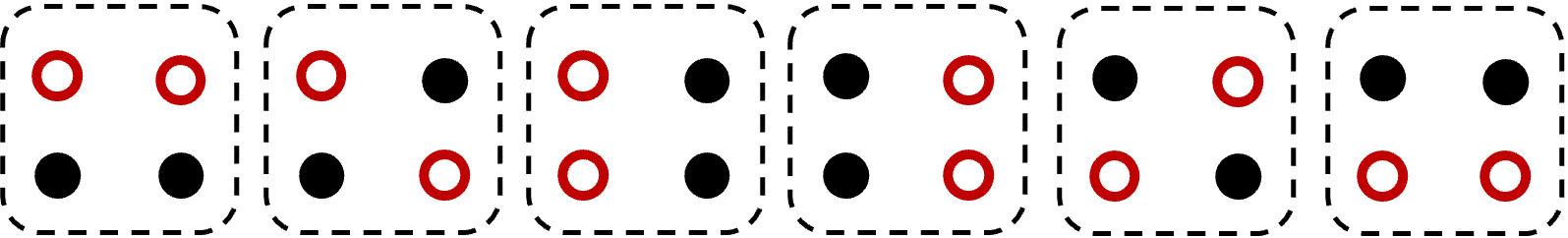}}             & 4   & $2^4=16$        \\
$(3,1)$       &   \parbox[c]{1.6in}{\includegraphics[width=1.6in]{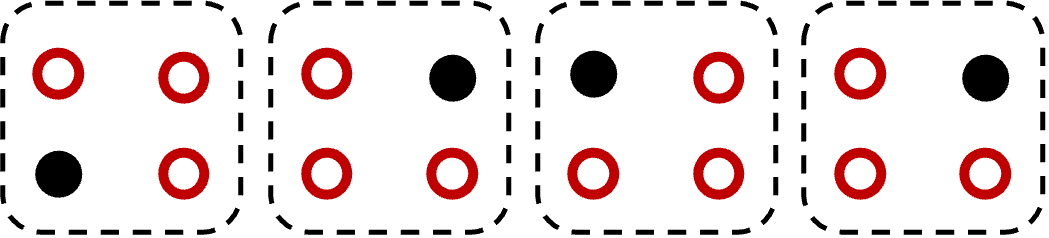}}                 &  8 & $2^3=8$         \\
$(4,0)$       &   \parbox[c]{.38in}{\includegraphics[width=.38in]{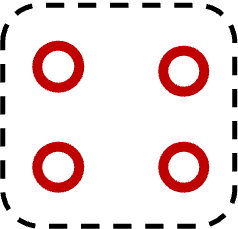}}                 & 16 & $2^0=1$         \\ \bottomrule
\end{tabular}
\begin{tablenotes}
  \tiny
  \item[*] filled: black vertices; empty: red vertices
\end{tablenotes}
\end{threeparttable}
\end{table}

We first observe that the overall number of partition configurations is given by the size of the configuration space $\mathcal{T}_{n, |U_{\mathcal{P}_\sentence}|}$, which is polynomial in the domain size.
This property allows us to use the \emph{enumerative sampling} approach to generate the partition configuration randomly. 
In enumerative sampling, we enumerate all possible values of the random variable, and then pick one of them according to the probability distribution.
To this end, we need to compute the probability of each possible partition configuration.
For any partition configuration $\mathbf{m} = (m_1, m_2, \dots, m_{|U_{\mathcal{P}_\sentence}|})$, there are a total of $\binom{n}{\mathbf{m}}$ partitions.
The symmetry of the weighting function guarantees that all of these partitions have the same sampling probability.
The probability $\mathfrak{P}_1^\mathbf{m}$ can be written in the form of $\mathfrak{P}_1$, shown below, due to the equivalence between partitions and 1-types assignments of domain elements:
\begin{equation}
  \mathfrak{P}_1^\mathbf{m} = \frac{\symwfomc(\sentence\land\bigwedge_{i=1}^n \tau_i(e_i),\domain,\weight, \negweight)}{\symwfomc(\sentence, \domain, \weight, \negweight)},
  \label{eq:1-type-probability}
\end{equation}
where the 1-types $\tau_i$ correspond to a specific partition with the size $\mathbf{m}$.
Thus, the sampling probability of the partition size $\mathbf{m}$ can be calculated as $\mathfrak{P}_1^\mathbf{m}\cdot \binom{n}{\mathbf{m}}$.
Finally, let us demonstrate that the computation of the probability $\mathfrak{P}_1^\mathbf{m}$ is polynomial-time in the domain size.
In \eqref{eq:1-type-probability}, the computation of the denominator \wfomc{} on $\sentence$ is in polynomial-time in the domain size due to the liftability of $\sentence$ for \wfomc{}~\cite{broeckCompletenessFirstOrderKnowledge2011}.
The numerator can be viewed as a \wfomc{} problem of $\sentence$ conditioned on the unary facts in all ground 1-types $\tau_i(e_i)$, whose size is clearly polynomial in the size of the domain.
By Proposition~\ref{pro:liftable_unary_evidence} and the liftability of $\sentence$, it follows that the complexity of computing $\mathfrak{P}_1^\mathbf{m}$ is polynomial in the domain size.
The algorithm for sampling 1-types is presented in Algorithm~\ref{alg:1typesampler}.

\begin{algorithm}[!htb]
  \caption{$\code{OneTypesSampler}(\sentence, \domain, \weight, \negweight)$} 
  \label{alg:1typesampler}
  \textbf{Input}: A \ufotwo{} sentence $\sentence$, a domain $\domain=\{e_i\}_{i\in[n]}$ and a weighting $(\weight, \negweight)$ \\
  % \tnote{make explicit the fragment the sentence/constraint pair must belong to}
  \textbf{Output}: 1-types $\{\tau_i(e_i)\}_{i\in[n]}$
  \begin{algorithmic}[1]
    \For{$\mathbf{m}\in \mathcal{T}_{n, |U_{\mathcal{P}_\sentence}|}$}
      \State $p(\mathbf{m})\gets \mathfrak{P}_1^\mathbf{m}\cdot \binom{n}{\mathbf{m}}$ \Comment{Compute the probability of $\mathbf{m}$}
    \EndFor
    \State Sample $\mathbf{m}^*$ from $\mathcal{T}_{n, |U_{\mathcal{P}_\sentence}|}$ according to the probability distribution $p$
    \State Randomly partition $\domain$ into $|U_{\mathcal{P}_\sentence}|$ subsets of size $m^*_i$ for $i\in[|U_{\mathcal{P}_\sentence}|]$
    \For{$i\in[n]$}
      \State $\tau_i\gets$ the 1-type of the element $e_i$ in the partition
    \EndFor
    \State \Return $\{\tau_i(e_i)\}_{i\in[n]}$
  \end{algorithmic}
\end{algorithm}

% \noindent\textbf{\textit{Sampling 2-tables}}
\subsection{Sampling 2-tables}
\label{sub:sampling-2-tables}

% \lnote{$\mathfrak{P}_2$ is from \eqref{eq:decompose_sampling}.}
For sampling $\pi_{i,j}$ according to the probability
\begin{equation*}
  \mathfrak{P}_2 = \pro\left[\bigwedge_{\substack{i,j\in[n]:\\ i<j}}\pi_{i,j}(e_i, e_j)\mid \sentence\land \bigwedge_{i\in[n]}\tau_i(e_i)\right],
\end{equation*}
we first ground out $\sentence=\forall x\forall y: \fotwoformula(x,y)$ over the domain $\domain$: 
\begin{equation*}
    \bigwedge_{i,j\in[n]: i<j}\fotwoformula(e_i, e_j)\land \fotwoformula(e_j, e_i),
\end{equation*}
where $\fotwoformula(x,y)$ is the quantifier-free formula in $\sentence$.
Let $\fotwoformula_{i,j}(x,y)$ be the simplified formula of $\fotwoformula(x,y)\land \fotwoformula(y,x)$ obtained by replacing the ground 1-literals with their truth value given by the 1-types $\tau_i$ and $\tau_j$.
The probability $\mathfrak{P}_2$ can be written as
\begin{equation}
  \label{eq:decompose_2_tables_sampling}
    \pro[\bigwedge_{i,j\in[n]:i<j}\pi_{i,j}(e_i, e_j)\mid \bigwedge_{i,j\in[n]: i<j}\fotwoformula_{i,j}(e_i, e_j)].
\end{equation}
In this conditional probability, all ground 2-tables $\pi_{i,j}(e_i,e_j)$ are independent in the sense that they do not share any ground literals.
The independence also holds for the ground formulas $\fotwoformula_{i,j}(e_i, e_j)$, because all ground 1-literals were replaced by their truth values.
It follows that \eqref{eq:decompose_2_tables_sampling} can be factorized into
\begin{equation*}
  \prod_{i,j\in[n]: i<j} \pro[\pi_{i,j}(e_i,e_j)\mid\fotwoformula_{i,j}(e_i, e_j)].
\end{equation*}
Hence, sampling the 2-tables $\pi_{i,j}$ can be solved separately.
% Let $\widehat{\weight}$ be a weighting function defined on the ground literals formed by $\mathcal{P}_\sentence$, such that
% \begin{equation}
%   \label{eq:weighting_function}
%   \widehat{\weight}(l) = \begin{cases}
%     \weight(\code{pred}(l)) & \text{if } l \text{ is positive}\\
%     \negweight(\code{pred}(l)) & \text{otherwise}
%   \end{cases}
% \end{equation}
% Each 2-table $\pi_{i,j}$ can be sampled by a WMS algorithm on the ground formula $\fotwoformula_{i,j}(e_i, e_j)$ under the weighting function $\widehat{\weight}$.
The outline of the algorithm is presented in Algorithm~\ref{alg:2tablesampler}.
The overall computational complexity is clearly polynomial in the domain size.

\begin{algorithm}[!htb]
  \caption{$\code{TwoTablesSamplerForUFO2}(\sentence, \domain, \weight, \negweight, (\tau_i)_{i\in[n]})$} 
  \label{alg:2tablesampler}
  \textbf{Input}: A \ufotwo{} sentence $\sentence$, a domain $\domain=\{e_i\}_{i\in[n]}$, a weighting $(\weight, \negweight)$, and 1-types $(\tau_i)_{i\in[n]}$ \\
  \textbf{Output}: 2-tables $\{\pi_{i,j}(e_i, e_j)\}_{i,j\in[n]}$
  \begin{algorithmic}[1]
    \State $\structure \gets \emptyset$
    \State $\Pi\gets$ all 2-tables over $\mathcal{P}_\sentence$ \Comment{Note that the size of $\Pi$ only depends on $\mathcal{P}_\sentence$}
    \For{$i\in[n]$}
      \For{$j\in[n]$}
        \If{$i < j$}
          \State Simplify $\fotwoformula(x,y)\land\fotwoformula(y,x)$ to $\fotwoformula_{i,j}(x,y)$ by replacing the 1-literals with their truth values given by $\tau_i$ and $\tau_j$
          \For{$\pi\in \Pi$}
            \State $p(\pi)\gets \pro[\pi(e_i, e_j)\mid \fotwoformula_{i,j}(e_i, e_j)]$
          \EndFor
          \State Sample $\pi_{i,j}$ from $\Pi$ according to the probability distribution $p$
          \State $\structure\gets \structure\cup\{\pi_{i,j}(e_i, e_j)\}$
        \EndIf
      \EndFor
    \EndFor
    \State \Return $\structure$
  \end{algorithmic}
\end{algorithm}

\begin{example}
    Consider the 2-tables sampling in the WFOMS problem presented in Example~\ref{ex:2-colored-graph}.
    It is clear that the sampling of an edge is fully determined by the colors of its endpoints.
    There are only two cases: if the endpoints share the same color, no edge can exist between them, otherwise the edge is sampled with a probability of $1/2$ (recall that $\weight(E) = \negweight(E) = 1$).
\end{example}

\begin{proof}[Proof of Theorem~\ref{th:ufo_liftable}]
    The procedures presented above for sampling $\tau_i$ and $\pi_{i,j}$ are both polynomial in the domain size, which forms a lifted sampler for $\sentence$, and thus complete the proof.
\end{proof}

\begin{remark}
  Directly extending the approach above to the case of \fotwo{} necessitates another novel and more sophisticated strategy, specifically for the sampling of 2-tables.
  This is due to the fact that it is impossible to decouple the grounding of $\forall x\exists y: \extformula(x,y)$ into a conjunction of independent formulas, even when conditioned on the sampled 1-types.
\end{remark}
% One would wonder whether the rejection sampling could come into play.
% \lnote{Remove this claim unless we can give a simple counterexample in a few words..}
% The answer is negative following from a simple counterexample of sampling from the \fotwo{} sentence $\forall x\exists y: E(x,y)\land red(x)$, which can be expressed to that for any vertex in a colored graph, it is either non-isolated or colored by red.
% The rejection probability, proportional to the fraction of random colored graphs that contains at least one vertex being neither non-isolated nor colored by red, 

\section{\fotwo{} is Sampling Liftable}
\label{sec:fo2_liftable}

We now show the domain-liftability under sampling of the \fotwo{} fragment.
It is common for logical algorithms to operate on normal form representations instead of arbitrary sentences.
The normal form of \fotwo{} used in our sampling algorithm is the \textit{Scott normal form (SNF)}~\cite{scott1962decision}; an \fotwo{} sentence is in SNF, if it is written as:
\begin{equation}
  \label{eq:scott_form}
  \sentence = \forall x\forall y: \fotwoformula(x,y)\land\bigwedge_{k \in[m]} \forall x\exists y: \extformula_k (x,y),
\end{equation}
where the formulas $\fotwoformula(x,y)$ and $\extformula_k(x,y)$ are quantifier-free formulas.
It is well-known that one can convert any \fotwo{} sentence $\sentence$ in polynomial-time into a formula $\sentence_{SNF}$ in SNF such that $\sentence$ and $\sentence_{SNF}$ are equisatisfiable~\cite{gradel1997Bull.Symb.Log.}.
The principal idea is to substitute, starting from the atomic level and working upwards, any subformula $\varphi(x) = Qy: \phi(x,y)$, where $Q\in\{\forall,\exists\}$ and $\phi$ is quantifier-free, with an atomic formula $A_{\varphi}$, where $A_\varphi$ is a fresh predicate symbol.
This novel atom $A_\varphi(x)$ is then separately ``axiomatized'' to be equivalent to $\varphi(x)$.
If the weight of $A_\varphi$ is set to be $\weight(A_\varphi) = \negweight(A_\varphi) = 1$, we have that such reduction is also sound (recall the soundness definition in Definition~\ref{de:soundness}).
\begin{lemma}
  \label{le:snf_sound}
  For any WFOMS problem $\mathfrak{S} = (\sentence, \domain, \weight, \negweight)$ where $\sentence$ is an \fotwo{} sentence, there exists a WFOMS problem $\mathfrak{S}' = (\sentence', \domain, \weight', \negweight')$, where $\sentence'$ is in SNF, such that the reduction from $\mathfrak{S}$ to $\mathfrak{S}'$ is sound.
\end{lemma}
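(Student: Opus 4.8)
The plan is to convert any \fotwo{} sentence $\sentence$ into Scott normal form by the standard structural rewriting procedure, while carefully tracking that this rewriting yields a \emph{sound} reduction in the sense of Definition~\ref{de:soundness}. First I would fix the well-known syntactic transformation: working from the atomic level upwards, repeatedly identify an innermost quantified subformula $\varphi(x) = Qy: \phi(x,y)$ (with $Q\in\{\forall,\exists\}$ and $\phi$ quantifier-free), introduce a fresh unary predicate $A_\varphi$ not occurring anywhere so far, replace every occurrence of $\varphi(x)$ by the atom $A_\varphi(x)$, and conjoin an \emph{axiom} asserting $\forall x: A_\varphi(x) \leftrightarrow \varphi(x)$. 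Each such axiom, once the inner quantifier is pulled out, is itself expressible with the universal-universal and universal-existential patterns required by~\eqref{eq:scott_form}, so after finitely many steps we arrive at a sentence $\sentence'$ of the shape $\forall x\forall y:\fotwoformula(x,y)\land\bigwedge_{k\in[m]}\forall x\exists y:\extformula_k(x,y)$. I would set $\weight'$ and $\negweight'$ to agree with $\weight,\negweight$ on the original predicates $\mathcal{P}_\sentence$ and to assign $\weight'(A_\varphi)=\negweight'(A_\varphi)=1$ for every fresh predicate introduced; the domain $\domain$ is unchanged.

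The key step is then to verify soundness. I would take the mapping function to be the reduct $f(\mu') = \proj{\mu'}{\mathcal{P}_\sentence}$, which is polynomial-time and deterministic as required. The crucial structural fact is that, because each fresh atom $A_\varphi$ is \emph{definitionally} tied to its defining formula via the equivalence axiom, every model $\mu\in\fomodels{\sentence}{\domain}$ has exactly one preimage under $f$: given the interpretation of the original predicates, the truth value of each $A_\varphi(e)$ is forced, and these forced values can be read off bottom-up. Hence $f$ restricts to a bijection between $\fomodels{\sentence'}{\domain}$ and $\fomodels{\sentence}{\domain}$. Since all fresh predicates carry unit weights, we have $\typeweight{\mu'} = \typeweight{\mu}$ whenever $f(\mu')=\mu$, because the extra literals contribute a factor of $1$. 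This immediately gives $\symwfomc(\sentence',\domain,\weight',\negweight')=\symwfomc(\sentence,\domain,\weight,\negweight)$, and, applying the same argument to the conjunction with any model-selecting formula, the per-model identity
\begin{equation*}
  \pro[\mu\mid\sentence;\domain,\weight,\negweight] = \sum_{\substack{\mu'\in\fomodels{\sentence'}{\domain}:\\ f(\mu')=\mu}}\pro[\mu'\mid\sentence';\domain,\weight',\negweight']
\end{equation*}
holds, where the sum degenerates to its single surviving term.

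I anticipate the main obstacle to be bookkeeping rather than conceptual difficulty: one must argue precisely that the bottom-up substitution terminates in polynomially many steps (so that $\sentence'$ and the transformation are genuinely polynomial-time computable), that every introduced axiom really does fit the SNF template after pulling the single inner quantifier outside, and—most delicately—that the preimage of each original model is exactly one $\sentence'$-model. This last point hinges on the definitional equivalences leaving no freedom in the fresh predicates; I would make it rigorous by induction on the nesting depth of the rewritten subformulas, so that the forced truth value of an outer $A_\varphi(e)$ depends only on already-determined inner atoms. Once the bijection and the weight-preservation are established, the soundness conditions of Definition~\ref{de:soundness} follow directly, completing the proof of Lemma~\ref{le:snf_sound}.
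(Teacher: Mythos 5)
Your proposal is correct and follows essentially the same route as the paper: bottom-up replacement of each quantified subformula $\varphi(x)=Qy:\phi(x,y)$ by a fresh unit-weight atom $A_\varphi(x)$ together with the defining equivalence (split into a $\forall xQy$ and a $\forall xQ'y$ conjunct to fit the SNF template), with the reduct $\proj{\cdot}{\mathcal{P}_\sentence}$ as the bijective, weight-preserving mapping function. The paper's appendix argument is exactly this, so no further comparison is needed.
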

% \lnote{We require the independence of the reduction to show the domain-liftability under sampling.}
The proof is straightforward, as every novel predicate (e.g., $P_\varphi$) introduced in the SNF transformation is axiomatized to be equivalent to the subformula ($\varphi(x)$), 
and thus fully determined by the subformula in every model of the resulting SNF sentence (see the details in \ref{sub:snf}).

\begin{theorem}
    The fragment \fotwo{} is domain-liftable under sampling.
    \label{th:fotwoliftable}
\end{theorem}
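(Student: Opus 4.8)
The plan is to reuse the two-phase template of Section~\ref{sec:ufo2_liftable} (sample 1-types, then sample 2-tables), replacing only the second phase with a substantially more involved sampler that can cope with the existential conjuncts. First I would invoke Lemma~\ref{le:snf_sound} to assume without loss of generality that the input sentence is in Scott normal form \eqref{eq:scott_form}, so that $\sentence = \forall x\forall y: \fotwoformula(x,y)\land\bigwedge_{k\in[m]}\forall x\exists y:\extformula_k(x,y)$. The 1-type phase then carries over verbatim: the probability $\mathfrak{P}_1^{\mathbf{m}}$ of a cell configuration $\mathbf{m}\in\mathcal{T}_{n,|U_{\mathcal{P}_\sentence}|}$ is still a ratio of \wfomc{} values with unary evidence, hence computable in polynomial time by Proposition~\ref{pro:liftable_unary_evidence} and the liftability of \fotwo{} for \wfomc{}. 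All the difficulty is therefore concentrated in sampling the 2-tables $\pi_{i,j}$ conditioned on the sampled 1-types.

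As the remark following Theorem~\ref{th:ufo_liftable} warns, the \ufotwo{} factorization fails here: whether element $e_i$ witnesses $\extformula_k$ depends \emph{jointly} on all of its 2-tables, so the ground constraints $\forall x\exists y:\extformula_k$ do not split into independent per-pair factors. My plan is to sample the 2-tables by a domain-recursion-style procedure that processes one element at a time while tracking, for every remaining element, the set of existential conjuncts for which it has not yet found a witness. The key observation is that two remaining elements sharing the same 1-type \emph{and} the same set of still-unsatisfied obligations are exchangeable; I would formalize this via relaxed 1-types that annotate a cell with its outstanding obligations. Consequently the partially built structure is fully described by a configuration recording, for each (1-type, obligation-set) pair, how many elements are in that state. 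Since there are only $|U_{\mathcal{P}_\sentence}|\cdot 2^m$ such pairs — a constant independent of $n$ — the associated configuration space has size polynomial in $n$.

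With this representation I would, at the step that removes $e_i$, sample the vector of counts describing how many remaining elements of each exchangeable class $e_i$ connects to via each 2-table type (subject to $e_i$'s own obligations all becoming satisfied), rather than sampling concrete per-pair 2-tables. The probability of each such connection configuration is a ratio whose numerator is the summed weight $\sumweight$ of all completions of the structure on the remaining elements under their \emph{updated} obligations. The essential point is that this quantity is computable in polynomial time because I never condition on specific binary facts — which would make \wfomc{} intractable — but only on symmetric configuration information; each $\sumweight$ is therefore a counting quantity over a relaxed sentence with reduced existential obligations, and I expect it to satisfy a polynomial recurrence in the domain size that can be tabulated once and reused. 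Composing the polynomial-time 1-type phase with this polynomial-time 2-table phase yields a lifted sampler and proves the theorem.

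The hard part will be establishing that these obligation-tracking counts $\sumweight$ remain liftable. One must verify that conditioning on a configuration (rather than a concrete set of 2-tables) genuinely preserves exchangeability, and that the relaxed counting problem admits a polynomial recurrence in $n$. This is the sampling analogue of the inclusion–exclusion / Skolemization machinery underpinning \wfomc{} on \fotwo{}, but it must be executed entirely with non-negative quantities: negative weights, which the standard counting reduction introduces to enforce the existentials, would render the sampling probabilities of Definition~\ref{def:swfos} ill-defined, so the obligation relaxation must encode the witness requirement directly rather than by cancellation.
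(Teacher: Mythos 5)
Your proposal follows essentially the same route as the paper: reduce to Scott normal form via Lemma~\ref{le:snf_sound}, reuse the \ufotwo{} 1-type phase, and sample 2-tables by domain recursion, tracking each remaining element's unsatisfied existential obligations as a relaxed cell type and sampling only the exchangeable connection configuration, whose probability reduces to \wfomc{} of the relaxed sentence. The only cosmetic difference is that where you posit a ``polynomial recurrence'' for the obligation-tracking counts, the paper formalizes the obligations as unary Tseitin atoms $Z_k(x)\Leftrightarrow\exists y\,R_k(x,y)$ so that each required count is a conditional \wfomc{} with unary evidence, handled directly by Proposition~\ref{pro:liftable_unary_evidence}.
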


We demonstrate the sampling liftability of \fotwo{} through the development of a lifted sampler that bears resemblance to the framework presented in Section~\ref{sec:ufo2_liftable}.
Specifically, the approach involves a two-stage algorithm derived from the probability decomposition of 1-types and 2-tables in~\eqref{eq:decompose_sampling}, which comprises the sampling of 1-types $\tau_i$ in the first stage, followed by the sampling of 2-tables $\pi_{i,j}$ in the second stage. 
In the first stage, the same technique used for \ufotwo{} as discussed in Section~\ref{sec:ufo2_liftable} can be used. 
The time complexity of this process remains polynomial in the size of the domain following the same reasoning and the domain-liftability of \fotwo{}~\cite{vandenbroeck2014Proc.FourteenthInt.Conf.Princ.Knowl.Represent.Reason.}.
The second stage, which is to sample 2-tables conditional on the sampled 1-types, however, is the most challenging aspect of the sampling problem and will be the main focus of the remainder of this section.

\subsection{A Working Example}
\label{sub:an_intuitive_example}

Before delving into the details of the algorithm, we provide a working example to illustrate the basic idea for sampling 2-tables.
The example is to sample an undirected graph of size $n$ without any isolated vertex uniformly at random.
Its corresponding sentence can be written in SNF:
\begin{equation*}
  \sentence_G := \left(\forall x\forall y: (E(x,y) \Rightarrow E(y,x))\land \neg E(x,x)\right)\land \left(\forall x\exists y: E(x,y)\right),
\end{equation*}
and the sampling problem corresponds to a WFOMS problem on $\sentence_G$ under $\weight(E) = \negweight(E) = 1$ over a domain of vertices $V = \{v_i\}_{i\in[n]}$.
% We fix the natural order of vertices, and say $v_j$ a predecessor of $v_i$ for if $j<i$.
In this sentence, the only 1-type that can be sampled is $\neg E(x,x)$, which does not require any sampling.
The 2-tables that can be sampled are $\pi^1(x,y) = E(x,y)\land E(y,x)$ and $\pi^2(x,y) = \neg E(x,y)\land \neg E(y,x)$ representing the connectedness of two vertices.
In the following, we will focus on the sampling problem of 2-tables (i.e., edges).

We first apply the following transformation
on $\sentence_G$ resulting in $\sentence_{GT}$:
\begin{enumerate}
  \item introduce an auxiliary \textit{Tseitin predicate} $Z/1$ with the weight $\weight(Z)=\negweight(Z) = 1$ that indicates the non-isolation of vertices,
  \item append $\forall x: Z(x)\Leftrightarrow \exists y: E(x,y)$ to $\sentence_G$, and
  \item remove $\forall x\exists y: E(x,y)$.
\end{enumerate}
We then consider a slightly more general WFOMS problem on $\generalsentence_G := \sentence_{GT}\land\bigwedge_{v\in V_\exists}Z(v)$ over a domain $V_\forall$, where $V_\exists\subseteq V_\forall\subseteq V$ and $V_\exists$ represents the set of vertices that should be non-isolated in the graph induced by $V_\forall$.
The original WFOMS problem of $\sentence_G$ can be clearly reduced to the more general problem by setting $V_\exists=V_\forall=V$,
and the reduction is sound with the mapping function $f(\mu') = \proj{\mu'}{\{E\}}$.
Given a $\mathcal{P}_{\generalsentence_G}$-structure $\structure$, the interpretation of the predicate $E$ fully determines the interpretation of $Z$.
Therefore, in the subsequent discussion, any $\mathcal{P}_{\generalsentence_G}$-structure should be understood as a $\mathcal{P}_{\{E\}}$-structure, where $Z$ is omitted.

% Let $\generalsentence_G$ represent the sentence $\sentence_{GT}\land\bigwedge_{v\in V_\exists} Z(v)$. 
% We will now consider the more comprehensive WFOMS problem of $\generalsentence$. 
% As $\{E\}$ serves as the skeleton of the problem, it is equivalent to sampling an $\{E\}$-structure $\structure$ over $V_\forall$ according to the probability $\pro[\structure\mid \generalsentence_G]$.

% The primary approach utilized for sampling from $\generalsentence_G$ is referred to as \textit{domain recursion}. 
% This method involves the gradual sampling of the edges related to a single vertex at each step. 
% A crucial aspect of this method is that once all the edges associated with a given vertex have been sampled, the task of sampling the remaining edges can be reduced into a new generalized WFOMS. 
% In the reduced WFOMS, the original vertex is eliminated, and the non-isolation constraint $Z(v)$ on each remaining vertex is updated based on the edges that have already been sampled.
% Through the domain recursion scheme, the sampling problem of $\generalsentence_G$ can be solved recursively.
% Next, we will provide a formal description of the domain recursion scheme.

Given an $\mathcal{P}_{\generalsentence_G}$-structure $\structure$, we denote the substructure of $\structure$ concerning a vertex $v_i\in V_\forall$ by $\structure_i$:
\begin{equation*}
  \structure_i := \bigcup_{v_j\in V_\forall: j\neq i} \pi_{i,j}(v_i, v_j),
\end{equation*}
where $\pi_{i,j}$ is the 2-table of $(v_i, v_j)$.
We then proceed to choose a vertex $v_t$ from $V_\forall$, and decompose the sampling probability of $\structure$ as follows:
\begin{equation}
  \pro[\structure\mid \generalsentence_G] 
  =\pro\left[\structure\mid\generalsentence_G\land\structure_t\right]\cdot\pro[\structure_t\mid\generalsentence_G].
  \label{eq:decomposition_nonisolated_graph}
\end{equation}
The decomposition leads to two successive sub-problems of the general WFOMS problem: the first one is to sample a substructure $\structure_t$ from $\generalsentence_G$; and the second can be viewed as a new WFOMS problem on $\generalsentence_G$ conditioned on the sampled $\structure_t$.

\begin{figure}[!bt]
  \centerline{\includegraphics[width=.67\textwidth]{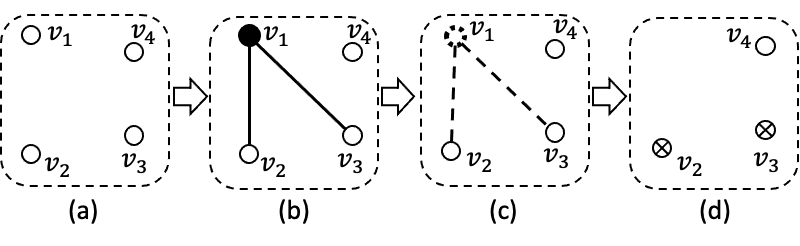}}
  \caption{A sampling step for an undirected graph with no isolated vertices: (a) begins with an initial graph that has no edges, and in the more general sampling problem, $V_\forall = V_\exists = \{v_1,v_2,v_3,v_4\}$; (b) sample edges for the vertex $v_1$; (c) remove the vertex $v_1$ with its sampled edges; (d) and obtain a graph with some vertices already non-isolated ($v_2$ and $v_3$), resulting in a new sampling problem with $V_\forall' = \{v_2,v_3,v_4\}$ and $V_\exists' = \{v_4\}$.}
  \label{fig:non-isolated_graph}
\end{figure}

We first show that if $\structure_t$ is valid w.r.t. $(\generalsentence_G, V_\forall)$, the new WFOMS problem can be reduced to the general WFOMS problem but with the smaller domain
$V_\forall' = V_\forall \setminus \{v_t\}$ and
\begin{equation*}
  V_\exists' = V_\exists \setminus \left(\{v_t\}\cup \{v_i\mid v_i\in V_\exists: \pi_{t,i} = \pi^1 \}\right).
  % \label{eq:v_exists}
\end{equation*}
The set $V_\exists'$ removes the vertices that become non-isolated after the sampling of $\structure_t$ from $V_\exists$.
It is easy to check that the reduction is sound, because every model of $\sentence_{GT}\land\bigwedge_{v\in V_\exists'}Z(v)$ over $V_\forall'$ can be mapped to a unique model of $\generalsentence_G\land\structure_t$ over $V_\forall$, and vice versa, without affecting the satisfaction and weight of the models.

By the reduction above, the decomposition of \eqref{eq:decomposition_nonisolated_graph} can be performed recursively on the WFOMS problem on $\generalsentence_G$ over $V_\forall$.
% Then, for any WFOMS on $\generalsentence_G$ over $V_\forall$, the sampling algorithm can be performed recursively.
Specifically, the recursive algorithm takes $V_\forall$ and $V_\exists$ as input, 
\begin{enumerate}
  \item selects a vertex $v_t$ from $V_\forall$, 
  \item samples its substructure $\structure_t$ according to the probability $\pro[\structure_t\mid\generalsentence_G]$, and
  \item obtains a new problem with updated $V_\forall'$ and $V_\exists'$ for recursion.
\end{enumerate}
The recursion terminates when all substructures $\structure_i$ are sampled (i.e., $V_\forall$ contains a single vertex), or the problem degenerates to a WFOMS problem on \ufotwo{} sentence (i.e., $V_\exists$ is empty).
The number of recursions is less than $|V|$, the total number of vertices.
% \footnote{The algorithm can also terminate when $V_\exists$ is empty and the sampling problem degenerates to the trivial uniform generation of undirected graphs.}
An example of a single recursive step is shown in Figure~\ref{fig:non-isolated_graph}: the vertex $v_1$ from $V_\forall = V_\exists = \{v_1, v_2, v_3, v_4\}$ is selected with the edges incident to it sampled; the vertex $v_1$ is then removed from $V_\forall$, and the vertices $v_2$ and $v_3$ that become non-isolated are also removed from $V_\exists$.

The remaining problem is to sample the substructure $\structure_t$ according to $\pro[\structure_t\mid\generalsentence_G]$.
Recall that $\structure_t$ determines the edges between $v_t$ and vertices in $V_\forall'$.
Let $V_1 = V_\forall' \setminus V_\exists$ and $V_2 = V_\forall' \setminus V_1$.
We can effectively generate a sample of $\structure_t$ by sampling two binary partitions of $V_1$ and $V_2$, respectively, yielding the sets ${V_{11}, V_{12}}$ and ${V_{21}, V_{22}}$; the vertices in $V_{11}$ and $V_{21}$ will be connected to $v_t$, while the vertices in $V_{12}$ and $V_{22}$ will be disconnected to $v_t$.
It can be demonstrated that the sampling probability of a substructure $\structure_t$ only depends on the two partition configurations $(|V_{11}|, |V_{12}|)$ and $(|V_{21}|, |V_{22}|)$.
The proof of this claim can be found in Section~\ref{subsub:samplingsubstru}, where the more general case of \fotwo{} sampling is addressed.
As a result, the sampling of $\structure_t$ can be achieved by first sampling the two partition configurations, followed by two random partitions on $V_1$ and $V_2$ with the respective sampled configurations.
To sample a tuple of partition configurations, we use the enumerative sampling method.
The number of all possible tuples of partition configurations is clearly polynomial in $|V_\forall'|$, and it will be shown in Section~\ref{subsub:samplingsubstru} that the sampling probability of each configurations tuple can be computed in time polynomial in $|V_\forall'|$. 
Therefore, the complexity of the sampling algorithm is polynomial in the number of vertices.
This, together with the complexity of the recursion procedure, which is also polynomial in the number of vertices, implies that the whole sampling algorithm is lifted.

\subsection{Domain Recursive Sampling for 2-tables}
\label{sub:domain_recursive_sampling_for_2_tables}

\interfootnotelinepenalty=10000

We now present our algorithm for sampling 2-tables, which uses the technique illustrated in the previous subsection.
The core idea, called \textit{domain recursion}, involves considering individual objects from the domain at a time, sampling their corresponding atomic facts, and subsequently obtaining a new sampling problem that has a similar form to the original one but with a smaller domain and potentially fewer existentially-quantified formulas. 
This process is repeated recursively on the reduced sampling problems until the domain has become a singleton or all the existentially quantified formulas have been eliminated.

Let us consider the WFOMS problem with fixed 1-types 
\begin{equation*}
  (\sentence\land\bigwedge_{i\in[n]} \tau_i(e_i),\domain, \weight, \negweight)
\end{equation*} 
where $\sentence$ is a sentence in SNF~\eqref{eq:scott_form}, $\domain=\{e_i\}_{i\in[n]}$ is a domain of size $n$, and each $\tau_i$ is the sampled 1-type of the element $e_i$.
W.l.o.g.\footnote{
Any SNF sentence can be transformed into such form by introducing an auxiliary predicate $R_k$ with weights $\weight(R_k)=\negweight(R_k)=1$ for each $\extformula_k(x,y)$, appending $\forall x\forall y: R_k(x,y) \Leftrightarrow \extformula_k(x,y)$ to the sentence, and replacing $\extformula_k(x,y)$ with $R_k(x,y)$. The transformation is obviously sound when viewing it as a reduction in WFOMS.}
, we suppose that each formula $\extformula_k(x,y)$ in the SNF sentence~\eqref{eq:scott_form} is an atomic formula $R_k(x,y)$, where each $R_k$ is a binary predicate with weights $\weight(R_k) = \negweight(R_k) = 1$.

\subsubsection{A More General Sampling Problem}
\label{sub:generalproblem}

We first construct the following sentence from $\sentence$:
\begin{equation}
  \sentence_T := \forall x\forall y: \fotwoformula(x,y) \land \bigwedge_{k\in [m]} \forall x: Z_k(x) \Leftrightarrow (\exists y: R_k(x,y)),
  \label{eq:tseitin_reduction}
\end{equation}
where each $Z_k/1$ is a Tseitin predicate with the weight $\weight(Z_k) = \negweight(Z_k) = 1$.
Note that in~\eqref{eq:tseitin_reduction}, the interpretation of $R_k$ fully determines the interpretation of $Z_k$.
Once the 2-tables are sampled, the interpretation of $Z_k$ is also fixed, adding no additional cost to the sampling problem.
Therefore, for ease of presentation, we will omit the handling of $Z_k$ in the following discussion.
% \lnote{I feel like we can remove the notion of skeleton and explain its effect here.}
% Note that if we append the conjunction over all Tseitin atoms $\bigwedge_{i\in [n]} \bigwedge_{j\in [m]} Z_j(e_i)$ to $\sentence_T$ and view it as a reduction on the WFOMS of $\sentence$, the reduction is sound.
% And if we see $\sentence_T$ as a reduction on the WFOMS of the \ufotwo{} sentence $\forall x\forall y:\fotwoformula(x,y)$, the reduction is also sound.
% Thus the evidences $Z_i(e_i)$'s characterize the sampling problem, bridging the WFOMS between \fotwo{} and \ufotwo{} fragments.

We then consider a more general WFOMS problem of the following sentence
% \ledit{
\begin{equation}
    \sentence_T \land \bigwedge_{i\in [n]} \left(\tau_i(e_i)\land \mathcal{C}_i\right)
  \label{eq:general_sentence}
\end{equation}
over a domain of $\{e_i\}_{i\in[n]}$,
where each $\mathcal{C}_i$ is a conjunction over a subset of the ground atoms $\{Z_k(e_i)\}_{k\in[m]}$.
% \lnote{I used $\beta_i(x)$ here in the same spirit of 1-types and 2-tables.. I called it block type in the next subsection.. I can change the index $i$ for existential quantified formulas to some specific symbol.}
We call $\mathcal{C}_i$ the \textit{existential constraint} on the element $e_i$ and allow $\mathcal{C}_i = \top$, which means $e_i$ is not existentially quantified.
% The more general WFOMS can be regarded as a WFOMS problem with unary evidence, where the existential constraint and ground 1-types serve as unary facts that condition the problem.

The more general sampling problem has the necessary structure for the domain recursion algorithm to be performed.
To verify it, the original WFOMS problem on $\sentence\land_{i\in[n]} \tau_i(e_i)$ can be reduced to the more general problem by setting all existential constraints to be $\bigwedge_{k\in [m]} Z_k(x)$.
On the other hand, the WFOMS problem on the \ufotwo{} sentence $\forall x\forall y: \fotwoformula(x,y)\land \bigwedge_{i\in[n]} \tau_i(e_i)$ is also reducible to the problem with $\mathcal{C}_i = \top$ for all $i\in[n]$. 
It is easy to check that these two reductions are both sound.

\subsubsection{Block and Cell Types}
\label{subsub:block_cell_types}

It is worth noting that the Tseitin predicates $Z_k$ introduced in the more general sampling problem are not contained in the given 1-types $\tau_i$.
Therefore, to incorporate these predicates into the sampling problem, we introduce the notions of block and cell types.

Consider a set of Tseitin predicates $\{Z_k\}_{k\in[m]}$.
A \textit{block type} $\beta$ is a subset of the atoms $\{Z_k(x)\}_{k\in[m]}$.
The number of the block types is $2^m$, where $m$ is the number of existentially-quantified formulas.
We often represent a block type as $\beta(x)$ and view it as a conjunctive formula over the atoms within the block.
% \ledit{
It is important to note that the block types only indicate which Tseitin atoms should hold for a given element, while the Tseitin atoms not covered by the block types are left unspecified.
In contrast, the 1-types explicitly determine the truth values of all unary and reflexive atoms, excluding the Tseitin atoms.

The block type and 1-type work together to define the cell type.
A \textit{cell type} $\eta = (\beta, \tau)$ is a pair of a block type $\beta$ and a 1-type $\tau$.
We also write a cell type as a conjunctive formula of $\eta(x) = \beta(x) \land \tau(x)$.
The cell type of an element is simply the tuple of its block type (given by the sentence $\generalsentence$) and 1-type (already sampled in the first stage of the algorithm).
The cell types of elements naturally produce a partition of the domain, similar to how a 1-types assignment divides a domain into disjoint subsets.
Each disjoint subset of elements is called a \textit{cell}.
The \textit{cell configuration} is defined as the configuration of the corresponding partition.

With the notion of block and cell type, we can write the existential constraint $\mathcal{C}_i$ in \eqref{eq:general_sentence} as $\beta_i(e_i)$, where $\beta_i$ is the block type of the element $e_i$, and the sentence \eqref{eq:general_sentence} as $\sentence_T\land \bigwedge_{i\in[n]} \eta_i(e_i)$, where $\eta_i = (\beta_i, \tau_i)$.

\subsubsection{Domain Recursion}

We now show how to apply the domain recursion scheme to the WFOMS problem $(\sentence_T\land\bigwedge_{i\in[n]}\eta_i(e_i),\domain, \weight, \negweight)$.
% It can be easily verified that $\mathcal{P}_\sentence$ is a skeleton of the WFOMS problem, and hence this WFOMS problem is equivalent to sampling a valid $\mathcal{P}_\sentence$-structure $\structure$ according to the probability $\pro[\structure\mid\sentence_T\land\bigwedge_{i\in[n]}\eta_i(e_i)]$.
Let $\structure_i$ be the set of ground 2-tables concerning the element $e_i$:
\begin{equation}
  \label{eq:substructure}
  \structure_i := \bigcup_{j\in[n]: j\neq i} \pi_{i,j}(e_i, e_j).
\end{equation}
By the domain recursion, we select an arbitrary element $e_t$ from $\domain$ and decompose the sampling probability of a structure $\structure$ into
\begin{align*}
  \pro\left[\structure \mid \sentence_T\land\bigwedge_{i\in[n]}\eta_i(e_i)\land\structure_t \right]\cdot \pro\left[\structure_t\mid\sentence_T\land\bigwedge_{i\in[n]}\eta_i(e_i)\right].
\end{align*}
We always assume that the substrcture $\structure_t$ is valid w.r.t. $(\sentence_T\land\bigwedge_{i\in[n]}\eta_i(e_i), \domain)$, i.e., $\sentence_T\land \bigwedge_{i\in[n]}\eta_i(e_i)\land\structure_t$ is satisfiable, or the sampling probability of $\structure_t$ is zero.
We will demonstrate that the WFOMS problem specified by the first probability in the above equation can be reduced to a new WFOMS problem of the same form as the original sampling problem, but over a smaller domain $\domain' = \domain \setminus {e_t}$.

We first introduce the notion of the \textit{relaxing} on block types and cell types, which is the basic operation of the reduction.
Given a 2-table $\pi$ and a block type $\beta$, the relaxed block of $\beta$ under $\pi$ is defined as
\begin{equation*}
  \relaxcelltype{\beta}{\pi} := \beta \setminus \{Z_k(x)\mid k\in[m]: R_k(y,x)\in\pi\}.
  % \label{eq:relax_block_type}
\end{equation*}
Similarly, we can apply the relaxation under $\pi$ on a cell type $\eta = (\beta, \tau)$, resulting in a relaxed cell type $\relaxcelltype{\eta}{\pi} = (\relaxcelltype{\beta}{\pi}, \tau)$.

% \begin{example}
%   Consider the WFOMS in Example~\ref{ex:student-teacher}.
%   Let us select $Alice$ for domain recursion.
%   Suppose the sampled 2-table of $(Alice, Bob)$ is 
%   \begin{equation*}
%     \pi = \neg teach(x,y)\land teach(y,x)\land R_1(x,y) \land R_1(y,x) \land R_2(x,y)\land R_2(y,x).
%   \end{equation*}
%   Then the relaxed block of $Bob$ is $\relaxcelltype{\beta_2}{\pi} = \emptyset$.
% \end{example}
\begin{example}
  Consider a WFOMS problem on the sentence 
  \begin{align*}
    \forall x&: (Z_1(x) \Leftrightarrow (\exists y: R_1(x,y))\land \\
    \forall x&: (Z_2(x) \Leftrightarrow (\exists y: R_2(x,y)) \\
    \land &\dots \land Z_1(a)\land Z_2(a)\land \dots,
  \end{align*}
  where only the block type $\beta=\{Z_1(x), Z_2(x)\}$ of the element $a$ is shown.
  Suppose the sampled 2-table of $(e_t, a)$ is $$\pi=\{\neg R_1(x,y), R_1(y,x), R_2(x,y), \neg R_2(y,x)\}.$$
  Then the relaxed block of the element $a$ is $\relaxcelltype{\beta}{\pi} = \{Z_2(x)\}$, as the corresponding quantified formula $\exists y: R_1(a,y)$ of $Z_1(a)$ is satisfied by the fact $R_1(a, e_t)$ in $\pi(e_t, a)$.
\end{example}
% We call $\relaxcelltype{\beta}{\pi}$ the \textit{relaxed} block type of $\beta$ under $\pi$, as it removes a part of the existential constraint on an element $a$ that is satisfied by the relations in $\pi(b, a)$, where $b$ is another element selected for recursion.

Let 
\begin{equation}
  \recursivesentence := \sentence_T\land\bigwedge_{i\in[n]}\eta_i(e_i)\land \structure_t,
  \label{eq:recursive_sentence}
\end{equation}
and
\begin{equation}
  \recursivesentence' := \sentence_T\land \bigwedge_{i\in[n]\setminus \{t\}}\relaxcelltypeb{\eta_i}{\pi_{t,i}}(e_i).
  \label{eq:reduced_sentence}
\end{equation}We have the following lemma.
\begin{lemma}
  \label{lemma:modular}
  % If $\structure_t$ is valid w.r.t. the WFOMS of $(\sentence_T\land\bigwedge_{i\in[n]}\eta_i(e_i),\domain,\weight, \negweight)$, i.e., $\recursivesentence$ is satisfiable, 
  If $\recursivesentence$ is satisfiable, the reduction from the WFOMS problem $(\recursivesentence, \domain, \linebreak, \weight, \negweight)$ to $(\recursivesentence', \domain', \weight, \negweight)$ is sound.
\end{lemma}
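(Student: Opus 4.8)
The plan is to prove soundness by exhibiting an explicit bijection between $\fomodels{\recursivesentence'}{\domain'}$ and $\fomodels{\recursivesentence}{\domain}$ that simply reinstates the element $e_t$ together with its fixed facts. Since every model of $\recursivesentence$ contains the same 1-type $\tau_t(e_t)$ and the same substructure $\structure_t$, the weight of these appended facts is a single model-independent constant $W_t := \typeweight{\tau_t(e_t)\land\structure_t}$. Once the bijection is established, the definition of $\symwfomc$ gives $\symwfomc(\recursivesentence,\domain,\weight,\negweight)=W_t\cdot\symwfomc(\recursivesentence',\domain',\weight,\negweight)$, so that the factor $W_t$ cancels in each conditional probability and the identity required by Definition~\ref{de:soundness} reduces to matching single terms. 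Following the convention of the preceding text, I treat all structures as $\mathcal{P}_\sentence$-structures with the Tseitin interpretations $Z_k$ omitted, since they are determined by the $R_k$-facts; this frees both the mapping and the verification from any bookkeeping for the $Z_k$.

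Concretely, I would set $f(\mu'):=\mu'\cup\{\tau_t(e_t)\}\cup\structure_t$, extending a model over $\domain'$ to a structure over $\domain$ by adding back $e_t$ with its fixed 1-type and its fixed 2-tables $\pi_{t,j}$, and take as candidate inverse the map $g$ that deletes $e_t$ and every fact mentioning it. The two things to check are that $f$ lands in $\fomodels{\recursivesentence}{\domain}$ and that $g$ lands in $\fomodels{\recursivesentence'}{\domain'}$. In either direction the universal conjunct $\forall x\forall y:\fotwoformula(x,y)$ is immediate: on pairs inside $\domain'$ it is inherited from the given model, and on pairs involving $e_t$ its truth depends only on the fixed $\tau_t,\tau_j,\pi_{t,j}$ and holds because $\recursivesentence$ is satisfiable. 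The conjunct $\eta_t(e_t)$ likewise holds in $f(\mu')$ since its truth is fixed by $\tau_t$ and $\structure_t$ and is witnessed by the satisfiability hypothesis. The crux is the block-type (existential) conjuncts for $i\neq t$: given $Z_k(x)\in\beta_i$, if $R_k(y,x)\in\pi_{t,i}$ then $R_k(e_i,e_t)$ holds, so $e_t$ witnesses the requirement in $f(\mu')$, and this is exactly the obligation that the relaxation $\relaxcelltype{\beta_i}{\pi_{t,i}}$ discards; otherwise $Z_k(x)\in\relaxcelltype{\beta_i}{\pi_{t,i}}$ and the already-satisfied conjunct $\relaxcelltypeb{\eta_i}{\pi_{t,i}}(e_i)$ of $\mu'$ supplies a witness inside $\domain'$, which remains a witness in $f(\mu')$. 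The reverse inclusion for $g$ is symmetric: for a surviving requirement $Z_k(x)\in\relaxcelltype{\beta_i}{\pi_{t,i}}$ the fact $R_k(e_i,e_t)$ is false in $\mu$ (as $R_k(y,x)\notin\pi_{t,i}$), so the witness of $Z_k(e_i)$ must lie in $\domain'$ and survives the deletion. Since $f$ alters only facts that mention $e_t$ and $g$ only removes them, $f$ and $g$ are mutually inverse, giving the desired bijection.

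Finally, because $f$ appends the same facts to every $\mu'$, we have $\typeweight{f(\mu')}=W_t\cdot\typeweight{\mu'}$; dividing by the respective $\symwfomc$ values, which differ by the same factor $W_t$, gives $\pro[f(\mu')\mid\recursivesentence]=\pro[\mu'\mid\recursivesentence']$, and the bijectivity of $f$ collapses the sum over the preimage of any $\mu$ to this single identity, establishing soundness (the map $f$ being clearly polynomial-time deterministic). I expect the main obstacle to be the existential-conjunct verification above, namely the argument that $\relaxcelltype{\cdot}{\pi_{t,i}}$ strips exactly the obligations discharged by $e_t$ while retaining exactly those that must be witnessed within $\domain'$; the checks on $\fotwoformula$ and the mutual-inverse property are routine once the $Z_k$ are treated as determined and omitted.
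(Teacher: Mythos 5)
Your proposal is correct and follows essentially the same route as the paper's proof: the same extension map $f(\mu')=\mu'\cup\structure_t\cup\tau_t(e_t)$, the same case split on whether $Z_k(x)$ survives the relaxation $\relaxcelltype{\beta_i}{\pi_{t,i}}$ (witness $e_t$ versus a witness inside $\domain'$), and the same cancellation of the constant factor $\typeweight{\structure_t}\cdot\typeweight{\tau_t}$ in the conditional probabilities. The only cosmetic difference is that you suppress the Tseitin conjunct $\Lambda$ by appealing to the paper's earlier convention that the $Z_k$ are determined by the $R_k$-facts, whereas the paper's proof carries $\Lambda$ explicitly; the substance is unchanged.
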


\begin{proof}
  % W.l.o.g., we assume that $\recursivesentence$ is satisfiable, i.e., the 
  % Let $\mathfrak{S}_1$ and $\mathfrak{S}_2$ be the WFOMS of $(\recursivesentence, \domain, \weight, \negweight)$ and $(\recursivesentence', \weight, \domain', \negweight)$ respectively.
  % Both of $\mathfrak{S}_1$ and $\mathfrak{S}_2$ have $\mathcal{P}_\sentence$ as their skeleton.
  % We first observe that 
  % That is, every valid $\mathcal{P}_\sentence$-structure over $\domain$ (resp. $\domain'$) corresponds to a unique model in $\mathcal{M}_{\recursivesentence, \domain}$ (resp. $\mathcal{M}_{\recursivesentence', \domain'}$), and vice versa.
  % Therefore, we can prove the soundness of the reduction with a mapping function from valid $\mathcal{P}_\sentence$-structures in $\mathfrak{S}_2$ to valid $\mathcal{P}_\sentence$-structures in $\mathfrak{S}_1$.
  % \lnote{Could you please check this proof again? It has been rewritten.}
  We prove the soundness of the reduction by constructing a mapping function from models of $\recursivesentence'$ to models of $\recursivesentence$.
  This function is defined as $f(\mu') = \mu'\cup \structure_t\cup \tau_t(e_t)$.
  The function $f$ is clearly deterministic, polynomial-time.
  To simplify the remaining arguments of the proof, we first show that $f$ is bijective.
  Write the sentences $\recursivesentence$ and $\recursivesentence'$ as
  \begin{align*}
    \recursivesentence = \forall x\forall y: \fotwoformula(x,y)\land \bigwedge_{i\in[n]}\left(\tau_i(e_i)\land \beta_i(e_i)\right) \land \structure_t \land \Lambda,\\
    \recursivesentence' = \forall x\forall y: \fotwoformula(x,y)\land \bigwedge_{i\in[n]\setminus\{t\}}\left(\tau_i(e_i)\land \relaxcelltypeb{\beta_i}{\pi_{t, i}}(e_i)\right) \land \Lambda,
  \end{align*}
  where $\Lambda = \bigwedge_{k\in[m]}\forall x: Z_k(x)\Leftrightarrow (\exists y: R_k(x, y))$.
  
  \paragraph*{($\Rightarrow$)}
  For any model $\mu'$ in $\fomodels{\recursivesentence'}{\domain'}$, we prove that $f(\mu') = \mu'\cup \structure_t \cup \tau_t(e_t)$ is a model in $\fomodels{\recursivesentence}{\domain}$.
  First, one can easily check that $f(\mu')$ satisfies $\forall x\forall y: \fotwoformula(x,y)\land \bigwedge_{i\in[n]}\tau_i(e_i)$.
  Next, we show that $f(\mu')$ also satisfies $\bigwedge_{i\in[n]}\beta_i(e_i) \land \Lambda$.
  For any element $i\in[n]\setminus\{t\}$, we have that $\exists y: R_k(e_i, y)$ is true in $\mu'$ (and also $f(\mu')$) for all $k\in[m]$ such that $Z_k(x)\in \relaxcelltype{\beta_i}{\pi_{t,i}}$.
  By the definition of the relaxed block, for any $k\in[m]$ such that $Z_k(x)\in \beta_i\setminus\relaxcelltypeb{\beta_i}{\pi_{t, i}}$, the ground relation $R_k(e_i, e_t)$ is in $\structure_t$, and thus $\exists y: R_k(e_i, y)$ is also satisfied in $f(\mu')$.
  It follows that $\exists y: R_k(e_i, y)$ is true in $f(\mu')$ for all $k\in[m]$ such that $Z_k(x)\in \beta_i$.
  Furthermore, the satisfaction of $\exists y: R_k(e_t, y)$ for all $k\in[m]$ such that $Z_k(x)\in \beta_t$ is guaranteed by the satisfiability of $\recursivesentence$.
  Therefore, it is easy to show that $f(\mu')$ satisfies $\bigwedge_{i\in[n]}\beta_i(e_i) \land \Lambda$, which together with the satisfaction of $\forall x\forall y: \fotwoformula(x,y)\land \bigwedge_{i\in[n]}\tau_i(e_i)$ implies that $f(\mu')$ is a model of $\recursivesentence$.
  
  \paragraph*{($\Leftarrow$)} 
  For any model $\mu$ in $\fomodels{\recursivesentence}{\domain}$, we show that there exists a unique model $\mu'$ in $\fomodels{\recursivesentence'}{\domain'}$ such that $f(\mu') = \mu$.
  Let the respective structure be $\mu' = \mu \setminus ((\structure_t) \cup \tau_t(e_t))$, and the uniqueness is is clear from the definition of $f$.
  First, it is easy to check that $\mu'$ satisfies $\forall x\forall y: \fotwoformula(x,y)\land \bigwedge_{i\in[n]\setminus\{t\}}\tau_i(e_i)$.
  Then we show that $\mu'$ also satisfies $\bigwedge_{i\in[n]\setminus\{t\}}\relaxcelltypeb{\beta_i}{\pi_{t, i}}(e_i)\land \Lambda$.
  For any $i\in[n]\setminus\{t\}$, we have that $\exists y: R_k(e_i, e)$ is true in $\mu$.
  Grounding $\exists y: R_k(e_i, e)$ over $\domain$ and replacing the atoms $R_k(e_t, e_j)$ with their truth values in $\structure_t$, we have that $\bigvee_{j\in[n]\setminus\{t\}}R_k(e_i, e_j)$ is true in $\mu'$ for all $k\in[m]$ such that $Z_k(x)\in \relaxcelltype{\beta_i}{\pi_{t,i}}$.
  Thus, $\mu'$ also satisfies $\bigwedge_{i\in[n]\setminus\{t\}}\relaxcelltypeb{\beta_i}{\pi_{t, i}}(e_i)\land \Lambda$.
  This, along with the satisfaction of $\forall x\forall y: \fotwoformula(x,y)\land \bigwedge_{i\in[n]\setminus\{t\}}\tau_i(e_i)$, leads to the conclusion.

  Now, we are prepared to demonstrate the consistency of sampling probability through the mapping function.
  Since $f$ is bijective, it is sufficient to show
  \begin{equation*}
    \pro[f(\mu') \mid\recursivesentence; \domain, \weight, \negweight] = \pro[\mu' \mid \recursivesentence'; \domain', \weight, \negweight]
  \end{equation*}
  for any model in $\fomodels{\recursivesentence'}{\domain'}$.
  By the definition of the mapping function $f$, we have
  \begin{equation*}
    \typeweight{f(\mu')} = \typeweight{\mu'}\cdot\typeweight{\structure_t}\cdot\typeweight{\tau_t}.
  \end{equation*}
  Moreover, due to the bijectivity of $f$, we have
  \begin{equation}
    \begin{aligned}
      \symwfomc(\recursivesentence, \domain, \weight, \negweight) &=\sum_{\mu\in\fomodels{\recursivesentence}{\domain}}\typeweight{\mu}\\
      &=\sum_{\mu'\in\fomodels{\recursivesentence'}{\domain'}}\typeweight{f(\mu')}\\
      &=\typeweight{\structure_t}\cdot\typeweight{\tau_t}\cdot \sum_{\mu'\in\fomodels{\recursivesentence'}{\domain'}}\typeweight{\mu'}\\
      &=\typeweight{\structure_t}\cdot\typeweight{\tau_t}\cdot\symwfomc(\recursivesentence', \domain', \weight, \negweight).
    \end{aligned}
    \label{eq:modular_wfomc}
  \end{equation}
  Finally, by the definition of conditional probability, we can write
  \begin{equation}
    \begin{aligned}
      \pro[f(\mu')\mid\recursivesentence;\domain, \weight, \negweight]&=\frac{\typeweight{f(\mu')}}{\symwfomc(\recursivesentence, \domain, \weight, \negweight)}\\
      &=\frac{\typeweight{\mu'}\cdot\typeweight{\structure_t}\cdot\typeweight{\tau_t}}{\typeweight{\structure_t}\cdot\typeweight{\tau_t}\cdot\symwfomc(\recursivesentence', \domain', \weight, \negweight)}\\
      &=\frac{\typeweight{\mu'}}{\symwfomc(\recursivesentence', \domain', \weight, \negweight)}\\
      &=\pro[\mu'\mid\recursivesentence';\domain', \weight, \negweight],
    \end{aligned}
    \label{eq:consistent_weight}
  \end{equation}
  completing the proof.
\end{proof}

With the sound reduction presented above, one can readily perform the domain recursion on the more general WFOMS problem.
The only remaining task is the sampling of the substructure $\structure_t$ at each recursive step, which will be discussed in the following subsection.

\subsubsection{Sampling substructures}
\label{subsub:samplingsubstru}

For the sake of brevity, we shall limit our focus to the initial recursive step involving sampling $\structure_t$ from $\sentence_T\land\bigwedge_{i\in[n]} \eta_i(e_i)$. 
The subsequent steps can be executed by the same process.
% We first assume that $\structure_t$ is valid in the sampling problem $(\sentence_T\land \bigwedge_{i\in[n]}\eta_i(e_i),\domain,\weight, \negweight)$.

We follow a similar approach as in the 1-types sampling, which samples $\structure_t$ through random partitions on cells.
Let $N_c$ be the total number of cell types, and fix the linear order of cell types as $\eta^1, \eta^2, \dots, \eta^{N_c}$.
For all $i\in[N_c]$, denote by $C_{\eta^i}\subseteq \domain'$ the cells of $\eta^i$ constructed from the cell types $\eta_1, \eta_2, \dots, \eta_{n-1}$:
\begin{equation*}
  C_{\eta^i} := \{e_j \mid j\in[n-1]: \eta_j = \eta^i\}.
\end{equation*}
Let $N_b$ be the number of all 2-tables, and fix the linear order of 2-tables as $\pi^1, \pi^2, \dots, \pi^{N_b}$.
Recall that $\structure_t$ consists of the ground 2-tables of all tuples of $e_t$ and the elements in $\domain'$.
Any substructure $\structure_t$ can be represented as a collection of $N_c$ partitions; each partition is applied to a cell $C_{\eta^i}$ to split it into $N_b$ disjoint subsets; each of these subsets is associated with a specific 2-table $\pi^j$, and contains precisely the elements $e$, whose combination $(e_t, e)$ with $e_t$ has the 2-table $\pi^j$.

\begin{figure}[!bt]
  \centerline{\includegraphics[width=.5\textwidth]{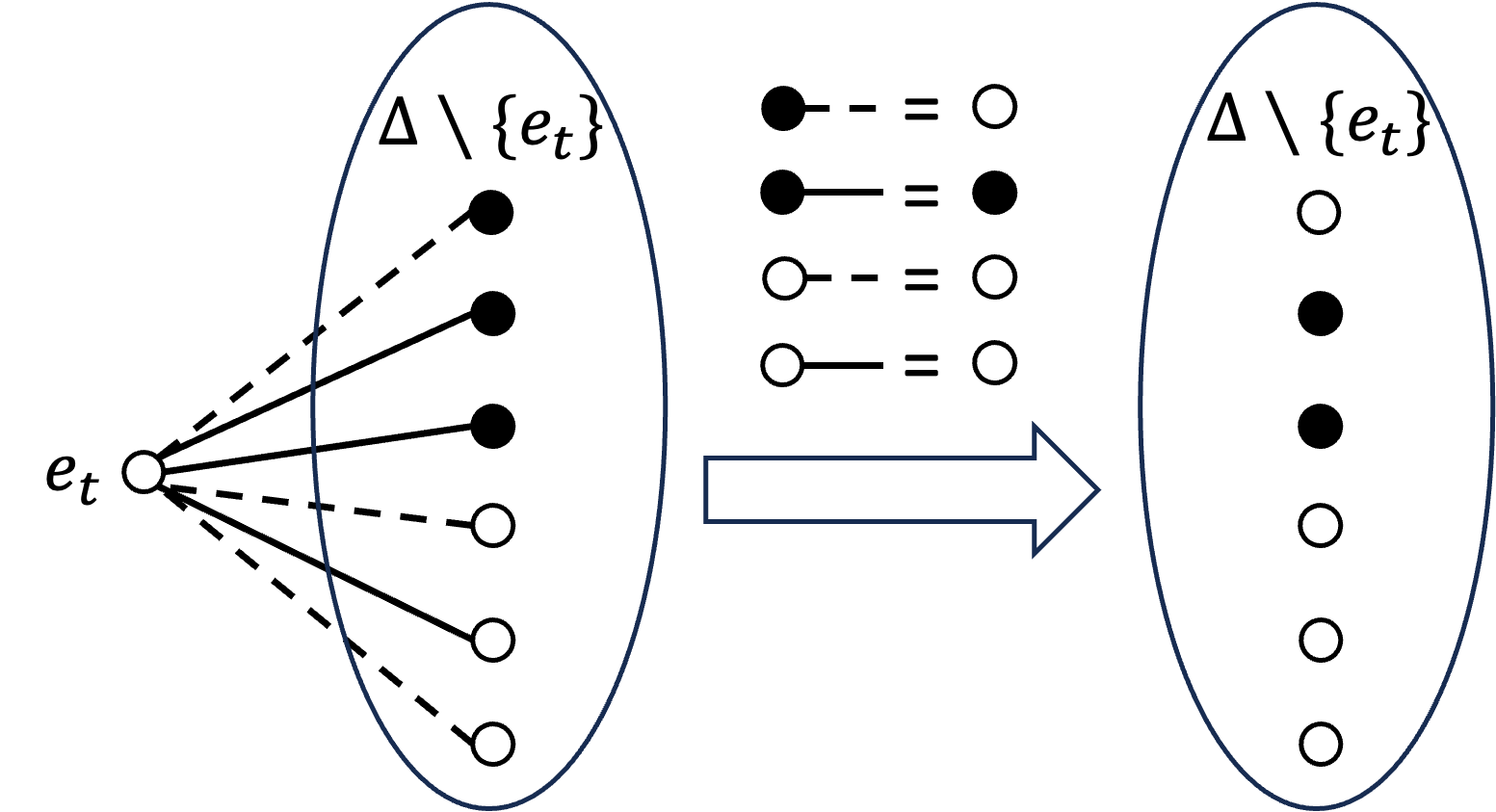}}
  \caption{
    An example of sampling a substructure. 
    There are two cell types, denoted by \faCircle{} and \faCircleThin{}, and two 2-tables, denoted by \dashed{} and \full{}. The sampled substructure $\structure_t$ is represented on the left, whose 2-tables configuration is $(g_{\text{\faCircle{}}, \text{\dashed{}}}^{\structure_t}, g_{\text{\faCircle{}}, \text{\full{}}}^{\structure_t}, g_{\text{\faCircleThin{}}, \text{\dashed{}}}^{\structure_t}, g_{\text{\faCircleThin{}}, \text{\full{}}}^{\structure_t}) = (1, 2, 2, 1)$. 
    With the relaxations of cell types defined above the arrow, where, e.g., $\text{\faCircle{}}\text{\dashed}=\text{\faCircleThin{}}$ means the relaxed cell type of \faCircle{} under \dashed{} is \faCircleThin{}, the reduced cell types for each element is presented on the right. 
    One can easily obtain the reduced cell configuration from the 2-table configuration $(g_{\text{\faCircle{}}, \text{\dashed{}}}^{\structure_t}, g_{\text{\faCircle{}}, \text{\full{}}}^{\structure_t}, g_{\text{\faCircleThin{}}, \text{\dashed{}}}^{\structure_t}, g_{\text{\faCircleThin{}}, \text{\full{}}}^{\structure_t})$.
  }
  \label{fig:substrcutresampling}
\end{figure}

Consider a substructure $\structure_t$ and its corresponding partitions.
For any cell $C_{\eta^i}$, let $g_{\eta^i, \pi^j}^{\structure_t}$ represents the cardinality of the subset in $C_{\eta^i}$ associated with the 2-table $\pi^j$.
We can write the configuration of the partition of cell $C_{\eta^i}$ as $(g_{\eta^i, \pi^j}^{\structure_t})_{j\in[N_b]}$, and denote it by the vector $\vecg_{\eta^i}^{\structure_t}$.
The \textit{2-tables configuration} $\vecg$ of $\structure_t$ is then defined as the concatenation of partition configurations over all cells, i.e., $\vecg^{\structure_t}:= \bigoplus_{i\in[N_c]}\vecg_{\eta^i}^{\structure_t}$, where $\oplus$ denotes the concatenation operator.
Figure~\ref{fig:substrcutresampling} illustrates an example of a substructure and its 2-tables configuration.
In the following, we show that the sampling probability of $\structure_t$ is entirely determined by its corresponding 2-tables configuration.

To begin, it is clear that the sampling probability of $\structure_t$ is proportional to the value of $\symwfomc(\sentence_T\land\bigwedge_{i\in[n]}\eta_i(e_i)\land\structure_t, \domain, \weight, \negweight)$.
As stated by~\eqref{eq:modular_wfomc}, we can write it as
\begin{equation}
  \symwfomc(\recursivesentence', \domain', \weight, \negweight)\cdot \typeweight{\tau_t}\cdot \typeweight{\structure_t}
  \label{eq:sampling_a_n}
\end{equation}
where $\recursivesentence'=\sentence_T\land \bigwedge_{i\in[n]\setminus \{t\}}\relaxcelltypeb{\eta_i}{\pi_{t,i}}(e_i)$ is the reduced sentence given the substrcture $\structure_t$.
Denote by $\vecweight = \left(\typeweight{\pi^i}\right)_{i\in N_b}$ the weight vector of 2-tables.
We can write \eqref{eq:sampling_a_n} as
\begin{equation}
  \symwfomc(\recursivesentence', \domain', \weight, \negweight)\cdot \typeweight{\tau_t} \cdot \prod_{i\in[N_c]} \vecweight^{\vecg_{\eta^i}^{\structure_t}}.
  \label{eq:structure_weight}
\end{equation}
In the expression above, the 1-types $\tau_t$ is fixed, and the last term fully depends on the 2-tables configurations $\vecg^{\structure_t}$.
The only part that needs further analysis is the \wfomc{} problem on $\recursivesentence'$.

% Recall $\recursivesentence'=\sentence_T\land \bigwedge_{i\in[n]\setminus \{t\}}\relaxcelltypeb{\eta_i}{\pi_{t,i}}(e_i)$.
The \wfomc{} problem on $\recursivesentence'$ can be viewed as the \wfomc{} problem on $\sentence_T$ conditioned on the cell types $\relaxcelltype{\eta_i}{\pi_{t,i}}$.
By the symmetry of the weighting function, its value does not depend on the specific cell types $\relaxcelltype{\eta_i}{\pi_{t,i}}$ assigned to each element, but instead relies on the overall cell configuration corresponding to the cell types $\relaxcelltype{\eta_i}{\pi_{t,i}}$.
We denote this cell configuration by a vector $(n_{\eta^i})_{i\in[N_c]}$, where $n_{\eta^i}$ represents the number of elements whose cell type is $\eta^i$ under the relaxation of 2-tables in $\structure_t$.
By the definition of $g_{\eta^i, \pi^j}^{\structure_t}$, which is the number of elements in the cell $C_{\eta^i}$ that are relaxed by the 2-table $\pi^j$, we can write each $n_{\eta^i}$ as
\begin{equation}
  n_{\eta^i} = \sum_{j\in[N_c]}\sum_{k\in[N_b]: \relaxcelltype{\eta^j}{\pi^k} = \eta^i} g_{\eta^j, \pi^k}^{\structure_t}.
  \label{eq:configuration_reduction}
\end{equation}
For the example in Figure~\ref{fig:substrcutresampling}, there are two cell types \faCircle{} and \faCircleThin{} and two block types \dashed{} and \full{}.
The relaxed cell type of \faCircleThin{} under both \dashed{} and \full{} is \faCircleThin{}, while the relaxed cell types of \faCircle{} under \dashed{} and \full{} are \faCircleThin{} and \faCircle{}, respectively.
By \eqref{eq:configuration_reduction}, we have $n_{\text{\faCircle{}}} = g_{\text{\faCircle{}}, \text{\full{}}}^{\structure_t}$ and $n_{\text{\faCircleThin{}}} = g_{\text{\faCircle{}}, \text{\dashed{}}}^{\structure_t} + g_{\text{\faCircleThin{}}, \text{\dashed{}}}^{\structure_t} + g_{\text{\faCircleThin{}}, \text{\full{}}}^{\structure_t}$.
Therefore, we have that the 2-tables configuration fully determines the cell configuration and, consequently, the value of $\symwfomc(\recursivesentence', \domain', \weight, \negweight)$.

% \lnote{I'd like to add an example with figure here to illustrate the reduction.}

According to the aforementioned reasoning, the sampling probability of a substructure is completely determined by its 2-tables configuration.
Thus, we can adopt a similar approach to the one we used to sample 1-types in Section~\ref{sub:sampling-1-types} and sample $\structure_t$ by first sampling a 2-tables configuration, and then randomly partitioning the cells accordingly.
The sampling for the 2-tables configuration can be achieved by the enumerative sampling method.
For any 2-tables configuration $\vecg^{\structure_t}$, its sampling probability is proportional to the value of \eqref{eq:structure_weight} multiplied by $\prod_{i\in[N_c]}\binom{|C_{\eta^i}|}{\vecg_{\eta^i}^{\structure_t}}$, where $|C_{\eta^i}|$ is the cardinality of $C_{\eta^i}$.

Finally, we need to ensure that the sampled substructure $\structure_t$ is valid w.r.t. $(\sentence_T\land \bigwedge_{i\in[n]}\eta_i(e_i), \domain)$.
It can be easily achieved by imposing some constraints on the 2-tables configuration to be sampled.
A 2-table $\pi$ is called \textit{coherent} with a 1-types tuple $(\tau, \tau')$ if, for some domain elements $a$ and $b$, the interpretation of $\tau(a)\cup\pi(a,b)\cup\tau'(b)$ satisfies the formula $\fotwoformula(a,b)\land\fotwoformula(b,a)$.
The following two constraints on the 2-tables configuration can make the sampled substructure valid:
\begin{itemize}
    \item Any 2-table $\pi_{t,i}$ contained in $\structure_t$ must be coherent with $\tau_t$ and $\tau_i$, the 1-types of $e_t$ and $e_i$. 
    This translates to a requirement on the 2-tables configuration that when sampling a configuration of partition of a cell $C_{\eta^i}$, the cardinality of 2-tables that are not coherent with $\tau_t$ and $\tau_i$ is restricted to be $0$;
    \item For any Tseitin $Z_k$ in the block type $\beta_t$, the substructure $\structure_t$ must contain at least one ground atom $R_k(e_t,a)$, where $a$ is a domain element, to make $\structure_t$ satisfy the existentially quantified formula $\exists y:R_k(e_t,y)$. 
    This means that there must be at least one 2-table $\pi$ such that $R_k(x,y)\in\pi$, whose cardinality in some cells is nonzero.
\end{itemize}

\subsubsection{Sampling Algorithm}

\begin{algorithm}[!htb]
  \caption{$\code{TwoTablesSamplerForFO2}(\sentence_T, \domain, \weight, \negweight, (\eta_i)_{i\in[n]})$} 
  \label{alg:drsampler}
  \textbf{Input:} A sentence $\sentence_T$ of the form~\eqref{eq:tseitin_reduction}, a domain $\domain=\{e_i\}_{i\in[n]}$, a weighting $(\weight, \negweight)$ and the cell type $\eta_i = (\tau_i,\beta_i)$ of each domain element $e_i$\\
  \textbf{Output:} 2-tables $\{\pi_{i,j}\}_{i,j\in[n]}$
  \begin{algorithmic}[1]
    \If{$n = 1$}
    \State \Return $\emptyset$
    \EndIf
    \If{all block types $\beta_1,\dots, \beta_n$ are $\top$}
      \State \Return $\code{TwoTablesSamplerForUFO2}(\forall x\forall y: \fotwoformula(x,y), \domain, \weight, \negweight, (\tau_i)_{i\in[n]})$
    \EndIf
    % \State $W\gets \symwfomc(\sentence_T\land\bigwedge_{i\in[n]}\eta_i(e_i), \domain, \weight, \negweight)$
    \State Choose $t\in[n]$; $\domain'\gets\domain \setminus \{e_t\}$; $\mathbf{G}\gets\emptyset$
    \State Obtain the cells $C_{\eta^1},\dots,C_{\eta^{N_c}}$ given by $\eta_1,\dots,\eta_{n-1}$
    \For{$\left(\vecg_{\eta^i}\right)_{i\in[N_c]}\gets\bigotimes_{i\in[N_c]} \mathcal{T}_{|C_{\eta^i}|, N_b}$} \Comment{$\bigotimes$ is the Cartesian product}
    % , \dots, \mathcal{T}_{|C_{\eta^{N_c}}|, N_b})$}
    \State $\vecg\gets\bigoplus_{i\in[N_c]}\vecg_{\eta^i}$ 
    % \Comment{$\bigoplus$ is the concatenation}
    \If{$\code{ExSat}\left(\vecg, \eta_t\right)$}
      \State Obtain the reduced cell configuration $(n_{\eta^i})_{i\in[N_c]}$ from $\vecg$ by \eqref{eq:configuration_reduction}
      % \State Compute $\sumweight_{\vecn'}$ by~\eqref{eq:weight_cell_configuration}
      \State $p(\vecg) \gets \eqref{eq:structure_weight} \cdot \prod_{i\in[N_c]} \binom{n_{\eta^i}}{\vecg_{\eta^i}}$
      \State $\mathbf{G} \gets \mathbf{G} \cup \left\{\vecg\right\}$
    \EndIf
    \EndFor
    \State $\forall \vecg \in \mathbf{G}, p(\vecg) \gets p(\vecg)/\sum_{\vecg'\in\mathbf{G}}p(\vecg')$
    \State Sample a 2-tables configuration $\vecg^*$ from $\mathbf{G}$ with the probability $p$
    \State Remove $e_t$ from the cell $C_{\eta_t}$
    \State $\structure\gets \emptyset$
    \For{$i\in[N_c]$}
      % \State Fetch the cardinality vector $\vecg^*_{\eta^i}$ of $\eta^i$ from $\vecg^*$
      \State Randomly partition the cell $C_{\eta^i}$ into $\left\{G_{\eta^i, \pi^j}\right\}_{j\in[N_b]}$ according to $\vecg^*_{\eta^i}$
      \For{$j\in[N_b]$}
        \State $\structure \gets \structure \cup \left\{\pi^j(e_t, e)\right\}_{e\in G_{\eta^i, \pi^j}}$
        \State $\forall e_s\in G_{\eta^i, \pi^j}, \eta_s' \gets \relaxcelltype{\eta_s}{\pi^j}$
      \EndFor
    \EndFor
    \State $\structure\gets \structure \cup \code{TwoTablesSamplerForFO2}(\sentence_T, \domain', \weight, \negweight, \left(\eta_i'\right)_{i\in[n-1]})$
    \State \Return $\structure$
  \end{algorithmic}
\end{algorithm}

By combining all the ingredients discussed above, we now present our sampling algorithm for 2-tables, as shown in Algorithm~\ref{alg:drsampler}.
The overall structure of the algorithm follows a recursive approach, where a recursive call with a smaller domain and relaxed cell types is invoked at Line~31.
The algorithm terminates when the input domain contains a single element (at Line~1) or there are no existential constraints on the elements (at Line~4).
In the latter case, the algorithm resorts to the 2-tables sampling algorithm for \ufotwo{} presented in Section~\ref{sub:sampling-2-tables}.
In Lines~10-22, all possible 2-tables configurations are enumerated.
For each configuration, we compute its corresponding weight in Lines~13-14 and decide whether it should be sampled in Lines~15-20, where $\code{Uniform}(0,1)$ is a random number uniformly distributed in $[0,1]$.
When the 2-tables configuration has been sampled, we randomly partition the cells, and then update the sampled 2-tables and the cell type of each element respectively at Line~27 and~28.
The submodule $\code{ExSat}(\vecg, \eta)$ at Line~12 is used to check whether the 2-tables configuration $\vecg$ guarantees the validity of the sampled substructures.
Its pseudo-code can be found in \ref{sub:exsat}.

\begin{lemma}
  \label{le:drsamplercomplexity}
  The complexity of $\code{TwoTablesSamplerForFO2}$ in Algorithm~\ref{alg:drsampler} is polynomial in the size of the input domain.
\end{lemma}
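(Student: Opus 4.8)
The plan is to bound the work performed by a single invocation of \code{TwoTablesSamplerForFO2} and then multiply by the recursion depth. Since each recursive call removes exactly one element $e_t$ from the current domain, the recursion unfolds as a single chain whose depth is at most $n = |\domain|$. It therefore suffices to show that one invocation (excluding the recursive call it makes) runs in time polynomial in $n$; the total runtime is then bounded by $n$ times a polynomial, which is again polynomial.

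For a single invocation, the central quantity to control is the number of iterations of the enumeration loop over the Cartesian product $\bigotimes_{i\in[N_c]}\mathcal{T}_{|C_{\eta^i}|, N_b}$ of configuration spaces. The key observation I would stress is that both the number of $2$-tables $N_b$ and the number of cell types $N_c$ depend only on the fixed vocabulary $\mathcal{P}_\sentence$ and the number $m$ of existential formulas, so they are constants independent of $n$. Since $|\mathcal{T}_{M, N_b}| = \binom{M + N_b - 1}{N_b - 1} = O(M^{N_b - 1})$ and each $|C_{\eta^i}| \le n$, the product over the $N_c$ cells has size $O\!\left(n^{N_c(N_b - 1)}\right)$, a polynomial in $n$ because the exponent is constant. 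Hence the loop is executed only polynomially many times.

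It then remains to argue that each loop iteration costs polynomial time. The \code{ExSat} validity test, the reduced cell configuration computed via~\eqref{eq:configuration_reduction}, and the multinomial and weight factors $\prod_{i\in[N_c]}\binom{n_{\eta^i}}{\vecg_{\eta^i}}$ and $\prod_{i\in[N_c]}\vecweight^{\vecg_{\eta^i}}$ appearing in~\eqref{eq:structure_weight} are all clearly polynomial in $n$. The only nontrivial factor is $\symwfomc(\recursivesentence', \domain', \weight, \negweight)$, and justifying that this count is computable in polynomial time is the hard part. Here $\recursivesentence'$ is the \fotwo{} sentence $\sentence_T$ conditioned on the relaxed cell types, and since each cell type is a conjunction of a $1$-type and a block type, this conditioning consists entirely of unary ground literals. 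I would therefore invoke the domain-liftability of \fotwo{} for \wfomc{} together with Proposition~\ref{pro:liftable_unary_evidence}, which preserves polynomial complexity in the presence of unary evidence, to conclude that the required \wfomc{} value is computable in time polynomial in $n$. By the symmetry argument of Section~\ref{subsub:samplingsubstru} this value in fact depends only on the induced cell configuration, of which there are only polynomially many, so one may alternatively precompute all needed values once per invocation.

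Finally I would assemble the pieces. The post-loop steps --- normalizing and sampling the configuration $\vecg^*$, randomly partitioning the cells, and relaxing each cell type --- are all polynomial in $n$. Thus one invocation runs polynomially many loop iterations, each of polynomial cost, plus polynomial overhead, so its cost is polynomial in $n$; multiplying by the $O(n)$ recursion depth gives the claimed bound. The main obstacle is precisely the twofold tension I singled out: keeping the configuration enumeration polynomial, which relies on $N_b$ and $N_c$ being constants, while simultaneously evaluating the reduced \wfomc{} counts efficiently through the unary-evidence liftability result of Proposition~\ref{pro:liftable_unary_evidence}.
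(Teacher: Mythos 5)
Your proposal is correct and follows essentially the same route as the paper's proof: bound the recursion depth by $n$, bound the enumeration loop by the polynomial size of the configuration spaces (using that $N_c$ and $N_b$ are independent of $n$), and handle the $\symwfomc$ evaluations as conditional \wfomc{} with unary evidence via Proposition~\ref{pro:liftable_unary_evidence} and the domain-liftability of \fotwo{}. Your explicit bound $O(n^{N_c(N_b-1)})$ on the loop and the remark about precomputing counts per cell configuration are slightly more detailed than the paper's argument but do not change the structure of the proof.
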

\begin{proof}
  The algorithm $\code{TwoTablesSamplerForFO2}$ is invoked at most $n$ times, where $n$ is the size of the domain.
  The main computation of each recursive call is the for-loop, where we need to iterate over all $\prod_{i\in[N_c]} |\mathcal{T}_{n_{\eta^i}, N_b}|$ possible configurations.
  Recall that the size of the configuration space $\mathcal{T}_{M, m}$ is polynomial in $M$.
  Thus the number of iterations executed by the loop is also polynomial in the domain size.
  The other complexity arises from the \wfomc{} problems on $W$ and \eqref{eq:structure_weight}.
  These problems can be viewed as conditional \wfomc{} on a set of unary facts, whose size is clearly polynomial in the domain size.
  Therefore, according to Proposition~\ref{pro:liftable_unary_evidence} and the liftability of \fotwo{}, the aforementioned counting problems can be solved in polynomial time in the domain size.
  As a result, the complexity of the entire algorithm is polynomial in the size of the input domain.
\end{proof}

\begin{remark}
  We note that there are several optimizations to \code{TwoTablesSamplerForFO2} in our implementation, e.g., heuristically choosing the domain element for recursion so that the algorithm can quickly reach the terminal condition.
  However, the algorithm as described here is easier to understand and efficient enough to prove our main result, so we leave the discussion of some of the optimizations to \ref{sub:optwms}.
\end{remark}

\begin{algorithm}[!htb]
  \caption{$\code{WeightedModelSamplerForFO2}(\sentence, \domain, \weight, \negweight)$} 
  \label{alg:weightedmodelsamplerfo2}
  \textbf{Input}: A SNF sentence $\sentence$, a domain $\domain$ of size $n$ and a weighting $(\weight, \negweight)$ \\
  \textbf{Output}: A model $\mu$ of $\sentence$ sampled according to the probability $\pro[\mu\mid\sentence]$
  \begin{algorithmic}[1]
    \State $\{\tau_i(e_i)\}_{i\in[n]} \gets \code{OneTypesSampler}(\sentence, \domain, \weight, \negweight)$
    \State Transform $\sentence$ into $\sentence_T$ by~\eqref{eq:tseitin_reduction}
    \State Set the block and cell types $\beta_i(x)$ and $\eta_i(x)$ for each $e_i$ as $\bigwedge_{k\in[m]}Z_k(x)$ and $(\beta_i, \tau_i)$ respectively
    \State $\{\pi_{i,j}(e_i, e_j)\}_{i,j\in[n]} \gets \mathsf{TwoTablesSamplerForFO2}(\sentence_T, \domain, \weight, \negweight, (\eta_i)_{i\in[n]})$
    \State \Return $\{\tau_i(e_i)\}_{i\in[n]} \cup \{\pi_{i,j}(e_i, e_j)\}_{i,j\in[n]}$
  \end{algorithmic}
\end{algorithm}

With the proposed \code{TwoTablesSamplerForFO2}, we can now prove Theorem~\ref{th:fotwoliftable}.

\begin{proof}[Proof of Theorem~\ref{th:fotwoliftable}]
  By Lemma~\ref{le:snf_sound}, it is sufficient to demonstrate that all SNF sentences are sampling liftable.
  To achieve this, we construct a lifted sampler for SNF sentences as shown in Algorithm~\ref{alg:weightedmodelsamplerfo2}.
  The sampler consists of two stages, one for sampling 1-types and the other for 2-tables, in a manner similar to the sampler for \ufotwo{}.
  The sampling of 1-types can be realized using the 1-types sampler for \ufotwo{} presented in Section~\ref{sub:sampling-1-types}.
  Due to the domain-liftability of \fotwo{}, this approach retains its polynomial complexity w.r.t. the domain size.
  The sampling of 2-tables is handled by \code{TwoTablesSamplerForFO2}, whose complexity has been proved polynomial in the domain size, according to Lemma~\ref{le:drsamplercomplexity}.
  Therefore, the entire sampling algorithm runs in time polynomial in the domain size, and hence is domain-lifted.
\end{proof}

\section{\ctwo{} is Sampling Liftable}
\label{sec:c2_liftable}

In this section, we extend the sampling liftability of \fotwo{} to \ctwo{}, the 2-variables fragment of first-order logic with counting quantifiers $\exists_{=k}$, $\exists_{\le k}$ and $\exists_{\ge k}$.
These counting quantifiers are defined as follows.
Let $\structure$ be a structure defined on a domain $\domain$.
Then the sentence $\exists_{=k} x: \extformula(x)$ is true in $\structure$ if there are exactly $k$ distinct elements $t_1,\dots, t_k\in\domain$ such that $\structure\models \extformula(t_i)$.
For example, the sentence 
$$\forall x\forall y: (E(x,y)\Rightarrow E(y,x))\land \forall x: \neg E(x,x)\land \forall x\exists_{=2}y: E(x,y)$$ 
encodes 2-regular graphs, i.e., graphs where each vertex has exactly two neighbors.
The other two counting quantifiers can be defined: $(\exists_{\le k} x: \extformula(x))\Leftrightarrow (\forall x: \neg \extformula(x))\lor \bigvee_{i\in[k]} (\exists_{=i}x: \extformula(x))$ and $(\exists_{\ge k} x: \extformula(x))\Leftrightarrow \neg (\exists_{\le k-1} x: \extformula(x))$.
For ease of presentation, we allow the counting parameter $k=0$, and define the quantifier $\exists_{=0}$ by $(\exists_{=0}x: \extformula(x)) \Leftrightarrow (\forall x: \neg \extformula(x))$.
Note that the existential quantifiers $\exists$ can be always written as $\exists_{\ge 1}$, and thus we omit $\exists$ in the following discussion and assume that \ctwo{} is obtained by adding the counting quantifiers to \ufotwo{}.
The notation of $[\cdot]$ is extended such that $[i, j]$ denotes the set of integers $\{i, i+1, \dots, j\}$.

The sampling liftability and the lifted sampling algorithm for \ctwo{} are built upon the framework for \fotwo{} as in Section~\ref{sec:fo2_liftable}, which includes the following components.

\boldparagraph{Normal form}
We first introduce the following sentence as a normal form of \ctwo{}:
\begin{equation}
  \sentence_{\forall} \land \bigwedge_{i\in[m]}(\forall x: A_i(x) \Leftrightarrow (\exists_{=k_i} y: R_i(x,y))),
  \label{eq:ctwonromal}
\end{equation}
where $\sentence_{\forall}$ is a \ufotwo{} sentence, each $k_i$ is a non-negative integer, $R_k(x,y)$ is an atomic formula, and $A_k$ is a unary predicate.
Any \ctwo{} sentence can be converted into this normal form and the corresponding reduction is sound.
This process involves converting all $\exists_{\le k}$ and $\exists_{\ge k}$ quantifiers into $\exists_{=k}$ quantifiers according to the definition of counting quantifiers\footnote{We stress that the formula $\exists_{\ge k} x: R(x,y)$ is converted to $\neg (\bigvee_{i\in[0, k-1]} \exists_{=i} x: R(x,y))$ rather than $\bigvee_{i\in[k, n]} \exists_{=i} x: R(x,y)$, since the latter one depends on the domain size $n$.}.
Then a similar approach for converting \fotwo{} sentence into SNF in Section~\ref{sec:fo2_liftable} is used to substitute each $\exists_{=k} y: R_i(x, y)$ with an auxiliary atom $A_i(x)$.
The details can be found in \ref{sub:ctwonormalform}.
\begin{lemma}
    For any WFOMS problem $\mathfrak{S} = (\sentence, \domain, \weight, \negweight)$ where $\sentence$ is a \ctwo{} sentence, there exists a WFOMS problem $\mathfrak{S}'=(\sentence', \domain, \weight', \negweight')$ where $\sentence'$ is of the form~\eqref{eq:ctwonromal} and $\max_{i\in[m]} k_i \le |\domain|$, such that the reduction from $\mathfrak{S}$ to $\mathfrak{S}'$ is sound.
    \label{lemma:ctwo_normal}
\end{lemma}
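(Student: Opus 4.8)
The plan is to adapt the Scott-normal-form transformation of Lemma~\ref{le:snf_sound} so that it additionally absorbs the counting quantifiers, and to add a pruning pass that enforces the parameter bound. I would proceed in three conceptual stages: first eliminate $\exists_{\le k}$ and $\exists_{\ge k}$ in favour of $\exists_{=i}$ subformulas, then pull every $\exists_{=k}$ subformula out into an axiomatized fresh atom, and finally verify soundness by a bijective reduct map. For the first stage I would use the defining equivalences stated at the start of this section: $\exists_{\le k}x:\extformula(x)$ is rewritten as $\bigvee_{i\in[0,k]}\exists_{=i}x:\extformula(x)$, and $\exists_{\ge k}x:\extformula(x)$ as $\neg(\exists_{\le k-1}x:\extformula(x))$. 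I would deliberately use the negation form for $\exists_{\ge k}$ rather than a disjunction running up to the domain size, so that the rewriting stays structural and never introduces counting parameters that depend on $n=|\domain|$. After this stage the only quantifiers beyond $\forall$ are of the form $\exists_{=k}$.

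For the second stage, working bottom-up from the atomic level exactly as in the \fotwo{} construction, I would replace each subformula $\exists_{=k}y:\phi(x,y)$ with $\phi$ quantifier-free by: (i) a fresh binary atom $R(x,y)$ axiomatized through a universal conjunct $\forall x\forall y: R(x,y)\Leftrightarrow\phi(x,y)$, which folds into the universal part; and (ii) a fresh unary atom $A(x)$ axiomatized through $\forall x: A(x)\Leftrightarrow(\exists_{=k}y:R(x,y))$, which is precisely a conjunct of the shape required in~\eqref{eq:ctwonromal}. The original occurrence is then replaced by $A(x)$. Because every counting subformula is pulled out this way, the residual matrix becomes quantifier-free and universally closed, yielding a genuine \ufotwo{} sentence $\sentence_{\forall}$, while the collected $A_i\Leftrightarrow(\exists_{=k_i}y:R_i(x,y))$ conjuncts supply the second part of~\eqref{eq:ctwonromal}. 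I would set $\weight=\negweight=1$ for every fresh predicate.

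Soundness then follows verbatim from the argument of Lemma~\ref{le:snf_sound}: each fresh predicate is fully determined in every model by the original vocabulary (the $R$'s by $\phi$, the $A$'s by the counting condition on $R$), so the reduct map $f(\mu')=\proj{\mu'}{\mathcal{P}_\sentence}$ is a bijection from $\fomodels{\sentence'}{\domain}$ onto $\fomodels{\sentence}{\domain}$, and unit weights on the fresh predicates give $\typeweight{f(\mu')}=\typeweight{\mu'}$. With a bijection and matching weights, the defining sum in Definition~\ref{de:soundness} collapses to a single term and the two conditional probabilities coincide.

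The genuinely new ingredient relative to the \fotwo{} case, and the part I expect to require the most care, is enforcing $\max_{i\in[m]}k_i\le|\domain|$. Over the fixed domain $\domain$ of size $n$ no element can have more than $n$ witnesses, so any counting subformula whose parameter exceeds $n$ has a determined truth value on every structure over $\domain$: both $\exists_{=k}$ and $\exists_{\ge k}$ with $k>n$ are equivalent to $\bot$, whereas $\exists_{\le k}$ with $k\ge n$ is equivalent to $\top$. I would therefore insert a pruning pass on the \emph{original} counting quantifiers, \emph{before} the expansion of $\exists_{\le k}$ and $\exists_{\ge k}$, that replaces each over-sized subformula by the corresponding constant $\top$ or $\bot$ and propagates it. This preserves the model set over $\domain$, hence keeps the reduction sound, and after pruning every surviving quantifier has parameter at most $n$; the subsequent expansion only produces $\exists_{=i}$ with $i$ bounded by the already-bounded parameters, so every final $\exists_{=k_i}$ satisfies $k_i\le n$. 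The one subtlety to check carefully is precisely the interaction between this constant propagation and the $\exists_{\ge k}=\neg(\exists_{\le k-1})$ rewriting, which is why the pruning must be applied to the original quantifiers first and the expansion into $\exists_{=}$ only afterwards.
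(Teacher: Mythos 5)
Your proposal is correct and follows essentially the same route as the paper's proof in the appendix: expand $\exists_{\ge k}$ via negation of $\exists_{\le k-1}$ and $\exists_{\le k}$ into a disjunction of $\exists_{=i}$, prune parameters exceeding $|\domain|$ using the fact that no element can have more than $|\domain|$ witnesses, axiomatize each remaining $\exists_{=k}y:\phi(x,y)$ with fresh predicates $R$ and $A$ carrying unit weights, and conclude soundness from the reduct bijection as in Lemma~\ref{le:snf_sound}. The only difference is that you prune over-sized quantifiers before the expansion while the paper prunes the resulting $\exists_{=k}$ subformulas afterwards; both orderings yield the same bound $\max_i k_i\le|\domain|$ and the same sound reduction.
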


\boldparagraph{Sampling 1-types}
The 1-type for each element $e_i$ can be sampled by the same approach as in Section~\ref{sec:fo2_liftable}.
Let $\tau_1, \tau_2, \dots, \tau_n$ be the sampled 1-types.
The predicates $A_i$ are contained in these 1-types, and thus are determined after the 1-types sampling.

% \paragraph{\textbf{The sentence for sampling 2-tables}}
% Replacing 
% The resulting sentence for the further sampling of 2-tables after the 1-types sampling can be written as
% \begin{equation}
%   \label{eq:ctwo2table}
%   \sentence_{\forall} \land \bigwedge_{i\in[n]}\tau_i(e_i) \land \bigwedge_{j\in[m]} \left(\bigwedge_{e\in\domain_j^\exists} \exists_{=k_j} y: R_j(e, y)\land\bigwedge_{e\in\domain_j^\nexists} \neg (\exists_{=k_j}y: R_j(e,y))\right),
% \end{equation}
% where $\tau_i$ is the sampled 1-types, $\domain_j^\exists$ contains precisely the elements with positive literals $A_j(e)$ in their 1-types, and $\domain_j^\nexists = \domain \setminus \domain_j^\exists$.

\boldparagraph{A more general WFOMS problem for 2-tables sampling}
Similar to the case of \fotwo{}, we need to transform the 2-tables sampling problem into a more general form to apply the domain recursion scheme.
For each formula $\exists_{=k_j} y: R_j(x,y)$, we introduce $2(k_j + 1)$ new unary predicates $Z^\exists_{j, 0}, Z^\exists_{j, 1},\dots, Z^\exists_{j, k_j}$, $Z^\nexists_{j, 0}, Z^\nexists_{j, 1}, \dots, Z^\nexists_{j, k_j}$, and append the conjunction
\begin{equation}
  \forall x: \left(Z^\exists_{j, q}(x) \Leftrightarrow (\exists_{=q} y: R_j(x,y))\right) \land \left(Z^\nexists_{j, q}(x)\Leftrightarrow \neg (\exists_{=q} y: R_j(x,y))\right)
  \label{eq:ctwotransformed}
\end{equation}
over $q\in[0, k_j]$ to $\sentence_{\forall}$, resulting in a new sentence $\sentence_T$.
Let 
\begin{equation*}
  \mathcal{Z}_j := \{Z^\exists_{j, 0}(x),\dots, Z^\exists_{j, k_j}(x), Z^\nexists_{j, 0}(x), \dots, Z^\nexists_{j, k_j}(x), \top\},
\end{equation*}
and 
\begin{equation}
  \label{eq:ctwo2table_z}
  \mathcal{Z} := \{\{Z_1(x), Z_2(x), \dots, Z_m(x)\}\mid Z_1(x)\in \mathcal{Z}_1, \dots, Z_m(x)\in\mathcal{Z}_m\}.
\end{equation}
The more general WFOMS problem is then defined on the sentence
\begin{equation}
  \sentence_T \land \bigwedge_{i\in[n]} \tau_i(e_i) \land \bigwedge_{i\in[n]} \nu_i(e_i),
  \label{eq:ctwogeneral}
\end{equation}
where each $\nu_i(x)$ is a conjunction over a set of atomic formulas in $\mathcal{Z}$:
\begin{equation*}
  \nu_i(x) \in \left\{\bigwedge_{j\in[m]} Z_j(x)\mid \{Z_1(x), Z_2(x), \dots, Z_m(x)\}\in \mathcal{Z}\right\}.
\end{equation*}
Let $\weight(Z_{j,q}^*) = \negweight(Z_{j,q}^*) = 1$ for all $j\in[m], q\in[0, k_j]$ and $* \in \{\exists, \nexists\}$.
It is easy to check that the 2-tables sampling problem is reducible to the more general WFOMS problem, and the reduction is sound.
\begin{example}
  \label{ex:2regular_colored}
  Consider the WFOMS problem on the following sentence:
  \begin{align*}
    &\forall x\forall y: (E(x,y)\Rightarrow E(y,x))\land \forall x: \neg E(x,x)\land \\
    &\forall x: Red(x)\Leftrightarrow (\exists_{=2} y: E(x,y)).
  \end{align*}
  It encodes the colored graphs where a vertex is colored red if and only if it has exactly two neighbors.
  Suppose the domain is $\{v_1, v_2, v_3, v_4\}$, and the sampled 1-types for each element are $\tau_1 = \{Red(x)\}$, $\tau_2 = \{\neg Red(x)\}$, $\tau_3 = \{\neg Red(x)\}$, $\tau_4 = \{Red(x)\}$.
  The transformed sentence for the more general WFOMS problem is
  \begin{equation*}
    % \label{eq:2regulargraphs_transformed}
    \begin{aligned}
      &\forall x\forall y: E(x,y) \Rightarrow E(y,x)\land \forall x: \neg E(x,x)\land \\
      &Red(v_1)\land \neg Red(v_2)\land Red(v_3)\land Red(v_4)\land \\
      &\forall x: (Z^\exists_{1, 0}(x) \Leftrightarrow (\exists_{=0} y: E(x,y)))\land \forall x: (Z^\nexists_{1, 0}(x) \Leftrightarrow \neg (\exists_{=0} y: E(x,y)))\land \\
      &\forall x: (Z^\exists_{1, 1}(x) \Leftrightarrow (\exists_{=1} y: E(x,y)))\land \forall x: (Z^\nexists_{1, 1}(x) \Leftrightarrow \neg (\exists_{=1} y: E(x,y)))\land \\
      &\forall x: (Z^\exists_{1, 2}(x) \Leftrightarrow (\exists_{=2} y: E(x,y)))\land \forall x: (Z^\nexists_{1, 2}(x) \Leftrightarrow \neg (\exists_{=2} y: E(x,y)))\land\\
      &Z_{1, 2}^\exists(v_1)\land Z_{1, 2}^\nexists(v_2)\land Z_{1, 2}^\nexists(v_3)\land Z_{1, 2}^\exists(v_4).
    \end{aligned}
  \end{equation*}
  The block types for each element are $\nu_1(x) = \{Z^\exists_{1, 2}(x)\}$, $\nu_2(x) = \{Z^\nexists_{1, 2}(x)\}$, $\nu_3(x) = \{Z^\nexists_{1, 2}(x)\}$, $\nu_4(x) = \{Z^\exists_{1, 2}(x)\}$.
\end{example}

\begin{figure}[!bt]
  \centerline{\includegraphics[width=.9\textwidth]{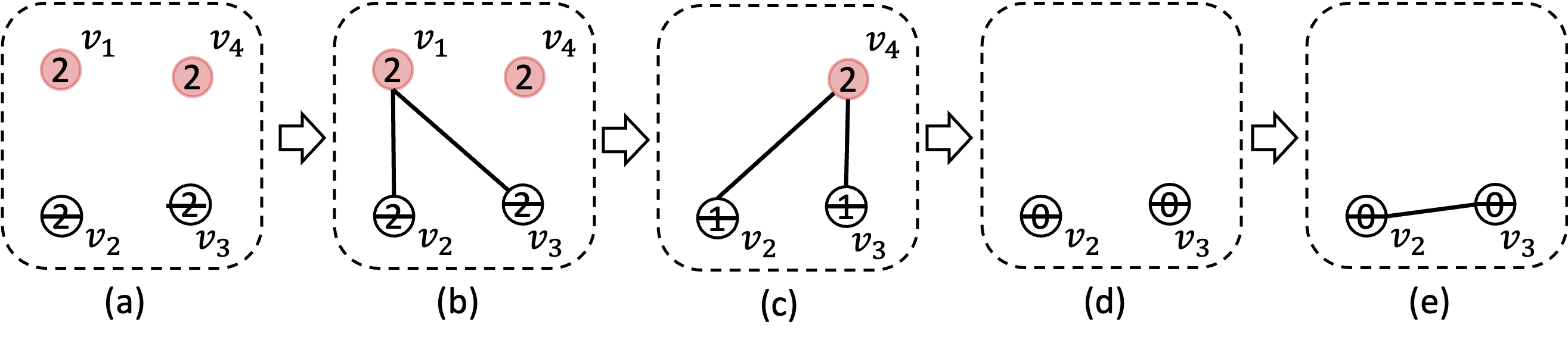}}
  \caption{
    An illustration of domain recursion for the WFOMS problem in Example~\ref{ex:2regular_colored}.
    The vertices $v_1$ and $v_4$ are colored red, while the vertices $v_2$ and $v_3$ are not colored, according to the sampled 1-types.
    The number shown in each vertex corresponds to its block type, denoting the number of neighbors that the vertex should or should not have, e.g., $v_1$ should have exactly two neighbors, while $v_2$ should not have two neighbors.
    In each domain recursion step, the block types of the selected element are relaxed according to the sampled edges.
    Note that in the final step, the edge between $v_1$ and $v_4$ is always sampled, as the block type of $v_1$ and $v_3$ requires them to have at least one neighbor.
  }
  \label{fig:regular_colored_graphs}
\end{figure}

\boldparagraph{Domain recursion}
The domain recursion scheme is still applicable to the WFOMS problem on the sentence \eqref{eq:ctwogeneral}, where we view the sets in $\mathcal{Z}(x)$ as ``block types'', and $\nu_i(x)$ as the block type of $e_i$.
When the substructure $\structure_t$ of an element $e_t$ has been sampled, the block types of the remaining elements are relaxed by the 2-tables in $\structure_t$.
This leads to a new WFOMS problem on the sentence in the same form of \eqref{eq:ctwogeneral}, but with a smaller domain.
We can show that this reduction to the new WFOMS problem is sound.
The argument is similar to what we have done for \fotwo{}, and is deferred to \ref{sec:missingproofs}.
An example of this reduction is illustrated in Figure~\ref{fig:regular_colored_graphs}.
\begin{restatable}{lemma}{ctwodomianrecursion}
  \label{lemma:ctwo-domain-recursion}
  % Let $(\sentence, \domain, \weight, \negweight)$ be a WFOMS problem, where $\sentence$ is a sentence in the form of~\eqref{eq:ctwogeneral}.
  % For any substructure $\structure_t$ that is valid in the WFOMS.
  Define the relaxed block type $\relaxcelltype{\nu}{\pi}$ of a block type $\nu$ under a 2-table $\pi$ as a set in $\mathcal{Z}$ such that for each $Z^*_{j,q}\in \nu$,
  \begin{itemize}
    \item if $*=\nexists$, $q=0$ and $R_j(y,x)\in \pi$, then $\top \in \relaxcelltype{\nu}{\pi}$;
    \item otherwise, if $R_j(y,x)\in \pi$, then\footnote{The corner case where $q=0$ and $R_j(y,x)\in\pi$, resulting in negative counting parameters, cannot occur during the domain recursion, and thus we ignore it here.} $Z^*_{j,q-1}\in \relaxcelltype{\nu}{\pi}$, and if $R_j(y,x)\notin \pi$, then $Z^*_{j,q}\in \relaxcelltype{\nu}{\pi}$.
  \end{itemize}
  For any substructure $\structure_t$ defined in~\eqref{eq:substructure}, let 
  \begin{equation}
    \label{eq:ctwo_recursive_sentence}
    \recursivesentence := \sentence_T\land \bigwedge_{i\in[n]} \tau_i(e_i)\land \bigwedge_{i\in[n]} \nu_i(e_i)\land \structure_t,
  \end{equation}
  and 
  \begin{equation}
    \label{eq:ctwo_reduced_sentence}
    \recursivesentence' := \sentence_T\land \bigwedge_{i\in[n]\setminus\{t\}} \tau_i(e_i)\land \bigwedge_{i\in[n]\setminus\{t\}} \relaxcelltypeb{\nu_i}{\pi_{t, i}}(e_i).
  \end{equation}
  If $\recursivesentence$ is satisfiable, the reduction from the WFOMS problem of $(\recursivesentence, \domain, \weight, \negweight)$ to  $(\recursivesentence', \domain\setminus \{e_t\}, \weight, \negweight)$ is sound.
\end{restatable}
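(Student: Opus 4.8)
The plan is to mirror the proof of Lemma~\ref{lemma:modular} for \fotwo{}, replacing the monotone ``drop a satisfied existential'' relaxation by the count-decrementing relaxation $\relaxcelltype{\nu}{\pi}$ defined in the statement, and to verify that this relaxation is exactly what is needed to make the natural gluing map a bijection. Concretely, I would take the mapping function
\[
  f(\mu') := \mu' \cup \structure_t \cup \tau_t(e_t),
\]
which reattaches the fixed substructure $\structure_t$ (the 2-tables between $e_t$ and the remaining elements) together with the 1-type of $e_t$ to any model $\mu'$ of $\recursivesentence'$ over $\domain' = \domain \setminus \{e_t\}$. As in the \fotwo{} case, $f$ is deterministic and computable in polynomial time, so the bulk of the work is to show that $f$ is a bijection from $\fomodels{\recursivesentence'}{\domain'}$ onto $\fomodels{\recursivesentence}{\domain}$.

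For the forward direction ($\Rightarrow$), given a model $\mu'$ of $\recursivesentence'$, I would verify that $f(\mu')$ satisfies each conjunct of $\recursivesentence$. The universal part $\forall x\forall y:\fotwoformula(x,y)$ and the 1-types $\tau_i(e_i)$ are immediate, exactly as before. The crux is the counting constraints: for each remaining element $e_i$ (with $i \neq t$) and each $j$, the number of $R_j$-neighbours of $e_i$ in $f(\mu')$ over the whole domain $\domain$ equals its number of $R_j$-neighbours in $\mu'$ over $\domain'$, plus one precisely when $R_j(e_i,e_t)$ (i.e.\ $R_j(y,x)\in\pi_{t,i}$) lies in $\structure_t$. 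By the definition of $\relaxcelltype{\nu_i}{\pi_{t,i}}$, a constraint $Z^*_{j,q}$ in $\nu_i$ becomes $Z^*_{j,q-1}$ exactly in that case (and stays $Z^*_{j,q}$ otherwise), so that $e_i$ realizing $\relaxcelltypeb{\nu_i}{\pi_{t,i}}$ in $\mu'$ is equivalent to $e_i$ realizing $\nu_i$ in $f(\mu')$; the relaxation to $\top$ handles the $\nexists$, $q=0$ case, where the single edge to $e_t$ already fulfils ``not exactly $0$''. The constraint $\nu_t$ on $e_t$ itself is determined entirely by $\structure_t$ (all of $e_t$'s 2-tables are fixed), and is therefore satisfied by the hypothesis that $\recursivesentence$ is satisfiable. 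The reverse direction ($\Leftarrow$) runs symmetrically: any $\mu \in \fomodels{\recursivesentence}{\domain}$ contains $\structure_t$ and $\tau_t(e_t)$, so $\mu' := \mu \setminus (\structure_t \cup \tau_t(e_t))$ is the unique preimage, and the same count-splitting argument shows $\mu'$ satisfies the relaxed counting constraints of $\recursivesentence'$.

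With bijectivity established, soundness follows by the identical weight-bookkeeping as in \eqref{eq:modular_wfomc}--\eqref{eq:consistent_weight}: since $\typeweight{f(\mu')} = \typeweight{\mu'}\cdot\typeweight{\structure_t}\cdot\typeweight{\tau_t}$ and $f$ is a bijection, the factor $\typeweight{\structure_t}\cdot\typeweight{\tau_t}$ cancels between numerator and denominator, giving $\pro[f(\mu')\mid\recursivesentence; \domain, \weight, \negweight] = \pro[\mu'\mid\recursivesentence'; \domain', \weight, \negweight]$ and hence the soundness condition of Definition~\ref{de:soundness}. I expect the main obstacle to be the counting argument in the forward direction: unlike the \fotwo{} relaxation, the \ctwo{} relaxation must track both $\exists$- and $\nexists$-style constraints and decrement the counting parameter, so care is needed to confirm that ``exactly $q$ over $\domain$'' is equivalent to ``exactly $q-1$ over $\domain'$'' under the edge to $e_t$, and, in particular, that the satisfiability hypothesis rules out the ill-defined $Z^\exists_{j,0}$-with-decrement corner case flagged in the statement's footnote.
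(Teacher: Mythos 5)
Your proposal is correct and follows essentially the same route as the paper's proof: the same gluing map $f(\mu') = \mu'\cup\structure_t\cup\tau_t(e_t)$, the same bijectivity argument via the count-splitting observation that a neighbour count of $q$ over $\domain$ corresponds to $q-1$ over $\domain'$ exactly when $R_j(y,x)\in\pi_{t,i}$, the same appeal to satisfiability of $\recursivesentence$ for the constraints on $e_t$ itself, and the same weight cancellation as in \eqref{eq:modular_wfomc}--\eqref{eq:consistent_weight}. No gaps.
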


\boldparagraph{Sampling algorithm}
By the domain recursion scheme, it is easy to devise a recursive algorithm for sampling 2-tables in a manner similar to Algorithm~\ref{alg:drsampler}. 
In fact, the procedure in Algorithm~\ref{alg:drsampler} remains the same except for the following modifications:
\begin{itemize}
  \item \textbf{M1}: All block types in the algorithm, including those in the cell types, are changed with $\nu(x)$,
  \item \textbf{M2}: The termination condition on block types becomes that all block types are $\{\top\}$.
  \item \textbf{M3}: The subroutine $\code{ExSat}$ now includes an additional check for the satisfaction of block types.
  Specifically, the sampled 2-tables concerning the selected element must satisfy the block type of the element, i.e., for any $Z^\exists_{j,q}(x)$ (resp. $Z^\nexists_{j,q}(x)$) in $\nu_t(x)$, there must exist exactly $q$ (resp. must not exist $q$) elements $e\in\domain\setminus\{e_t\}$ such that $R_j(e_t, e)\in \pi$.
\end{itemize}

\begin{remark}
  As mentioned at the beginning of the section, the existential quantifiers $\exists$ can be replaced by $\exists_{\ge 1}$.
  Then the auxiliary predicates $Z^\nexists_{j, 0}$ are exactly the Tseitin predicates $Z_j$ introduced in~\eqref{eq:tseitin_reduction}.
  The block types $\nu(x)$, which can only contain $\top$ and $Z^\nexists_{j,0}(x)$, degenerates to the ones we defined in Section~\ref{subsub:block_cell_types} for \fotwo{}.
  Lemma~\ref{lemma:ctwo-domain-recursion} and the sampling algorithm above is equivalent to Lemma~\ref{lemma:modular} and Algorithm~\ref{alg:drsampler} respectively.
\end{remark}

\begin{theorem}
  \label{th:ctwoliftable}
  The \ctwo{} fragment is domain-liftable under sampling.
\end{theorem}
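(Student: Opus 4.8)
The plan is to mirror the proof of Theorem~\ref{th:fotwoliftable}, assembling a lifted sampler from the three ingredients laid out above: the normal-form reduction of Lemma~\ref{lemma:ctwo_normal}, the 1-type sampler of Section~\ref{sub:sampling-1-types}, and the domain-recursive 2-table sampler obtained from Algorithm~\ref{alg:drsampler} via the modifications \textbf{M1}--\textbf{M3}. Concretely, I would first invoke Lemma~\ref{lemma:ctwo_normal} to replace the input \ctwo{} sentence by a sentence of the form~\eqref{eq:ctwonromal} with $\max_i k_i \le |\domain|$; since that reduction is sound and polynomial and soundness is transitive, it suffices to construct a lifted sampler for the normal form. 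I would then sample the 1-types exactly as in Section~\ref{sub:sampling-1-types}, which fixes the interpretation of every $A_i$, read off the initial block types $\nu_i$ from the sampled $A_i$-values, and feed the cell types $\eta_i = (\tau_i, \nu_i)$ into the modified domain-recursion routine; its output together with the sampled 1-types is returned as the model.

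For correctness, the overall distribution equals the required $\pro[\mu\mid\sentence]$ provided each stage is sound. The 1-type stage is sound by the decomposition~\eqref{eq:decompose_sampling}, just as for \fotwo{}. For the 2-table stage I would argue that each recursive step is a sound reduction via Lemma~\ref{lemma:ctwo-domain-recursion}: choosing $e_t$, sampling its substructure $\structure_t$ with probability proportional to~\eqref{eq:structure_weight}, and relaxing the remaining block types by $\pi_{t,i}$ yields a new WFOMS problem of the same form~\eqref{eq:ctwogeneral} over $\domain\setminus\{e_t\}$. The only genuinely new correctness ingredient relative to \fotwo{} is that validity now means the \emph{exact}-count constraints rather than mere non-emptiness; I would check that the relaxation of Lemma~\ref{lemma:ctwo-domain-recursion} correctly decrements the counter $q$ of $Z^*_{j,q}$ whenever $R_j(y,x)\in\pi$, and that the strengthened $\code{ExSat}$ test of \textbf{M3} rejects exactly those 2-tables configurations for which some $Z^\exists_{j,q}(e_t)$ or $Z^\nexists_{j,q}(e_t)$ is violated, so that every sampled $\structure_t$ is valid w.r.t.~\eqref{eq:ctwogeneral}.

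The substance of the theorem then reduces to the complexity analysis, which is where I expect the real care to be needed and which does not transfer verbatim from Lemma~\ref{le:drsamplercomplexity}. In \fotwo{} the number of cell types is $|U_{\mathcal{P}_\sentence}|\cdot 2^m$, a constant, and this is precisely what makes the enumeration over $\bigotimes_{i\in[N_c]}\mathcal{T}_{|C_{\eta^i}|, N_b}$ polynomial. In \ctwo{} the block types range over $\mathcal{Z}$, so $N_c = |U_{\mathcal{P}_\sentence}|\cdot|\mathcal{Z}|$ with $|\mathcal{Z}| = \prod_{j\in[m]}(2k_j+3)$. The key point I would make explicit is that, under the data-complexity convention (the sentence, hence every $k_j$ and $m$, is fixed while only $n$ grows), $|\mathcal{Z}|$ and therefore $N_c$ and $N_b$ are all constants; consequently $\prod_{i\in[N_c]}|\mathcal{T}_{|C_{\eta^i}|, N_b}| = O(n^{(N_b-1)N_c})$ is polynomial in $n$, the for-loop of the modified Algorithm~\ref{alg:drsampler} enumerates polynomially many configurations, and there are at most $n$ recursive calls. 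Each configuration's weight~\eqref{eq:structure_weight} is a conditional \wfomc{} of the \ctwo{} sentence $\sentence_T$ on polynomially many unary facts, hence computable in polynomial time by the domain-liftability of \ctwo{} for \wfomc{}~\cite{kuzelkaWeightedFirstorderModel2021} together with Proposition~\ref{pro:liftable_unary_evidence}.

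The step I expect to be the main obstacle, or at least the one deserving the most scrutiny, is exactly this complexity bound: one must confirm that the apparent blow-up from $2^m$ to $\prod_{j\in[m]}(2k_j+3)$ cell types is a constant factor and not a function of $n$. I would therefore emphasize that the running time is polynomial in $n$ but carries a factor growing in the counting parameters $\max_j k_j$, which is why the naive enumeration degrades for large counting thresholds and motivates the more efficient cardinality-constraint treatment of a subfragment of \ctwo{} in Section~\ref{sec:ccliftability}. With the polynomiality of both stages established, the assembled sampler runs in time polynomial in the domain size, proving that \ctwo{} is domain-liftable under sampling.
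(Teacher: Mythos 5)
Your proposal is correct and follows essentially the same route as the paper's proof: reduce to the normal form via Lemma~\ref{lemma:ctwo_normal}, reuse the 1-type sampler, and argue that the modifications \textbf{M1}--\textbf{M3} to Algorithm~\ref{alg:drsampler} preserve polynomial complexity because the number of block types ($\prod_j |\mathcal{Z}_j|$, a constant in the data-complexity sense) is independent of $n$ and the \wfomc{} calls are conditional on unary evidence (Proposition~\ref{pro:liftable_unary_evidence} plus the liftability of \ctwo{}). Your treatment is if anything slightly more explicit on the soundness side (invoking Lemma~\ref{lemma:ctwo-domain-recursion} per recursive step and checking the strengthened $\code{ExSat}$), which the paper delegates to the preceding lemmas.
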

\begin{proof}
  By Lemma~\ref{lemma:ctwo_normal}, it is sufficient to demonstrate that all \ctwo{} sentences in the normal form~\eqref{eq:ctwonromal} are sampling liftable.
  We prove it by showing that the sampling algorithm presented above is lifted.
  The sampling of 1-types is clearly polynomial-time in the domain size by the same argument as in the proof of Theorem~\ref{th:fotwoliftable} and the liftability of \ctwo{}~\cite{kuzelkaWeightedFirstorderModel2021}.
  For sampling 2-tables, we show that the modifications $\mathbf{M1}, \mathbf{M2}$ and $\mathbf{M3}$ on Algorithm~\ref{alg:drsampler} do not change the polynomial-time complexity in the domain size.
  The complexity is not affected by $\mathbf{M2}$, since the number of domain recursion steps is still the domain size.
  Furthermore, the modification $\mathbf{M3}$ only introduces a minimal overhead to each recursion step, which is not dependent on the domain size, and thus does not change the polynomial-time complexity.
  For $\mathbf{M1}$, we need some additional arguments.
  First, by the definition of block types, the number of all possible block types is $\prod_{j\in[m]} 2(k_j + 2)$, which is independent of the domain size.
  Therefore, the complexity of the main loop in Algorithm~\ref{alg:drsampler} is still polynomial-time in the domain size.
  With $\mathbf{M1}$, the algorithm now needs to solve the \wfomc{} problems on sentences in the form of \eqref{eq:ctwogeneral}.
  These counting problems can be again viewed as conditional \wfomc{} with unary evidence, whose complexity is clearly polynomial in the domain size, following from Proposition~\ref{pro:liftable_unary_evidence} and the liftability of \ctwo{}.
  As a result, the complexity of the modified algorithm is still polynomial in the domain size, and hence the \ctwo{} fragment is domain-liftable under sampling.
\end{proof}

\section{Sampling Liftability with Cardinality Constraints}
\label{sec:ccliftability}

In this section, we extend our result to the case containing the \textit{cardinality constraints}.
A single cardinality constraint is a statement of the form $|P|\bigstar q$, where $\bigstar$ is a comparison operator (e.g., $=$, $\le$, $\ge$, $<$, $>$) and $q$ is a natural number. 
These constraints are imposed on the number of distinct positive ground literals in a structure $\structure$ formed by the predicate $P$. 
For example, a structure $\structure$ satisfies the constraint $|P| \le q$ if there are at most $q$ literals for $P$ that are true in $\structure$.
For illustration, we allow cardinality constraints as atomic formulas in the FO formulas, e.g., $(|E| = 2)\land (\forall x\forall y: E(x,y) \Rightarrow E(y,x))$ (its models can be interpreted as undirected graphs with exactly one edge) and the satisfaction relation $\models$ is extended naturally.
% \lnote{say something about the expressiveness of cardinality constraints compared to \ctwo{}.}
Note the difference between cardinality constraints and counting quantifiers: the former is a statement about the number of positive ground literals for a given predicate that are true in a structure, while the latter is a statement about the number of elements in a structure that satisfy certain properties.
Another important difference is in the data complexity that we will show: the counting parameter $k_i$ of counting quantifiers is regarded as a part of the sentence and is fixed when considering the data complexity, while the cardinality constraints allow the parameter $q$ to be a part of the input instance, while still guaranteeing polynomial runtime.

\subsection{\ctwo{} with Cardinality Constraints}

We first establish the domain-liftability under sampling for the fragment \ctwo{} augmented with cardinality constraints.
Since \ctwo{} is a superset of \fotwo{} and \ufotwo{}, this result also implies the domain-liftability under sampling of \fotwo{} and \ufotwo{} with cardinality constraints.
Let $\sentence$ be a \ctwo{} sentence and 
\begin{equation}
  \Upsilon := \varphi(|P_1|\bigstar q_1,\dots,|P_M|\bigstar q_M),
  \label{eq:cc}
\end{equation}
where $\varphi$ is a Boolean formula, $\{P_i\}_{i\in[M]}\subseteq \mathcal{P}_\sentence$, and $\forall i\in[M], q_i\in \nat$.
Let us consider the WFOMS problem on $\sentence\land\Upsilon$ over the domain $\domain$ under $(\weight, \negweight)$.

The sampling algorithm for $\sentence\land\Upsilon$ keeps the same structure as those for \ufotwo, \fotwo{} and \ctwo{}, containing two successive sampling routines for 1-types and 2-tables respectively.
As usual, we only focus on the sampling of 2-tables in the following, since the process for sampling 1-types is identical
% \footnote{The only difference here is that the cardinality constraints are considered when calculating the sampling probability of 1-types assignments. 
% Moreover, this difference does not change the polynomial-time complexity of the sampling algorithm due to the domain-liftability of \ufotwo{} with cardinality constraints (which will be also discussed in 2-tables sampling later).} 
to those for \ufotwo{}, \fotwo{}, and \ctwo{}.

% The algorithm of $\code{OneTypeSampler}$ for $\generalsentence\land\Upsilon$ is similar to Algorithm~\ref{alg:sampling_cell_types}, with the main difference being that the computation of \wfomc{} problems, specifically $\symwfomc(\generalsentence, \domain,\weight,\negweight)$ and $\mathcal{W}_\vecn$, now include cardinality constraints $\Upsilon$ in their input sentences.
% To account for this change, we slightly modify the definition of $\mathcal{W}_\vecn$ in \eqref{eq:weight_cell_configuration} by taking $\sentence_T\land\bigwedge_{i\in[\widetilde{n}]}\widetilde{\eta}_i\land\Upsilon$ as input, and denote the new term by $\mathcal{W}_{\vecn,\Upsilon}$.
% According to Proposition~5 in \cite{kuzelkaWeightedFirstorderModel2021}, the addition of cardinality constraints to a liftable sentence does not affect the liftability of the resulting formula (in terms of \wfomc{} problems). Therefore, the computation of $\mathcal{W}_{\vecn,\Upsilon}$ remains polynomial-time in the domain size, as the original sentence in $\mathcal{W}_\vecn$ was already proven to be liftable.

We first show that the domain recursive property still holds in the sampling problem of 2-tables.
Given a set $L$ of ground literals and a predicate $P$, we define $N(P, L)$ as the number of positive ground literals for $P$ in $L$.
Given a substructure $\structure_t$ of the element $e_t$, denote the 1-type of $e_t$ by $\tau_t$ as usual, let $q_i' = q_i - N(P_i, \structure_t) - N(P_i, \tau_t(e_t))$ for every $i\in[M]$, and define
\begin{equation}
  \Upsilon' = \varphi(|P_1| \bigstar q_1',\dots,|P_M|\bigstar q_M').
  \label{eq:reduced_cc}
\end{equation}
Let $\recursivesentence_{C} = \recursivesentence \land \Upsilon$ and $\recursivesentence_C' = \recursivesentence' \land \Upsilon'$, where $\recursivesentence$ and $\recursivesentence'$ are the original and reduced sentences defined as \eqref{eq:ctwo_recursive_sentence} and \eqref{eq:ctwo_reduced_sentence} respectively.
% , and $\domain' = \domain \setminus \{e_t\}$.
Then the reduction from the WFOMS problem on $\recursivesentence_C$ to $\recursivesentence_C'$ is sound.
The proof follows the same argument for Lemma~\ref{lemma:modular} and Lemma~\ref{lemma:ctwo-domain-recursion}, and is deferred to \ref{sec:missingproofs}.
\begin{restatable}{lemma}{ccmodular}
  \label{lemma:ccmodular}
  The reduction from the WFOMS problem of $(\recursivesentence_C, \domain, \weight, \negweight)$ to $(\recursivesentence_C', \domain', \weight, \negweight')$ is sound.
\end{restatable}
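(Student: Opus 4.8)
The plan is to re-use the very same mapping function $f(\mu') = \mu'\cup\structure_t\cup\tau_t(e_t)$ that established the bijection in Lemma~\ref{lemma:ctwo-domain-recursion} (itself the \ctwo{} lifting of Lemma~\ref{lemma:modular}). The guiding observation is that a cardinality constraint carries no weight and merely acts as a filter deciding which structures count as models; consequently the weighting function and the multiplicative identity $\typeweight{f(\mu')} = \typeweight{\mu'}\cdot\typeweight{\structure_t}\cdot\typeweight{\tau_t}$ are left completely untouched. So the entire probability-consistency computation of \eqref{eq:modular_wfomc}--\eqref{eq:consistent_weight} will transfer verbatim, \emph{provided} I can show that $f$ restricts to a bijection between $\fomodels{\recursivesentence_C'}{\domain'}$ and $\fomodels{\recursivesentence_C}{\domain}$, i.e.\ that $f$ respects the additional cardinality filter.

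First I would establish the key bookkeeping identity. For any $\mu'\in\fomodels{\recursivesentence'}{\domain'}$ and its image $\mu = f(\mu')$, the ground atoms over $\domain$ split disjointly into those lying entirely in $\domain'$ (carried by $\mu'$), the reflexive/unary atoms on $e_t$ (carried by $\tau_t(e_t)$), and the ``cross'' atoms between $e_t$ and $\domain'$ (carried by $\structure_t$). Hence for every predicate $P_i$,
\begin{equation*}
  N(P_i, \mu) = N(P_i, \mu') + N(P_i, \structure_t) + N(P_i, \tau_t(e_t)).
\end{equation*}
Since $\structure_t$ and $\tau_t(e_t)$ are fixed throughout the recursive step, the last two terms are constants, so the atomic constraint $|P_i|\bigstar q_i$ holds in $\mu$ exactly when $N(P_i,\mu')\bigstar q_i - N(P_i,\structure_t) - N(P_i,\tau_t(e_t))$, that is, when $|P_i|\bigstar q_i'$ holds in $\mu'$; this shift is valid for every comparison operator $\bigstar$ because it merely translates both sides by the same constant. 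As $\Upsilon$ and $\Upsilon'$ share the identical Boolean combination $\varphi$ of these atomic constraints, I conclude $\mu\models\Upsilon$ if and only if $\mu'\models\Upsilon'$.

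Combining this equivalence with the bijection of Lemma~\ref{lemma:ctwo-domain-recursion} shows that $f$ maps $\fomodels{\recursivesentence'\land\Upsilon'}{\domain'}$ bijectively onto $\fomodels{\recursivesentence\land\Upsilon}{\domain}$: $\mu'$ satisfies $\recursivesentence'$ iff $\mu$ satisfies $\recursivesentence$, and simultaneously $\mu'$ satisfies $\Upsilon'$ iff $\mu$ satisfies $\Upsilon$. Restricting the weight-preservation argument of Lemma~\ref{lemma:modular} to this sub-bijection yields $\symwfomc(\recursivesentence_C, \domain, \weight, \negweight) = \typeweight{\structure_t}\cdot\typeweight{\tau_t}\cdot\symwfomc(\recursivesentence_C', \domain', \weight, \negweight)$, and hence $\pro[f(\mu')\mid\recursivesentence_C] = \pro[\mu'\mid\recursivesentence_C']$, which is exactly the soundness condition of Definition~\ref{de:soundness} (the weighting being unchanged by the added constraints).

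The main obstacle is not conceptual depth but the correctness of the cardinality bookkeeping. I must be certain that the three pieces $\mu'$, $\structure_t$, $\tau_t(e_t)$ partition \emph{all} positive ground $P_i$-literals over $\domain$ with neither overlap nor omission, so that the counts add exactly; and I must handle the corner case where the reduced threshold $q_i' = q_i - N(P_i,\structure_t) - N(P_i,\tau_t(e_t))$ becomes negative. In that case $|P_i|\bigstar q_i'$ is simply evaluated deterministically (true for $\bigstar\in\{\ge,>\}$, false for $\bigstar\in\{=,\le,<\}$), which is consistent with the algebraic shift above and keeps the equivalence $\mu\models\Upsilon\Leftrightarrow\mu'\models\Upsilon'$ intact. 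Once this is pinned down, everything else is inherited directly from the already-proved \fotwo{} and \ctwo{} domain-recursion lemmas.
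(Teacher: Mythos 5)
Your proof is correct and follows essentially the same route as the paper's: reuse the mapping $f(\mu')=\mu'\cup\structure_t\cup\tau_t(e_t)$ from Lemma~\ref{lemma:ctwo-domain-recursion}, check that this bijection restricts to the models satisfying the cardinality constraints, and inherit the weight-consistency computation of Lemma~\ref{lemma:modular} unchanged. The paper's own proof is terser --- it simply asserts that $\mu'\models\recursivesentence'\land\Upsilon'$ implies $f(\mu')\models\recursivesentence\land\Upsilon$ --- whereas you make the underlying count identity $N(P_i,\mu)=N(P_i,\mu')+N(P_i,\structure_t)+N(P_i,\tau_t(e_t))$ and the negative-threshold corner case explicit, which only fills in details the paper leaves implicit.
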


By Lemma~\ref{lemma:ccmodular}, we develop a recursive sampling algorithm for 2-tables in Algorithm~\ref{alg:ccdrsampler}.
This algorithm is derived from Algorithm~\ref{alg:drsampler} with the redundant lines not shown in the pseudocode.
The differences from the original algorithm are:
\begin{itemize}
  \item \textbf{M1}: All \wfomc{} problems now contain the cardinality constraints, e.g., $\Upsilon$ in Line~4 and $\Upsilon'$ in Line~10,
  \item \textbf{M2}: The terminal condition, which previously checked the block types, is removed\footnote{We can keep this terminal condition and invoke a more efficient sampler for \ufotwo{} with cardinality constraints, e.g., the one from our previous work \cite{wangDomainLiftedSamplingUniversal2022}. However, removing the condition does not change the polynomial-time complexity of the algorithm.}, and
  \item \textbf{M3}: The validity check for the sampled 2-tables configuration in $\code{ExSat}$ now includes an additional check for the well-definedness of the reduced cardinality constraints $\Upsilon'$, returning $\code{False}$ if there is any $q_i' \notin \nat$ for $i\in[M]$.
\end{itemize}

\begin{algorithm}[!htb]
  \caption{$\code{TwoTablesSamplerForCC}(\sentence_T, \Upsilon, \domain, \weight, \negweight, (\eta_i)_{i\in[n]})$} 
  \label{alg:ccdrsampler}
  \begin{algorithmic}[1]
    \State \dots
    \For{$\left(\vecg_{\eta^i}\right)_{i\in[N_c]}\gets\code{Prod}(\mathcal{T}_{n_{\eta^1}, N_b}, \dots, \mathcal{T}_{n_{\eta^{N_c}}, N_b})$}
    % \State $\vecg\gets\bigoplus_{i\in[N_c]}\vecg_{\eta^i}$
      \If{$\code{ExSat}\left(\vecg, \eta_t, \Upsilon\right)$}
        \State Obtain the reduced cell configuration $(n_{\eta^i})_{i\in[N_c]}$ from $\vecg$ by \eqref{eq:configuration_reduction}
        \State Get the new cardinality constraints $\Upsilon'$ w.r.t. $\vecg$ by \eqref{eq:reduced_cc}
        \State $p(\vecg)\gets \symwfomc(\recursivesentence'\land\Upsilon', \weight, \negweight, \domain)\cdot \typeweight{\tau_t} \cdot  \prod_{i\in[N_c]} \binom{n_{\eta^i}}{\vecg_{\eta^i}}\cdot \mathbf{w}^{\vecg_{\eta^i}}$
        \State $\mathbf{G}\gets \mathbf{G}\cup \{\vecg\}$
      \EndIf
    \EndFor
    \State \dots
    \State Obtain the reduced cardinality constraints $\Upsilon'$ w.r.t. $\vecg^*$ by~\eqref{eq:reduced_cc}
    \State $\structure\gets \structure \cup \code{TwoTablesSamplerForCC}(\sentence_T, \Upsilon', \domain', \weight, \negweight, \left(\eta_i'\right)_{i\in[n-1]})$
    \State \Return $\structure$
  \end{algorithmic}
\end{algorithm}

\begin{theorem}
  Let $\sentence$ be a \ctwo{} sentence and $\Upsilon$ be of the form \eqref{eq:cc}.
  Then $\sentence\land\Upsilon$ is domain-liftable under sampling.
  \label{th:ccliftability}
\end{theorem}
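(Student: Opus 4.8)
The plan is to mirror the proof of Theorem~\ref{th:ctwoliftable} exactly, showing that the recursive sampler \code{TwoTablesSamplerForCC} in Algorithm~\ref{alg:ccdrsampler} is a lifted sampler for $\sentence\land\Upsilon$. As in all previous cases, I would first invoke Lemma~\ref{lemma:ctwo_normal} to assume $\sentence$ is in the normal form~\eqref{eq:ctwonromal}, and then split the sampler into the 1-types stage and the 2-tables stage. The 1-types stage is handled exactly as before: the required probabilities $\mathfrak{P}_1^{\mathbf{m}}$ are now conditional \symwfomc{} computations of the form $\symwfomc(\sentence\land\Upsilon\land\bigwedge_i\tau_i(e_i),\domain,\weight,\negweight)$, which remain polynomial-time by the liftability of \ctwo{} \emph{with cardinality constraints} for counting (this is the counting-side fact I would cite, together with Proposition~\ref{pro:liftable_unary_evidence} for the unary evidence $\bigwedge_i\tau_i(e_i)$). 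I would therefore concentrate the argument on the 2-tables stage.

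For the 2-tables stage, the decomposition $\pro[\structure\mid\cdot]=\pro[\structure\mid\cdot\land\structure_t]\cdot\pro[\structure_t\mid\cdot]$ from the domain-recursion scheme carries over verbatim, with Lemma~\ref{lemma:ccmodular} playing the role that Lemma~\ref{lemma:ctwo-domain-recursion} played for \ctwo{}. Concretely, once $\structure_t$ is sampled, the sound reduction from $\recursivesentence_C=\recursivesentence\land\Upsilon$ to $\recursivesentence_C'=\recursivesentence'\land\Upsilon'$ justifies recursing on the smaller domain $\domain'=\domain\setminus\{e_t\}$ with the relaxed block types and the \emph{updated} constraints $\Upsilon'$ of~\eqref{eq:reduced_cc}. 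The remaining task is to show that $\structure_t$ can be sampled correctly and efficiently. Here I would argue, exactly as in Section~\ref{subsub:samplingsubstru}, that the sampling probability of $\structure_t$ depends only on its 2-tables configuration $\vecg^{\structure_t}$: the weight~\eqref{eq:structure_weight} still factors as $\symwfomc(\recursivesentence'\land\Upsilon',\domain',\weight,\negweight)\cdot\typeweight{\tau_t}\cdot\prod_{i\in[N_c]}\vecweight^{\vecg_{\eta^i}^{\structure_t}}$, and the only new point is that the reduced constraint $\Upsilon'$ is itself a deterministic function of $\vecg^{\structure_t}$, since each $q_i'=q_i-N(P_i,\structure_t)-N(P_i,\tau_t(e_t))$ is fixed once the per-cell counts of each 2-table (hence the number of positive $P_i$-literals contributed by $\structure_t$) are fixed. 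Thus enumerative sampling over the polynomially-many configurations in $\bigotimes_i\mathcal{T}_{n_{\eta^i},N_b}$ is still correct, with the validity check \code{ExSat} additionally rejecting any configuration yielding $q_i'\notin\nat$ (modification \textbf{M3}).

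The complexity argument is then the same as in Lemma~\ref{le:drsamplercomplexity} and the proof of Theorem~\ref{th:ctwoliftable}: the recursion has depth at most $n$; each call enumerates $\prod_i|\mathcal{T}_{n_{\eta^i},N_b}|$ configurations, which is polynomial in $n$ because each $|\mathcal{T}_{M,m}|$ is polynomial in $M$ and the number of block/cell types is independent of $n$; and each configuration requires one \symwfomc{} evaluation on a sentence of the form $\recursivesentence'\land\Upsilon'$. The key obstacle, and the step I would treat most carefully, is precisely the tractability of these inner \symwfomc{} calls: I must ensure they remain polynomial \emph{even though $\Upsilon'$ carries constraint parameters $q_i'$ that vary along the recursion and across configurations}. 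This is exactly where the stated contrast between counting quantifiers and cardinality constraints matters — the parameters $q_i$ are treated as part of the input instance (not fixed in the sentence), so I would appeal to the known result that symmetric \symwfomc{} for \ctwo{} with cardinality constraints is computable in time polynomial in both the domain size and the constraint bounds (this is the cardinality-constraint generalization of the liftability of \ctwo{} \cite{kuzelkaWeightedFirstorderModel2021}), combined again with Proposition~\ref{pro:liftable_unary_evidence} to absorb the unary evidence from the fixed 1-types and relaxed block types. Granting that counting result, every ingredient is polynomial and the sampler is lifted, which establishes the theorem.
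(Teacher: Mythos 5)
Your proposal is correct and follows essentially the same route as the paper's proof: reduce to normal form, reuse the 1-types stage, and argue that the modifications \textbf{M1}--\textbf{M3} to the domain-recursion sampler (with Lemma~\ref{lemma:ccmodular} replacing Lemma~\ref{lemma:ctwo-domain-recursion}) preserve polynomial complexity via the liftability of \ctwo{} with cardinality constraints and Proposition~\ref{pro:liftable_unary_evidence}. You in fact spell out some details the paper leaves implicit (e.g., that $\Upsilon'$ is determined by the 2-tables configuration, so enumerative sampling over configurations remains well-defined), but the argument is the same.
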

\begin{proof}
  We prove the sampling liftability of \ctwo{} with cardinality constraints by showing the sampler presented above is lifted.
  The complexity of sampling 1-types is polynomial in the domain size by the same argument as in the proof of Theorem~\ref{th:ufo_liftable} and the liftability of \ctwo{} with cardinality constraints~\cite{kuzelkaWeightedFirstorderModel2021}.
  Next, we show that the modifications $\mathbf{M1}, \mathbf{M2}$ and $\mathbf{M3}$ to Algorithm~\ref{alg:drsampler} do not affect the polynomial-time complexity of the algorithm.
  First, it can be observed that $\mathbf{M3}$ has a negligible impact on the algorithm's complexity, and $\mathbf{M2}$ does not affect the polynomial-time complexity of the algorithm.
  Then, by the liftability of \ctwo{} with cardinality constraints and Proposition~\ref{pro:liftable_unary_evidence}, we have that the new \wfomc{} problems with additional cardinality constraints are still liftable.
  % ~\footnote{It is worth noting that the data complexity of a \fotwo{} sentence $\sentence$ with cardinality constraints $\Upsilon$ is independent of the values $q_1,\dots,q_M$ in $\Upsilon$, and thus the reduction on $\Upsilon$ does not affect the complexity of counting problems on the reduced sentences.}. 
  Therefore, the entire complexity of Algorithm~\ref{alg:ccdrsampler} remains polynomial in the domain size, and thus the sampler combining the 1-types sampling and Algorithm~\ref{alg:ccdrsampler} is lifted, which completes the proof.
\end{proof}

% \lnote{I was thinking of whether to make it a separate section..}

\subsection{A More Efficient Sampler for \sctwo{}}

With the lifted sampler for \fotwo{} with cardinality constraints, we can provide a practically more efficient sampler for some subfragment of \ctwo{}\footnote{Although the sampler for \ctwo{} proposed in Section~\ref{sec:c2_liftable} has been proved to be lifted, which means its complexity is polynomial in the domain size, the exponents of the polynomials are usually very large.}.
% \lnote{add a footnote saying something about the exponents of the polynomials bounding the runtime}
Specifically, we focus on the fragment \sctwo{}, which consists of sentences of the form 
\begin{align*}
  \sentence_\forall&\land (\forall x\exists_{=k_1}y: \phi_1(x,y))\land\dots\land(\forall x\exists_{=k_{m}}:\phi_{m'}(x,y))\\
  &\land (\exists_{=k_1'}x\forall y: \phi'_1(x,y))\land\dots\land(\exists_{=k_{m'}'}x\forall y:\phi'_{m'}(x,y)),
\end{align*}
where $\sentence_\forall$ is a \ufotwo{} sentence.
We call this fragment \textit{two-variable logic with counting in SNF} (\sctwo{}), as the conjunction apart from $\sentence_\forall$ resembles SNFs.
The sentence~\eqref{eq:k-regular-graph} for encoding k-regular graphs is an example of \sctwo{} sentences.

The more efficient sampler for \sctwo{} draws inspiration from the work conducted by \citet{kuzelkaWeightedFirstorderModel2021}.
The primary findings of their study were partially obtained through a reduction from the \wfomc{} problem on \sctwo{} sentences to \ufotwo{} sentences with cardinality constraints.
We demonstrate that this reduction can be also applied to the sampling problem and it is sound.
The proof follows a similar technique used in \cite{kuzelkaWeightedFirstorderModel2021}, and the details are deferred to \ref{sub:sctworeduction}.

\begin{lemma}
  For any WFOMS problem $\mathfrak{S} = (\Phi, \domain,\weight, \negweight)$ where $\Phi$ is a \sctwo{} sentence, there exists a WFOMS problem $\mathfrak{S}'=(\sentence'\land\Upsilon,\domain,\weight', \negweight')$, where $\sentence'$ is a \ufotwo{} sentence, $\Upsilon$ denotes cardinality constraints of the form \eqref{eq:cc}, such that the reduction from $\mathfrak{S}$ to $\mathfrak{S}'$ is sound.
  \label{le:ctworeduction}
\end{lemma}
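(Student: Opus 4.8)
The plan is to recast the WFOMC reduction of \citet{kuzelkaWeightedFirstorderModel2021} as a \emph{sound} sampling reduction in the sense of Definition~\ref{de:soundness}, eliminating the counting conjuncts one fragment at a time. Since soundness composes transitively, I would first remove the $\exists_{=k'_j}x\forall y$ conjuncts and then the $\forall x\exists_{=k_i}y$ conjuncts, each stage introducing fresh predicates together with cardinality constraints, and taking the mapping $f$ to be the $\mathcal{P}_\sentence$-reduct $f(\mu')=\proj{\mu'}{\mathcal{P}_\sentence}$ that forgets the new predicates. The correctness criterion I would target is the one that makes such reduct maps sound: every model of $\sentence'\land\Upsilon$ must reduce to a model of $\Phi$, and every model of $\Phi$ must admit fresh extensions whose \emph{total weight} is a fixed constant $N$ independent of the model. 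If that holds, then $\symwfomc(\sentence'\land\Upsilon,\domain,\weight',\negweight')=N\cdot\symwfomc(\Phi,\domain,\weight,\negweight)$ and, per source model, the summed preimage weight scales by the same $N$, so the two conditional probabilities in Definition~\ref{de:soundness} coincide.

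The key observation I would exploit is that both quantifier shapes are really constraints on \emph{out-degrees} of a definable fresh binary relation. Axiomatizing $R_i(x,y)\Leftrightarrow\phi_i(x,y)$ by universal clauses appended to the \ufotwo{} part, the conjunct $\forall x\exists_{=k_i}y:\phi_i(x,y)$ says that every element has $R_i$-out-degree exactly $k_i$; dually, writing $R'_j(x,y)\Leftrightarrow\neg\phi'_j(x,y)$, the conjunct $\exists_{=k'_j}x\forall y:\phi'_j(x,y)$ says that exactly $k'_j$ elements have $R'_j$-out-degree $0$. Following \citet{kuzelkaWeightedFirstorderModel2021}, I would encode such out-degree statistics using fresh predicates (together with suitable unit or constant weights) whose admissible completions serve as \emph{certificates} of the out-degree condition, and then translate the certificate count into a cardinality constraint placed in $\Upsilon$. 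The reduction must be engineered so that these certificates are expressible with universal clauses and cardinality constraints only, so that the target sentence lies in \ufotwo{} augmented by $\Upsilon$.

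The main obstacle, and the reason a naive model-faithful reduction fails, is that a single global cardinality constraint on $R_i$ fixes only the \emph{total} number of $R_i$-edges, hence the average out-degree, and not the per-element count; so genuinely constraining degrees element by element forces the fresh-predicate gadget to carry this information. The real work therefore lies in showing that the gadget simultaneously (i) rules out every source structure that violates the counting quantifier, so that no spurious target model survives the reduct map, and (ii) produces the \emph{same} total extension weight $N$ for every model of $\Phi$ (depending only on the counting parameters $k_i,k'_j$ and the domain size, not on the model), which is exactly what the weight-scaling argument of the previous paragraph requires. Verifying (i) and (ii) is the crux; I would carry it out by the weight-accounting technique of \citet{kuzelkaWeightedFirstorderModel2021}, with the full construction deferred to \ref{sub:sctworeduction}.
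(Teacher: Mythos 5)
Your overall strategy coincides with the paper's: eliminate the two counting shapes in separate stages (the $\exists_{=k'_j}x\forall y$ conjuncts first, then the $\forall x\exists_{=k_i}y$ ones), take $f$ to be the reduct that forgets the fresh predicates, and establish soundness by showing that every source model has preimages of a fixed total weight $N$ independent of the model, so that the normalizer and the per-model preimage weight scale by the same factor and the conditional probabilities of Definition~\ref{de:soundness} coincide. That is exactly the paper's accounting (the factor is $(k!)^{|\domain|}$ for each $\forall x\exists_{=k}y$ conjunct). For the $\exists_{=k'}x\forall y$ shape the paper is simpler than your out-degree-zero framing: it introduces a unary marker $U$ with $\forall x: U(x)\Leftrightarrow(\forall y: P(x,y))$ and the constraint $|U|=k'$, which needs no degree gadget and has multiplicity $N=1$ since $U$ is determined by $P$.

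The gap is that you never exhibit the gadget for $\forall x\exists_{=k}y: P(x,y)$, and that gadget is the entire content of the lemma: you correctly diagnose the obstacle (a global cardinality constraint fixes only the total number of $P$-edges, not per-element out-degrees) and then defer precisely the step that overcomes it. The paper's construction is: introduce $k$ fresh binary predicates $R_1^P,\dots,R_k^P$ of weight $1$, assert $\forall x\forall y: P(x,y)\Leftrightarrow(R_1^P(x,y)\lor\dots\lor R_k^P(x,y))$, pairwise disjointness of the $R_i^P$, $\forall x\exists y: R_i^P(x,y)$ for each $i$, and the single constraint $|P|=k\cdot|\domain|$; a pigeonhole argument forces each $R_i^P$ to be a total function, hence $P$ to have out-degree exactly $k$ at every element, and each source model then has exactly $(k!)^{|\domain|}$ preimages of identical weight. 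Note also that this gadget \emph{retains} $\forall x\exists y$ conjuncts, so the target sentence is really \fotwo{} with cardinality constraints rather than a purely universal one; your stated aim of reaching "universal clauses and cardinality constraints only" cannot be met, because removing $\forall x\exists y$ by Skolemization introduces negative weights and an ill-defined sampling problem. The residual existential conjuncts are instead handled by the domain-recursion sampler with cardinality constraints (Algorithm~\ref{alg:ccdrsampler}).
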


Using the Lemma above and the lifted sampler in Algorithm~\ref{alg:ccdrsampler} for \ufotwo{} with cardinality constraints, it is easy to devise a lifted sampler for \sctwo{} without involving the counting quantifiers.
It is known that the counting algorithm for \ctwo{} sentences usually needs more complicated and sophisticated techniques than the one for \ufotwo{} sentences with cardinality constraints~\cite{kuzelkaWeightedFirstorderModel2021}.
Thus, the new sampler for \sctwo{} based on Lemma~\ref{le:ctworeduction} and Algorithm~\ref{alg:ccdrsampler} is more efficient and easier to implement than the one based on the sampling algorithm presented in Section~\ref{sec:c2_liftable}.
We also note that further generalizing this technique to the \ctwo{} sentences is infeasible, since the reduction from \ctwo{} to \sctwo{} for \wfomc{} used in \cite{kuzelkaWeightedFirstorderModel2021} introduced some negative weights on predicates, which would make the corresponding sampling problem ill-defined.
As a result, for WFOMS problems on general \ctwo{} sentences, one has to resort to the lifted sampling algorithm presented in Section~\ref{sec:c2_liftable}.

\section{Complexity Discussion}
\label{sec:complexity}

In this section, we delve further into the exact complexity of our sampling algorithms, and provide some insights into the factors that affect the complexity of the algorithms.
Given that our sampling algorithms heavily depend on the \wfomc{} solver, which is the primary bottleneck in the algorithms' execution, we will primarily focus on evaluating complexity in terms of the number of calls made to the \wfomc{} solver.
The domain size is denoted by $n$.

For the sampling of 1-types, all the algorithms presented in this paper employ the same technique, i.e., enumerating all possible partition configurations over the domain and sampling from them according to their weights.
Computing the weight of every partition configuration requires a call to the \wfomc{} solver (Line~2 in Algorithm~\ref{alg:1typesampler}).
There are at most $\mathcal{T}_{n, |U_{\mathcal{P}_\sentence}|}$ partition configurations, where $U_{\mathcal{P}_\sentence}$ is the set of 1-types in the sentence.
Observe that not all 1-types have to always be valid in the sentence, and the number of valid 1-types is usually much smaller than $|U_{\mathcal{P}_\sentence}|$ in practice.
For instance, in the sentence~\eqref{eq:2-color-graph} of 2-colored graphs, the number of valid 1-types is 2, i.e., red and black nodes, while the total number of 1-types is $2^3=8$.
Instead of enumerating configurations on all 1-types, the algorithm can enumerate configurations just on the valid 1-types.
Let $u$ be the number of valid 1-types in the sentence, then the enumeration complexity is bounded by $O(n^{u})$.
% Let $u$ be the number of unary and binary reflexive predicates in the sentence.
% The size of $U_{\mathcal{P}_\sentence}$ is bounded by $2^u$, and thus the number of partition configurations is bounded by $\mathcal{T}_{n, 2^u}$, which is in $O(n^{2^u})$.

For the sampling of 2-tables, the complexity of the algorithms varies depending on the fragment to which the sentence belongs.
\begin{itemize}
  \item For \ufotwo{}, the sampling of 2-tables is decomposed into independent sampling problems on pairs of elements, where no \wfomc{} call is needed, and the complexity of each problem is constant in the domain size.
  \item For \fotwo{}, there are at most $n$ recursion steps.
  In each step, the algorithm needs to solve one \wfomc{} problem for each reduced cell configuration.
  The number of reduced cell configurations is bounded by $\prod_{i\in[N_c]} \mathcal{T}_{|C_{\eta^i}|, N_b}$, where $N_c$ and $N_b$ is the number of cell types and 2-tables respectively, and $C_{\eta^i}$ is the set of elements whose cell types are $\eta^i$.
  Similar to the analysis for 1-types, we consider the number of valid cells and valid 2-tables.
  Let $m$ be the number of existentially quantified formulas in the SNF sentence.
  The number of valid cells is bounded by $u2^m$.
  For 2-tables, we consider the maximum number of 2-tables coherent with any 1-types tuple and denote it by $b$.
  % Supposing there are $m$ existentially quantified formulas and $b$ binary predicates in the SNF sentence, the number of 2-tables is bounded by $4^b$, and the number of cell types is bounded by $2^u\cdot 2^m = 2^{u+m}$.
  With these notations, the size of $C_{\eta^i}\le n$, $N_c\le u2^m$, $N_b\le b$, and thus the number $\prod_{i\in[N_c]} \mathcal{T}_{|C_{\eta^i}|, N_b}$ is in $O((n^b)^{u2^m}) = O(n^{ub2^m})$.
  The total number of \wfomc{} calls for sampling 2-tables is in $O(n^{ub2^m+1})$.
  \item For \ctwo{}, the complexity is similar to that of \fotwo{}, except that more block types are considered.
  Suppose there are $m'$ counting quantifiers in the normal form~\eqref{eq:ctwonromal} of \ctwo{}, and the counting parameters are bounded by $k$. 
  The notations of $u$ and $b$ are the same as those in \fotwo{}.
  Then the number of block types is bounded by $(2(k+1))^{m'}$, and the number of cell types is bounded by $u(2(k+1))^{m'}$.
  The total number of \wfomc{} calls for sampling 2-tables is in $O(n\cdot (n^b)^{u(2(k+1))^{m'}}) = O(n^{ub2^{m'}(k+1)^{m'}+1})$.
  \item For fragments with cardinality constraints, the cardinality constraints do not introduce additional \wfomc{} calls (see Algorithm~\ref{alg:ccdrsampler}), and thus the complexity is the same as the complexity for the corresponding fragment without cardinality constraints.
\end{itemize}

The overall complexity of the sampling algorithms for different fragments is summarized in Table~\ref{tab:complexity}.
It is important to note that the complexity provided is measured in terms of the number of \wfomc{} calls, and the actual runtime of the algorithms is influenced by the performance of the \wfomc{} solver. 
% For instance, the \wfomc{} solver~\cite{kuzelkaWeightedFirstorderModel2021} for \ctwo{} sentences generally exhibits significantly slower execution compared to the one~\cite{bremenFasterLiftingTwoVariable2021} for \fotwo{} and \ufotwo{} sentences. 
% Consequently, it is expected that the runtime of the sampling algorithms for \ctwo{} sentences is considerably slower than that for \fotwo{} and \ufotwo{} sentences.

\begin{table}[!tbp]
  \centering
  \caption{The complexity of the sampling algorithms for different fragments}
  \label{tab:complexity}
  \begin{tabular}{cc}
  \hline
  Fragments (with constraints)                               & \#WFOMC                                                    \\ \hline
  $\mathbf{UFO}^2$                                           & $O(n^u)$                                              \\
  $\mathbf{FO}^2$                                            & $O(n^{ub2^m+1})$                                      \\
  $\mathbf{C}^2$                                             & $O(n^{ub2^{m'}(k+1)^{m'}+1})$                        \\
  Cardinality constraints & The same as the corresponding fragment \\ \hline
  \end{tabular}
  \end{table}

\section{Experimental Results}
\label{sec:exp}

% \lnote{I think further discussion is needed on the applications of WFOMS of $C^2$ (or with cardinality constraints) as well as empirical evaluation of the proposed sampler for $C^2$. I would be interested to see how its runtime scales with increasing domain size.}

To assess the efficacy of our sampling algorithms, we conducted a series of experiments aimed at evaluating their performance.
We also conducted statistical tests to check the correctness of our implementation.
All algorithms were implemented in Python~\footnote{The code can be found in \url{https://github.com/lucienwang1009/lifted_sampling_fo2}} and the experiments were performed on a computer with an 8-core Intel i7 3.60GHz processor and 32 GB of RAM.

Many sampling problems can be expressed as WFOMS problems.
Here we consider two typical ones:
\begin{itemize}
  \item \textbf{Sampling combinatorial structures}: The uniform generation of some combinatorial structures can be directly reduced to a WFOMS problem, e.g., the uniform generation of \textit{graphs with no isolated vertices} and \textit{$k$-regular graphs} in Section~\ref{sub:an_intuitive_example} and the introduction.
  We added four more combinatorial sampling problems to these two for evaluation: \textit{functions}, \textit{functions w/o fix-points} (i.e., the functions $f$ satisfying $f(x) \neq x$), \textit{permutations} and \textit{permutations w/o fix-points}.
  The details of these problems are described in \ref{sub:expsettings}.
  \item \textbf{Sampling from MLNs}: Our algorithms can be also applied to sample possible worlds from MLNs.
  % An MLN is a finite set of tuples $(\weight, \formula)$, where $\weight$ is either a real number or $\infty$, and $\formula$ is a formula in first-order logic.
  An MLN defines a distribution over structures (i.e., possible worlds in SRL literature), and its respective sampling problem is to randomly generate possible worlds according to this distribution.
  There is a standard reduction from the sampling problem of an MLN to a WFOMS problem (see~\ref{sub:expsettings} and also \cite{wangDomainLiftedSamplingUniversal2022}).
  We use three MLNs in our experiments: 
  \begin{itemize}
    \item A variant of the classic \textit{friends-smokers} MLN with the constraint that every person has at least one friend:
    \begin{align*}
      \{&(+\infty, \forall x: \neg fr(x,x)), \\
      &(+\infty, \forall x\forall y: fr(x,y)\Rightarrow fr(y,x)), \\
      &(+\infty, \forall x\exists y: fr(x,y))\\
      &(0, sm(x)),\\
      &(0.2, fr(x,y)\land sm(x)\Rightarrow sm(y))\}.
    \end{align*}
    \item The \textit{employment} MLN used in \cite{vandenbroeck2014Proc.FourteenthInt.Conf.Princ.Knowl.Represent.Reason.}:
    \begin{equation*}
      \{(1.3, \exists y: workfor(x,y)\lor boss(x))\},
    \end{equation*}
    which states that with high probability, every person either is employed by a boss or is a boss.
    \item The \textit{deskmate} MLN:
    \begin{align*}
      \{&(+\infty, \forall x: \neg mate(x,x)\land\neg fr(x,x)), \\
      &(+\infty, \forall x\forall y: mate(x,y)\Rightarrow mate(y,x)), \\
      &(+\infty, \forall x\exists_{=1}y: mate(x,y)), \\
      &(+\infty, \forall y\exists_{=1}x: mate(x,y)), \\
      &(1.0, mate(x,y)\Rightarrow fr(x,y))\},
    \end{align*}
    which states that every student has exactly one deskmate, and if two students are deskmates, then they are probably friends.
  \end{itemize}
  The details about the reduction from sampling from MLNs to WFOMS problems and the resulting WFOMS problems of these two MLNs can be found in \ref{sub:expsettings}.
  % \lnote{An example of MLNs with counting quantifiers}
\end{itemize}

\subsection{Correctness of the Implementation}

We first performed a statistical test on our implementation by focusing on the uniform generation of combinatorial structures within small domains, where exact sampling is feasible via enumeration-based techniques; 
we choose the domain size of $5$ for evaluation.
To serve as a benchmark, we have implemented a simple ideal uniform sampler, denoted by IS, by enumerating all the models and then drawing samples uniformly from these models; this is also why we use such a small domain consisting only of five elements in this experiment.
For each combinatorial structure encoded into an \fotwo{} sentence $\sentence$, a total of $100 \times |\fomodels{\sentence}{\domain}|$ models were generated from both IS and our weighted model sampler.
Figure~\ref{fig:uniformity} depicts the model distribution produced by these two algorithms---the horizontal axis represents models numbered lexicographically, while the vertical axis represents the generated frequencies of models.
The figure suggests that the distribution generated by our weighted model sampler is indistinguishable from that of IS.
Furthermore, a statistical test on the distributions produced by the weighted model sampler was performed, and no statistically significant difference from the uniform distribution was found.
The details of this test can be found in \ref{sub:expresults}.
% \lnote{The Jensen-Shannon distance is token from Unigen's paper. I feel like it's more easy to understand than DKW inequality.}
% In particular, the Jensen-Shannon distances between the distributions from IS and WMS are $0.047$, $0.049$, $0.050$, $0.045$ and $0.051$ respectively.

\begin{figure}[!tb]
  \centerline{\includegraphics[width=.8\textwidth]{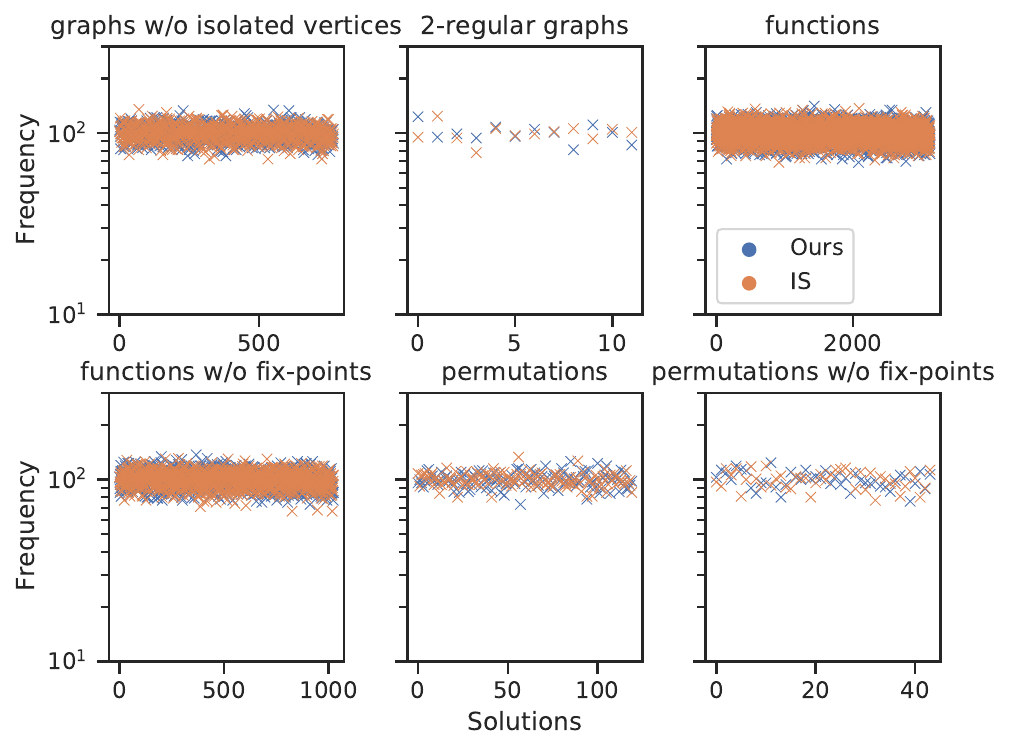}}
  \caption{Uniformity comparison between an ideal sampler (IS) and our weighted model sampler.}
  \label{fig:uniformity}
\end{figure}

\begin{figure}[!tb]
  \centering
  \begin{subfigure}[b]{0.7\textwidth}
      \centerline{
        \includegraphics[width=\textwidth]{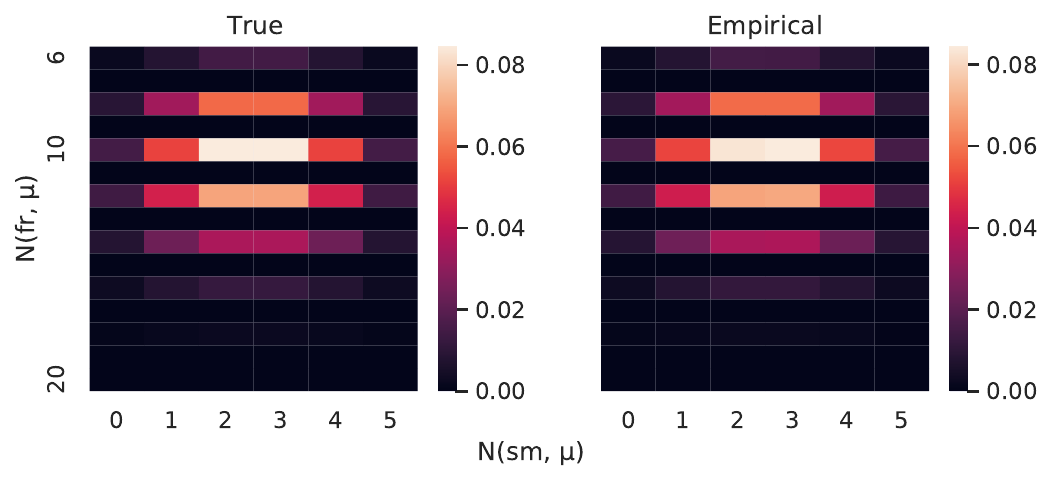}
      }
      \caption{friends-smokers}
      \label{fig:fr-sm}
  \end{subfigure}
  \begin{subfigure}[b]{0.7\textwidth}
      \centerline{
        \includegraphics[width=\textwidth]{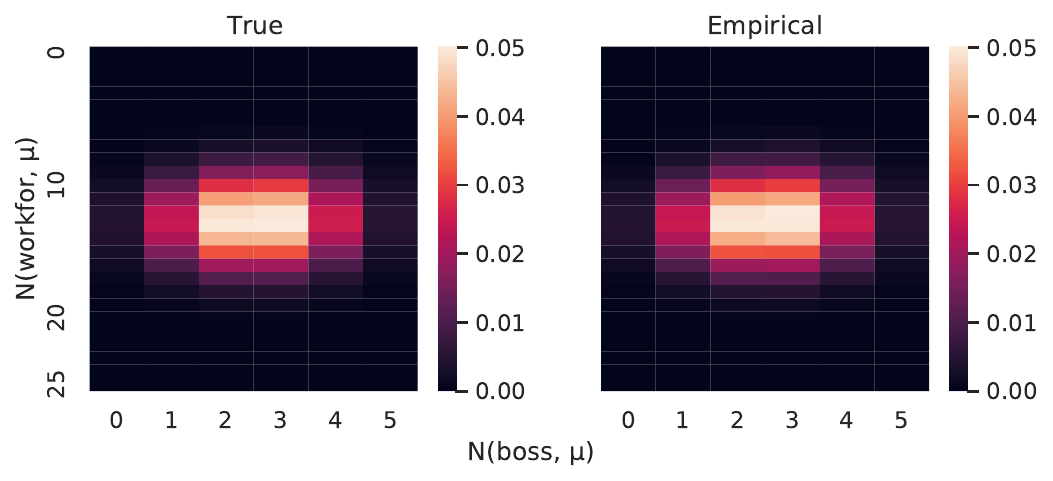}
      }
      \caption{employment}
      \label{fig:employ}
  \end{subfigure}
  \begin{subfigure}[b]{0.7\textwidth}
      \centerline{
        \includegraphics[width=\textwidth]{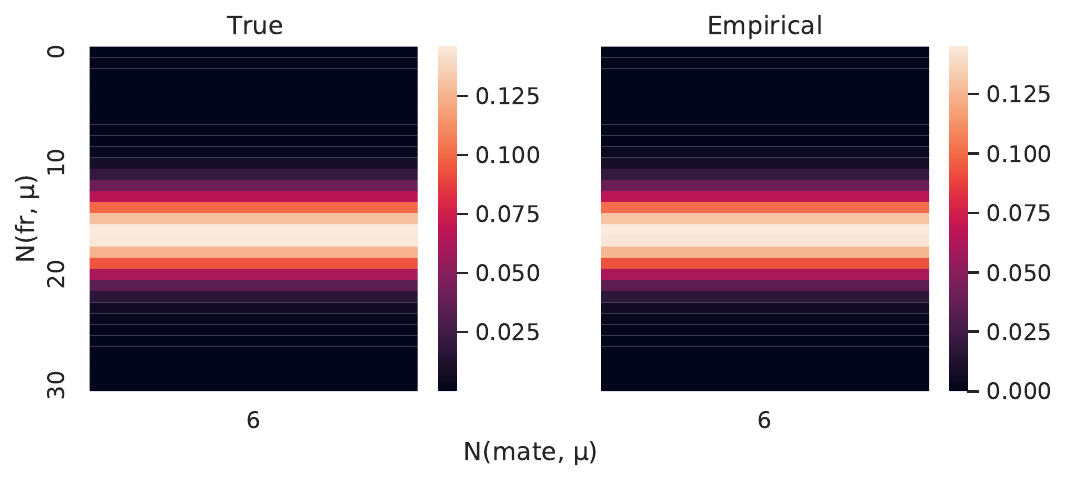}
      }
      \caption{deskmate}
      \label{fig:deskmate}
  \end{subfigure}
  \caption{Conformity testing for the count distribution of MLNs.}
  \label{fig:mlnconformity}
\end{figure}

For sampling problems from MLNs, enumerating all the models is infeasible even for a domain of size $5$, e.g., there are $2^{5^2 + 5} = 2^{30}$ models in the employment MLN.
That is why we test the \textit{count distribution} of predicates from the problems.
Instead of specifying the probability of each model, the count distribution only tells us the probability that a certain number of predicates are interpreted to be true in the models.
An advantage of testing count distributions is that they can be efficiently computed for our MLNs.
Please refer to~\cite{kuzelkaWeightedFirstorderModel2021} for more details about count distributions.
We also note that the conformity of count distribution is a necessary condition for the correctness of algorithms.
We kept the domain size to $5$ for friends-smokers and employment MLNs and set it to $6$ for the deskmate MLN (due to the counting quantifiers in the MLN).
We would like to emphasize that the choice of small domains was not made for scalability purposes, but rather to facilitate the statistical tests.
In the following section, we will demonstrate that our approach can indeed scale to larger domains.
A total number of $10^5$ models were generated from the weighted model sampler for each MLN.
The empirical distributions of count-statistics, along with the true count distributions, are shown in Figure~\ref{fig:mlnconformity}.
% Recall that the notation $N(P,\mu)$ indicates the number of positive ground literals for the predicate $P$ in the model $\mu$.
It is easy to check the conformity of the empirical distribution to the true one from the figure.
% We also computed the Jensen-Shannon distances between the empirical and true distributions for these two MLNs, resulting in $0.008$ and $0.013$ respectively.
The statistical test was also performed on the count distribution, and the results confirm the conclusion drawn from the figure (also see \ref{sub:expresults}).

\subsection{Performance}

\begin{figure}[!tb]
  \centering
  \includegraphics[width=.95\textwidth]{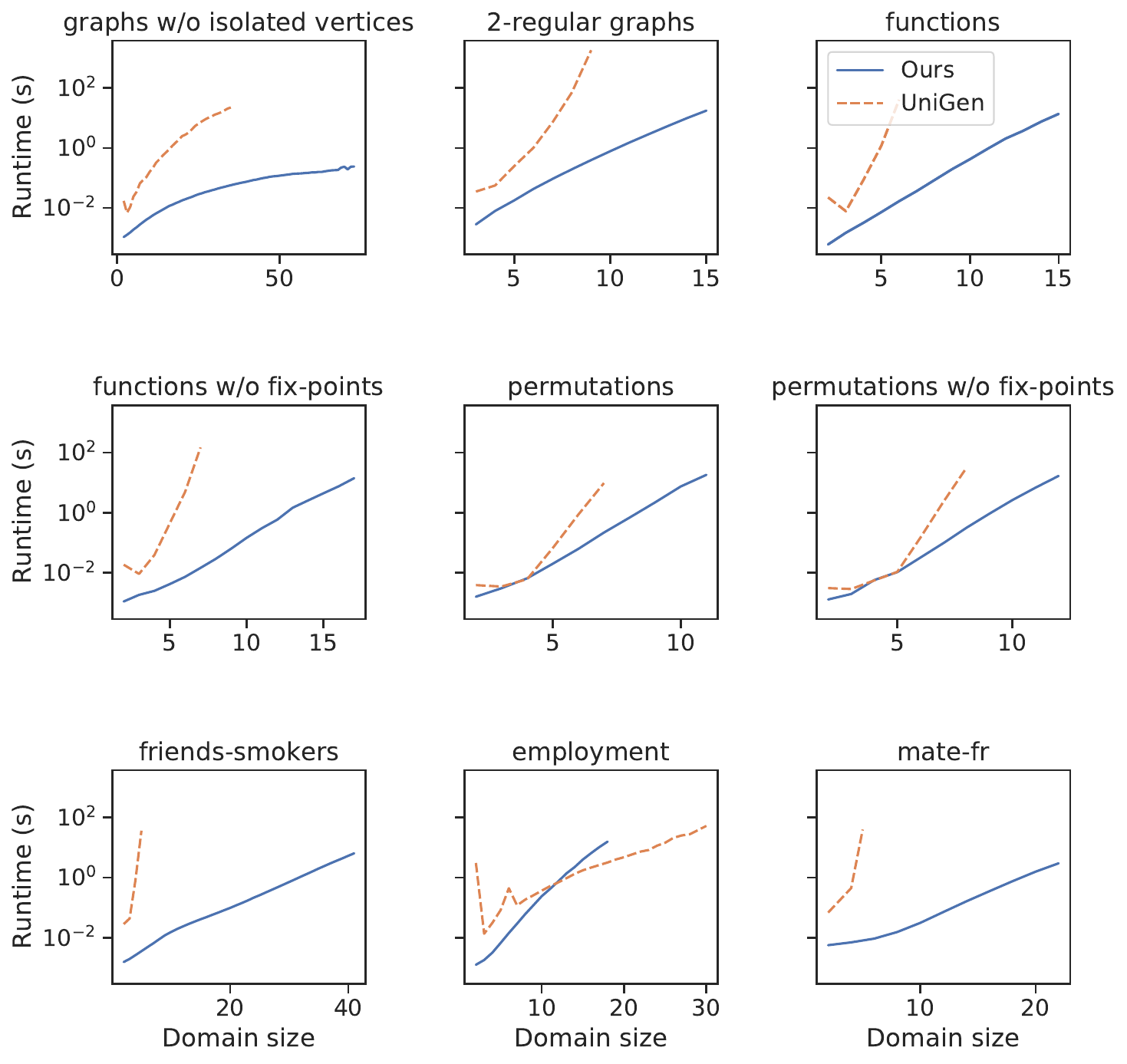}
  \caption{Performance of weighted model sampler versus UniGen.}
  \label{fig:performance}
\end{figure}

To evaluate the performance, we compare our weighted model samplers with Unigen\footnote{\url{https://github.com/meelgroup/unigen}}~\cite{chakrabortyScalableNearlyUniform2013,soosTintedDetachedLazy2020}, the state-of-the-art approximate sampler for Boolean formulas.
Note that there is no guarantee of polynomial complexity for Unigen, and its runtime highly depends on the underlying SAT solver it uses.
We reduce the WFOMS problems to the sampling problems of Boolean formulas by grounding the input first-order sentence over the given domain.
Since Unigen only works for uniform sampling, we employ the technique in~\cite{chakrabortyWeightedUnweightedModel2015} to encode the weighting function in the WFOMS problem into a Boolean formula.

For each sampling problem, we randomly generated $1000$ models by our weighted model sampler and Unigen respectively, and computed the average sampling time per one model.
The performance comparison is shown in Figure~\ref{fig:performance}.
In most cases, our approach is much faster than UniGen.
The exception in the employment MLN, where UniGen performed better than the weighted model sampler, is likely due to the simplicity of this specific instance (almost every structure is a model of this MLN).
This coincides with the theoretical result that our weighted model sampler is polynomial-time in the domain size, while UniGen usually needs a significant number of expensive SAT calls on the grounded formulas.

The full \ctwo{} sampler proposed in Section~\ref{sec:c2_liftable} was not implemented in our study due to the extremely slow computational speed of the existing \wfomc{} solver~\cite{kuzelkaWeightedFirstorderModel2021} for full \ctwo{}, despite its polynomial complexity. 
Developing faster \wfomc{} algorithms is orthogonal to the present work where we only use the existing solver as a black box.

\section{Related Work}
\label{sec:related_work}

The studies on model counting and sampling problems date back to the 1970s~\cite{valiant1979SIAMJ.Comput.,valiant1979Theor.Comput.Sci.,jerrumRandomGenerationCombinatorial1986}.
In the seminal paper by \citet{jerrumRandomGenerationCombinatorial1986}, a significant connection was established between the random generation of combinatorial structures (i.e., model sampling) and the problem of model counting. 
Specifically, this paper showed that the random generation of combinatorial structures can be reduced in polynomial time to the model counting problem, under the condition of \textit{self-reducibility}.
Self-reducibility refers to the property where the solution set for a given instance of a problem can be expressed in terms of the solution sets of smaller instances of the same problem. 
In the context of propositional logic, self-reducibility naturally holds as the solution set of a Boolean formula $\sentence$ can be expressed in terms of the solution sets of $\sentence_{a}$ and $\sentence_{\neg a}$, where $a$ is an arbitrary atom in $\sentence$, and $\sentence_{a}$ and $\sentence_{\neg a}$ are the formulas obtained from $\sentence$ by setting $a$ to be true and false respectively.
This property allows for a polynomial-time reduction from WMS to WMC, where the truth value of each atom in $\sentence$ is incrementally sampled based on the ratio of the WMC of $\sentence_a$ and $\sentence_{\neg a}$.
However, in first-order logic, self-reducibility is not generally guaranteed. 
Conditioning on a ground atom in a first-order sentence may make the resulting problem intractable, even if the original problem is tractable.
For example, \citet{vandenbroeckConditioningFirstorderKnowledge2012} have proven that there exists a \ufotwo{} sentence with \textit{binary} ground atoms, whose model count cannot be computed in time polynomial in the size of the binary atoms, unless \P=\NP.
This result implies that the reduction derived from self-reducibility is not applicable to WFOMS, motivating this paper to develop new techniques for sampling from first-order formulas.

% \lnote{Since the paper does mention domain-liftability for weighted first-order model counting, which is an algorithm for probabilistic inference, it would have been nice to also highlight the other domain-lifted algorithms out there like like Lifted Variable Elimination and the Lifted Junction Tree algorithm. I am not sure if someone ever showed domain-liftability for probabilistic theorem proving but that would be another candidate since it also works with a lifted model count definition. There is work going in in relational logistic regression that might also have lead to domain-liftability results at this point. These works might not be directly related to sampling and may not use a WFOMC problem as a basis but they add to the research landscape and would deserve to be highlighted.}

The approach taken in this paper, as well as the formal \textit{liftability} notions considered here, are inspired by the \textit{lifted inference} literature~\cite{liftedinferencebook}. 
In lifted inference, the goal is to perform probabilistic inference in statistical-relational models in a way that exploits symmetries in the high-level structure of the model. 
Models that are amenable to scalable inference as domain size (or population size) increases are dubbed \textit{domain-liftable}, in a similar spirit to the notion of \textit{domain-liftability under sampling} presented here. 
There exists a very extensive literature on lifted inference, which has been explored from both graphical models and logic perspectives. 
In the graphical models literature, the focus is on leveraging the symmetries of random variables and factor graphs. 
This involves grouping symmetric variables and factors into their parameterized counterparts, leading to the construction of a relational graphical model on which lifted inference can be performed~\cite{poole2003first,richardsonMarkovLogicNetworks2006,DBLP:conf/kr/KazemiBKNP14}.
The objective of these approaches is typically to extend the inference algorithms used in traditional graphical models to relational models. 
Examples of such algorithms include lifted variable elimination~\cite{poole2003first,DBLP:conf/ijcai/BrazAR05,DBLP:conf/aaai/MilchZKHK08,DBLP:journals/jair/TaghipourFDB13}, lifted belief propagation~\cite{singla2008AAAI,kerstingCountingBeliefPropagation2009}, and lifted junction tree~\cite{DBLP:conf/ki/BraunM16}.
Another strand of research in the lifted inference models the relations between random variables using propositional or first-order logic, and then performs inference by exploiting the symmetries in the logical structure of the model~\cite{vandenbroeckLiftedProbabilisticInference2011,vandenbroeckCompletenessFirstorderKnowledge2011,vandenbroeck2014Proc.FourteenthInt.Conf.Princ.Knowl.Represent.Reason.,DBLP:conf/ijcai/BremenK20,gribkoffUnderstandingComplexityLifted2014}.
In these studies, the \wfomc{} problem serves as an ``assembly language'' for the probabilistic inference problem, and the goal is to develop efficient algorithms for solving the \wfomc{} problem.
Of particular interest is the work by~\citet[Appendix~C]{beameSymmetricWeightedFirstOrder2015}, which studies the data complexity of \wfomc{} of \ufotwo{}. 
The general argument used there---namely, the analysis of a two-variable sentence in terms of its cell types---forms a basis for our sampling approaches discussed in the paper.
Finally, though not directly related to our work, we note that the lifting notion, which groups symmetric components in a problem and performs computation on a higher level, has also been explored in other areas, such as probabilistic theorem proving~\cite{DBLP:conf/uai/GogateD11a} as well as linear and convex quadratic programming~\cite{DBLP:journals/jmlr/MladenovAK12,DBLP:conf/aaai/MladenovKK17}.

% The symmetric weighted first-order model sampling problem was first proposed and studied in~\cite{wangDomainLiftedSamplingUniversal2022}.
% The approach, as well as the formal \textit{liftability} notions considered in that study, were derived from the literature on \textit{lifted inference}~\cite{poole2003first,richardsonMarkovLogicNetworks2006,vandenbroeckLiftedProbabilisticInference2011}.
% In lifted inference, the goal is to perform probabilistic inference in SRL models in a way that takes advantage of the \textit{symmetries} in the high-level structure of the models.
% The symmetry also exists in WFOMS and is a vital property leveraged in the present paper to prove the sampling liftability of \fotwo{}.
% We note here that the importance of symmetry for lifted inference (and its reduced WFOMC) has also been extensively discussed by~\citet{beameSymmetricWeightedFirstOrder2015}.

The domain recursion scheme, another important approach adopted in this paper, is similar to the \textit{domain recursion rule} used in weighted first-order model counting~\cite{broeckCompletenessFirstOrderKnowledge2011,kazemi2016new,kazemiDomainRecursionLifted2017,toth2022ArXivPrepr.ArXiv221101164}.
The domain recursion rule for \wfomc{} is a technique that uses a gradual grounding process on the input first-order sentence, where only one element of the domain is grounded at a time. 
As each element is grounded, the partially grounded sentence is simplified until the element is entirely removed, resulting in a new \wfomc{} problem with a smaller domain. 
With the domain recursion rule, one can apply the principle of induction on the domain size, and solve the counting problem by dynamic programming.
A closely related work to this paper is the approach presented by \citet{kazemi2016new}, where they used the domain recursion rule to solve the \wfomc{} problem on \fotwo{} sentences without using Skolemization~\cite{vandenbroeck2014Proc.FourteenthInt.Conf.Princ.Knowl.Represent.Reason.}, which introduces negative weights.
However, it is important to note that their approach can be only applied to some specific first-order formulas, whereas the domain recursion scheme presented in this paper, mainly designed for eliminating the existentially-quantified formulas, supports the entire \ctwo{} fragment with cardinality constraints.

It is also worth mentioning that weighted model sampling is a relatively well-studied area~\cite{gomesModelCounting2009,chakrabortyScalableNearlyUniform2013,chakrabortyParallelScalableUniform2015}.
However, many real-world problems can be represented more naturally and concisely in first-order logic, and suffer from a significant increase in formula size when grounded out to propositional logic. 
For example, a formula of the form $\forall x\exists y: \extformula$ is encoded as a Boolean formula of the form $\bigwedge_{i=1}^n\bigvee_{j=1}^n l_{i,j}$, whose length is quadratic in the domain size $n$. 
Since even finding a solution to a such large ground formula is challenging, most sampling approaches for propositional logic instead focus on designing approximate samplers. 
We also note that these approaches are not polynomial-time in the length of the input formula, and rely on access to an efficient SAT solver.
An alternative strand of research~\cite{guo2019uniform,he2021perfect,feng2021sampling} on combinatorial sampling, focuses on the development of near-uniform and efficient sampling algorithms. 
However, these approaches can only be employed for specific Boolean formulas that satisfy a particular technical requirement known as the Lovász Local Lemma. 
The WFOMS problems studied in this paper do not typically meet the requisite criteria for the application of these techniques.

\section{Conclusion and Future Work}
\label{sec:conclusion}

In this paper, we prove the domain-liftability under sampling for the \ctwo{} fragment.
The result is further extended to the fragment of \ctwo{} with the presence of cardinality constraints.
The widespread applicability of WFOMS renders the proposed approach a promising candidate to serve as a universal paradigm for a plethora of sampling problems.

A potential avenue for further research is to expand the methodology presented in this paper to encompass more expressive first-order languages. 
Specifically, the domain recursion scheme employed in this paper could be extended to other tractable fragments, as its analogous technique, the domain recursion rule, has been demonstrated to be effective in proving the domain-liftability of the fragments $\mathbf{S}^2\fotwo{}$ and $\mathbf{S}^2\mathbf{RU}$ for \wfomc{}~\cite{kazemiDomainRecursionLifted2017}.

In addition to extending the input logic, other potential directions for future research include incorporating elementary axioms, such as tree axiom~\cite{vanbremenLiftedInferenceTree2021} and linear order axiom~\cite{toth2022ArXivPrepr.ArXiv221101164}, as well as more general weighting functions that involve negative weights. 
However, it is important to note that these extensions would likely require a more advanced and nuanced approach than the one proposed in this paper, and may present significant challenges.

Finally, the lower complexity bound of WFOMS is also an interesting open problem.
We have discussed in the introduction that it is unlikely for an (even approximate) lifted weighted model sampler to exist for full first-order logic. 
However, the establishment of a tighter lower bound for fragments of FO, such as $\mathbf{FO}^3$, remains an unexplored and challenging area that merits further investigation.

\section*{Acknowledgement}

% The authors would like to thank the anonymous reviewers for their helpful comments. 
Yuanhong Wang and Juhua Pu are supported by the National Key R\&D Program of China (2021YFB2104800) and the National Science Foundation of China (62177002). Ond\v{r}ej Ku\v{z}elka's work is supported by the Czech Science Foundation project 20-19104Y and partially also 23-07299S (most of the work was done before the start of the latter project).

\clearpage

\appendix

% \section{Appendix}

\section{\wfomc{} with Unary Evidence}
\label{sec:wfomc-unary}

In this section, we show how to deal with unary evidence in conditional \wfomc{}.

\citet{vandenbroeckConditioningFirstorderKnowledge2012} handled the unary evidence by the following transformation of the input sentence $\sentence$.
For any unary predicate $P$ appearing in the evidence, they split the domain into $\domain_\top^P$, $\domain_\bot^P$ and $\domain_\emptyset^P$, where $\domain_\top^P$ and $\domain_\bot^P$ contains precisely the elements with evidence $P(a)$ and $\neg P(a)$, respectively, and $\domain_\emptyset^P = \domain \setminus(\domain_\top^P\cup \domain_\bot^P)$ is the remaining elements.
Then the sentence $\sentence$ was transformed into
\begin{align*}
  \left(\forall x\in\domain_\top^P: \sentence_\top^P\right)\land \left(\forall x\in\domain_\bot^P: \sentence_\bot^P\right)\land \left(\forall x\in\domain_\emptyset^P: \sentence\right),
\end{align*}
where $\sentence_\top^P$ and $\sentence_\bot^P$ were obtained from $\sentence$ by replacing all occurrences of $P$ with $\mathsf{True}$ and $\mathsf{False}$, respectively, and $\forall x\in \domain'$ was a domain constraint that restricts the quantifier to the domain $\domain'$.
The procedure could be repeated to support multiple unary predicates, and the resulting sentence was then compiled into an FO d-DNNF circuits~\cite{vandenbroeckLiftedProbabilisticInference2011} for model counting.
The domain constraints are natively supported by FO d-DNNF circuits, and the compilation and model counting have been shown to be in time polynomial in the domain size.

However, not all \wfomc{} algorithms can effectively support the domain constraints, and efficiently count the model of the transformed sentence.
In the following, we provide a simpler approach to deal with the unary evidence, without the need for domain constraints.

We first introduce the notion of evidence type.
An \textit{evidence type} $\sigma$ over a predicate vocabulary $\mathcal{P}$ is a consistent set of 1-literals formed by $\mathcal{P}$.
For instance, both $\{P(x), \neg Q(x)\}$ and $\{P(x)\}$ are evidence types over $\{P/1, Q/1\}$.
The evidence type can be also viewed as a conjunction of its elements, where $\sigma(x)$ denotes a quantifier-free formula.
If the evidence type $\sigma$ is an empty set, then $\sigma(x)$ is defined as $\top$.
The number of evidence types over $\mathcal{P}$ is finite, and independent of the size of the domain.
Given a set $L$ of ground 1-literals, the evidence type of an element is defined as the set of all literals in $L$ that are associated with the element.
For example, if $L = \{P(a), \neg Q(a), R(b)\}$ and $\domain = \{a, b, c\}$, then the evidence type of $a$ is $\{P(x), \neg Q(x)\}$, the evidence type of $b$ is $\{R(x)\}$, and the evidence type of $c$ is $\emptyset$.

\liftableunaryevidence*
\begin{proof}
  Let $\sigma^1, \sigma^2,\dots, \sigma^m$ be the distinct evidence types of elements given by $L$.
  The number of elements with evidence type $\sigma^i$ is denoted by $n_i$.
  We first transform the input sentence into
  \begin{equation*}
    \sentence' := \sentence\land \bigwedge_{i\in[m]} (\forall x: \xi^i(x)\Rightarrow \sigma^i(x))\land \Lambda,
  \end{equation*}
  where each $\xi^i/1$ is a fresh predicate for $\sigma^i$ with the weight $\weight(\xi^i) = \negweight(\xi^i) = 1$, and
  \begin{equation}
    \Lambda = \left(\forall x: \bigvee_{i\in[m]} \xi^i(x)\right)\land \left(\forall x: \bigwedge_{i,j\in[m]: i\neq j} (\neg \xi^i(x) \lor \neg \xi^j(x))\right).
    \label{eq:wfomc-evidence-exactlyone}
  \end{equation}
  The formula $\Lambda$ states that
  each element $a$ has exactly one $\xi^i(a)$ that is \textsf{True}.
  In other words, the interpretation of predicates $\xi^i$ can be seen as a partition of the domain, where each disjoint subset in the partition contains precisely the elements with evidence type $\sigma^i$.
  Then we have 
  \begin{equation}
    \begin{aligned}
      \symwfomc(&\sentence\land\bigwedge_{l\in L}l, \domain, \weight, \negweight) = \\
      &\frac{1}{\binom{n}{n_1, n_2, \dots, n_m}}\cdot \symwfomc(\sentence'\land \bigwedge_{i\in[m]} (|\xi^i| = n_i), \domain, \weight, \negweight),
    \end{aligned}
    \label{eq:wfomc-evidence-cc}
  \end{equation}
  where $|\xi^i| = n_i$ is a cardinality constraint that restricts the number of \textsf{True} $\xi^i$ in the model to be $n_i$.

  The reasoning is as follows.
  Denote by $k_1, k_2, \dots, k_n$ the indices of evidence type of each element in the domain, i.e., $\sigma^{k_i}$ is the evidence type of $i$-th element in the domain, and $\xi^{k_i}$ is its corresponding predicate.
  We first show that
  \begin{equation}
    \symwfomc(\sentence\land\bigwedge_{l\in L}l, \domain, \weight, \negweight) = \symwfomc(\sentence'\land \bigwedge_{i\in[n]} \xi^{k_i}(e_i), \domain, \weight, \negweight),
    \label{eq:wfomc-evidence-cc-2}
  \end{equation}
  where $e_i$ is the $i$-th element in the domain.
  The proof is built on the fact that the reduct mapping $\proj{\cdot}{\mathcal{P}_\sentence}$ is a bijective function from the models of $\Psi' = \sentence'\land \bigwedge_{i\in[n]} \xi^{k_i}(e_i)$ to the models of $\Psi=\sentence\land\bigwedge_{l\in L}l$.
  First, it is easy to show that the mapping $\proj{\cdot}{\mathcal{P}_\sentence}$ is a function from the models of $\Psi'$ to the models of $\Psi$, since for any model $\mu'$ of $\Psi'$, $\proj{\mu'}{\mathcal{P}_\sentence}$ is a model of $\Psi$.
  Second, the mapping $\proj{\cdot}{\mathcal{P}_\sentence}$ is injective.
  This is because, for any two models $\mu_1'$ and $\mu_2'$ of $\Psi'$, they must share the same interpretation of the predicates $\xi^i$ due to the constraint $\Lambda$ and the evidence $\xi^{k_i}(e_i)$; and if $\proj{\mu_1'}{\mathcal{P}_\sentence} = \proj{\mu_2'}{\mathcal{P}_\sentence}$, it follows that these two models also have the same interpretation of the predicates in $\mathcal{P}_\sentence$, and thus $\mu_1' = \mu_2'$.
  Finally, the mapping is also surjective: for any model $\mu$ of $\Psi$, we can always construct a model $\mu'$ of $\Psi'$ by expanding $\mu$ with the evidence $\xi^{k_i}(e_i)$ for each $i\in[n]$, such that $\proj{\mu'}{\mathcal{P}_\sentence} = \mu$.
  Thus, we have \eqref{eq:wfomc-evidence-cc-2}.

  Next, let us consider the two \wfomc{} problems of $\sentence'\land \bigwedge_{i\in[m]} (|\xi^i| = n_i)$ and $\sentence'\land \bigwedge_{i\in[n]} \xi^{k_i}(e_i)$.
  Observe that any interpretation of $\xi^i$ that satisfies $\sentence'$ can be viewed as an order-dependent partition of the domain with the configuration of the partition being $(|\xi^1|, |\xi^2|, \dots, |\xi^m|)$.
  Thus, $\symwfomc(\sentence'\land \bigwedge_{i\in[m]} (|\xi^i| = n_i), \domain, \weight, \negweight)$ can be written as the summation of $\symwfomc(\sentence'\land \bigwedge_{i\in[n]} \xi^{k_i'}(e_i), \domain, \weight, \negweight)$ over all possible interpretations $\xi^{k_i'}(e_i)$, whose corresponding partition configuration is $(n_1, \dots, n_m)$.
  There are totally $\binom{n}{n_1, n_2, \dots, n_m}$ such interpretations, and the interpretation $\xi^{k_i}(e_i)$ is one of them.
  Moreover, due to the symmetry of \wfomc{}, all these interpretations result in the same result of \wfomc{}.
  Thus, we can write
  \begin{equation}
    \begin{aligned}
      \symwfomc(&\sentence'\land \bigwedge_{i\in[m]} (|\xi^i| = n_i), \domain, \weight, \negweight) \\
      &= \binom{n}{n_1, n_2, \dots, n_m}\cdot \symwfomc(\sentence'\land \bigwedge_{i\in[n]} \xi^{k_i}(e_i), \domain, \weight, \negweight).
    \end{aligned}
    \label{eq:wfomc-evidence-cc-3}
  \end{equation}
  Combining \eqref{eq:wfomc-evidence-cc-2} and \eqref{eq:wfomc-evidence-cc-3} yields \eqref{eq:wfomc-evidence-cc}.
  Finally, computing \eqref{eq:wfomc-evidence-cc} has been shown to be in time polynomial in the domain size in \cite{vandenbroeckLiftedProbabilisticInference2011} for any domain-liftable sentence $\sentence$, which completes the proof.
\end{proof}

\section{Normal Forms}

\subsection{Scott Normal Form}
\label{sub:snf}

We briefly describe the transformation of \fotwo{} formulas to SNF and prove the soundness of its corresponding reduction on the WFOMS problems.
The process is well-known, so we only sketch the related details.
Interested readers can refer to~\citet{kuusistoWeightedModelCounting2018} for more details.

Let $\sentence$ be a sentence of \fotwo{}.
To put it into SNF, consider a subformula $\varphi(x) = Qy: \phi(x, y)$, where $Q\in\{\forall, \exists\}$ and $\phi$ is quantifier-free.
Let $A_\varphi$ be a fresh unary predicate\footnote{If $\varphi(x)$ has no free variables, e.g., $\exists x: \phi(x)$, the predicate $A_\varphi$ is nullary.} and consider the sentence
\begin{equation*}
  \forall x: (A_\varphi(x) \Leftrightarrow (Qy: \phi(x,y)))
\end{equation*}
which states that $\varphi(x)$ is equivalent to $A_\varphi(x)$.
Let $Q'$ denote the dual of $Q$, i.e., $Q' \in \{\forall,\exists\}\setminus \{Q\}$, this sentence can be seen equivalent to
\begin{align*}
  \sentence' := &\forall x Qy: (A_\varphi(x) \Rightarrow \phi(x, y)) \\
  &\land \forall x Q'y: (\phi(x,y) \Rightarrow A_\varphi(x)).
\end{align*}

Let 
\begin{equation*}
  \sentence'' := \sentence' \land \sentence[\varphi(x)/A_\varphi(x)],
\end{equation*} where $\sentence[\varphi(x)/A_\varphi(x)]$ is obtained from $\sentence$ by replacing $\varphi(x)$ with $A_\varphi(x)$.
For any domain $\domain$, every model of $\sentence''$ over $\domain$ can be mapped to a unique model of $\sentence$ over $\domain$.
The bijective mapping function is simply the projection $\proj{\cdot}{\mathcal{P}_\sentence}$.
Let both the positive and negative weights of $A_\varphi$ be $1$ and denote the new weighting functions as $\weight'$ and $\negweight'$.
It is clear that the reduction from $(\sentence, \domain, \weight, \negweight)$ to $(\sentence'', \domain, \weight', \negweight')$ is sound.
Repeat this process from the atomic level and work upwards until the sentence is in SNF.
The whole reduction remains sound due to the transitivity of soundness.

\subsection{Normal Form for \ctwo{}}
\label{sub:ctwonormalform}

We show that any \ctwo{} sentence can be converted into the normal form:
\begin{equation*}
  \sentence_{\forall} \land \bigwedge_{i\in[m]}(\forall x: A_i(x) \Leftrightarrow (\exists_{=k_i} y: R_i(x,y))),
\end{equation*}
where $\sentence_{\forall}$ is a \fotwo{} sentence, each $k_i$ is a non-negative integer, $R_i(x,y)$ is an atomic formula, and $A_i$ is an unary predicate.
The process is as follows:
\begin{enumerate}
  \item Convert each counting-quantified formula of the form $\exists_{\ge k} y: \phi(x,y)$ to $\neg (\exists_{\le k-1} y: \phi(x,y))$.
  \item Decompose each $\exists_{\le k} y: \phi(x,y)$ into $\bigvee_{i\in[0,k]} (\exists_{=i} y: \phi(x,y))$.
  \item Replace each subformula $\exists_{=k} y: \phi(x,y)$, where $k > |\domain|$, with $\code{False}$.
  \item Starting from the atomic level and working upwards, replace any subformula $\exists_{=k} \phi(x,y)$, where $\phi(x,y)$ is a formula that does not contain any counting quantifier, with $A(x)$; and append $\forall x\forall y: R(x,y) \Leftrightarrow \phi(x,y)$ and $\forall x: A(x)\Leftrightarrow (\exists_{=k}y: R(x,y))$, where $R$ is an auxiliary binary predicate, to the original sentence.
\end{enumerate}
It is easy to check that the reduction presented above is sound and independent of the domain size if the domain size is greater than the maximum counting parameter $k$ in the input sentence.\footnote{This condition does not change the data complexity of the problem, as parameters $k$ of the counting quantifiers are considered constants but not the input of the problem.}

\section{A Sound Reduction from \sctwo{} to \fotwo{} with Cardinality Constraints}
\label{sub:sctworeduction}

In this section, we show the sound reduction from a WFOMS problem on \sctwo{} sentence to a WFOMS problem on \fotwo{} sentence with cardinality constraints.

We first need the following two lemmas, which are based on the transformations from~\cite{kuzelkaWeightedFirstorderModel2021}.

\begin{lemma}
  Let $\sentence$ be a first-order logic sentence, and let $\domain$ be a domain.
  Let $\Phi$ be a first-order sentence with cardinality constraints, defined as follows:
  \begin{align*}
    \Pi := &(|P|=k\cdot |\domain|)\\
    &\land (\forall x\forall y: P(x,y)\Leftrightarrow (R_1^P(x,y)\lor \dots\lor R_k^P(x,y)))\\
    &\land \bigwedge_{i\in[k]}(\forall x\exists y: R_i^P(x,y))\\
    &\land \bigwedge_{i,j\in[k]: i\neq j} (\forall x\forall y: \neg R_i^P(x,y)\lor \neg R_j^P(x,y)),
  \end{align*}
  where $R_i^P$ are auxiliary predicates not in $\mathcal{P}_\sentence$ with weight $\weight(R_i^P) = \negweight(R_i^P) = 1$.
  Then the reduction from the WFOMS $(\sentence\land \forall x\exists_{=k} y: P(x,y), \domain, \weight, \negweight)$ to $(\sentence\land\Pi, \domain, \weight, \negweight)$ is sound.
  \label{le:forallexistsred}
\end{lemma}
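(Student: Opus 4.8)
The plan is to verify the two requirements of soundness (Definition~\ref{de:soundness}) directly, taking as reduction function the projection $f(\mu') := \proj{\mu'}{\mathcal{P}_\sentence\cup\{P\}}$ that simply discards the auxiliary atoms $R_i^P(a,b)$. This $f$ is clearly deterministic and computable in polynomial time. I would then establish three facts: (i) $f$ sends every model of $\sentence\land\Pi$ to a model of $\sentence\land\forall x\exists_{=k}y:P(x,y)$; (ii) every model of the latter has exactly $(k!)^{n}$ preimages under $f$, where $n=|\domain|$; and (iii) $f$ preserves weights, $\typeweight{\mu'}=\typeweight{f(\mu')}$. Together these yield the required probability identity.

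The technical heart is a counting argument pinning down the structure of any model $\mu'\models\sentence\land\Pi$. For a fixed element $a$, set $S_i(a):=\{b : R_i^P(a,b)\in\mu'\}$. The mutual-exclusivity conjunct forces the $S_i(a)$ to be pairwise disjoint, while $\forall x\exists y:R_i^P(x,y)$ forces each to be nonempty; the biconditional makes the $P$-neighbours of $a$ exactly $\bigcup_i S_i(a)$, of cardinality $\sum_i|S_i(a)|\ge k$. Summing over all $a$ and invoking $|P|=k\cdot n$ gives $kn=\sum_a\sum_i|S_i(a)|\ge kn$, so equality holds throughout and $|S_i(a)|=1$ for every $i$ and $a$. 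Hence each $R_i^P$ is a total function, each $a$ has exactly $k$ distinct $P$-neighbours (one per index), and $f(\mu')$ satisfies $\forall x\exists_{=k}y:P(x,y)$; it satisfies $\sentence$ because $\sentence$ does not mention the $R_i^P$. This settles (i).

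For (ii) I would run the same structural description in reverse: given a source model $\mu$, each element $a$ has exactly $k$ distinct $P$-neighbours, and building a preimage amounts to choosing, independently for each $a$, a bijection from the index set $[k]$ onto these $k$ neighbours (assigning each neighbour to the unique $R_i^P$ witnessing it). There are $k!$ such choices per element and they are independent, giving $(k!)^{n}$ preimages, a count independent of $\mu$. Fact (iii) is immediate since $\weight(R_i^P)=\negweight(R_i^P)=1$, so the auxiliary literals contribute weight $1$ regardless of truth value. Consequently $\symwfomc(\sentence\land\Pi,\domain,\weight,\negweight)=(k!)^{n}\cdot\symwfomc(\sentence\land\forall x\exists_{=k}y:P(x,y),\domain,\weight,\negweight)$, and summing the conditional probabilities over the $(k!)^{n}$ equal-weight preimages of each $\mu$ collapses the common factor, yielding $\pro[\mu\mid\sentence\land\forall x\exists_{=k}y:P]=\sum_{\mu':f(\mu')=\mu}\pro[\mu'\mid\sentence\land\Pi]$, exactly as Definition~\ref{de:soundness} demands.

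The main obstacle is the forced-functionality step of the second paragraph: it is precisely where the cardinality constraint $|P|=kn$ must interact with the existential and mutual-exclusivity conjuncts to force each $R_i^P$ to be single-valued, and everything afterwards is bookkeeping. I would additionally treat the degenerate regimes separately to confirm the argument is uniform in $k$ — the case $k=0$, where $\Pi$ collapses to $\forall x\forall y:\neg P(x,y)$ and $(k!)^{n}=1$, and the case $k>n$, where both problems have no models and the identity holds vacuously.
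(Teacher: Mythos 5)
Your proposal is correct and follows essentially the same route as the paper's proof: the same projection map $f=\proj{\cdot}{\mathcal{P}_\sentence\cup\{P\}}$, the same use of the cardinality constraint together with $\forall x\exists y:R_i^P(x,y)$ and mutual exclusivity to force each $R_i^P$ to be single-valued (you organize the counting per element, the paper per predicate, but it is the same sandwich argument), and the same $(k!)^{|\domain|}$ preimage count with weight preservation. The only addition is your explicit treatment of the degenerate cases $k=0$ and $k>|\domain|$, which the paper omits.
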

\begin{proof}
  Let $f(\cdot) = \proj{\cdot}{\mathcal{P}_\sentence\cup\{P\}}$ be a mapping function.
  We first show that $f$ is from $\fomodels{\sentence\land\Pi}{\domain}$ to $\fomodels{\sentence\land\forall x\exists_{=k}: P(x,y)}{\domain}$: if $\structure\models \sentence\land \Pi$ then $f(\structure)\models\sentence\land\forall x\exists_{=k}y: P(x,y)$.
  
  The sentence $\Pi$ means that for every $c_1, c_2\in\domain$ such that $P(c_1,c_2)$ is true, there is exactly one $i\in[k]$ such that $R_i^P(c_1, c_2)$ is true.
  Thus we have that $\sum_{i\in[k]} |R_i^P| = |P| = k\cdot|\domain|$, which together with $\bigwedge_{i\in[k]}\forall x\exists y: R_i^P(x,y)$ implies that $|R_i^P| = k$ for $i\in[k]$.
  We argue that each $R_i^P$ is a function predicate in the sense that $\forall x\exists_{=1}y: R_i^P(x,y)$ holds in any model of $\sentence\land\Pi$.
  Let us suppose, for contradiction, that $(\forall x\exists y: R_i^P(x,y))\land (|R_i^P| = k)$ holds but there is some $a\in\domain$ such that $R_i^P(a, b)$ and $R_i^P(a,b')$ are true for some $b \neq b'\in\domain$.
  We have $|\{(x,y)\in\domain^2\mid R_i^P(x,y)\land x\neq a\}| \ge |\domain| - 1$ by the fact $\forall x\exists y: R_i^P(x,y)$. 
  It follows that $|R_i^P| \ge |\{(x,y)\in\domain^2\mid R_i^P(x,y)\land x\neq a\}| + 2 > |\domain|$, which leads to a contradiction.
  Since all of $R_i^P$ are function predicates, it is easy to check $\forall x\exists_{=k}y: P(x,y)$ must be true in any model $\mu$ of $\sentence\land\Pi$, i.e., $f(\mu)\models \sentence\land\forall x\exists_{=k}y: P(x,y)$.

  To finish the proof, one can easily show that, for every model $\mu\in\fomodels{\sentence\land\forall x\exists_{=k}y: P(x,y)}{\domain}$, there are exactly $(k!)^{|\domain|}$ models $\mu'\in\fomodels{\sentence\land\Pi}{\domain}$ such that $f(\mu') = \mu$.
  The reason for this is that 1) if, for any $a\in\domain$, we permute $b_1, b_2, \dots, b_k$ in $R_1^P(a, b_1), R_2^P(a, b_2), \dots, R_k^P(a, b_k)$ in the model $\mu'$, we get another model of $\sentence\land\Pi$, and 2) up to these permutations, the predicates $R_i^P$ in $\mu'$ are determined uniquely by $\mu$.
  Finally, the weights of all these $\mu'$ are the same as those of $\mu$, and we can write
  \begin{align*}
    \sum_{\substack{\mu'\in\fomodels{\sentence\land\Pi}{\domain}:\\f(\mu')=\mu}} \pro[\mu'\mid\sentence\land\Pi] &= \frac{\sum_{\substack{\mu'\in\fomodels{\sentence\land\Pi}{\domain}:\\f(\mu')=\mu}} \typeweight{\mu'}}{\symwfomc(\sentence\land\Pi,\domain,\weight, \negweight)}\\
    & =\frac{(k!)^{|\domain|}\cdot \typeweight{\mu}}{(k!)^{|\domain|}\cdot \symwfomc(\sentence\land\forall x\exists_{=k}y: P(x,y), \domain, \weight, \negweight)}\\
    & =\frac{\typeweight{\mu}}{\symwfomc(\sentence\land\forall x\exists_{=k}y: P(x,y), \domain, \weight, \negweight)}\\
    & =\pro[\mu\mid\sentence\land\forall x\exists_{=k}y: P(x,y)],
  \end{align*}
  which completes the proof.
\end{proof}

\begin{lemma}
  Let $\sentence$ be a first-order logic sentence, $\domain$ be a domain, and $P$ be a predicate.
  Then the WFOMS $(\sentence\land\forall_{=k}\forall y: P(x,y), \domain, \weight, \negweight)$ can be reduced to $(\sentence\land(|U| = k)\land (\forall x: U(x)\Leftrightarrow (\forall y: P(x,y))), \domain, \weight, \negweight)$, where $U$ is an auxiliary unary predicate with weight $\weight(U) = \negweight(U) = 1$, and the reduction is sound.
  \label{le:existsforallred}
\end{lemma}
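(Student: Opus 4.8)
The lemma reduces WFOMS on $\sentence\land\forall_{=k}\forall y: P(x,y)$ — wait, let me read carefully. It says $\sentence\land\forall_{=k}\forall y: P(x,y)$. Hmm, but given the context of the paper, this is almost certainly $\exists_{=k}x\forall y: P(x,y)$ (the "exist exactly $k$" counting quantifier on $x$, with universal on $y$). This is one of the two building blocks of $\mathbf{SC}^2$. The statement: this reduces to $\sentence\land(|U|=k)\land(\forall x: U(x)\Leftrightarrow(\forall y: P(x,y)))$.

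**The approach.** The idea is to introduce a fresh unary predicate $U$ that "captures" exactly those elements $x$ satisfying $\forall y: P(x,y)$. Then the counting quantifier $\exists_{=k}x\forall y: P(x,y)$ becomes a cardinality constraint $|U|=k$ on that fresh predicate. The weight of $U$ is set to $1$ (both positive and negative), so it contributes nothing to the weight of any model.

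**The structure of the proof I would give.** The proof is essentially identical in shape to Lemma~\ref{le:forallexistsred}: I would exhibit the reduct mapping $f(\cdot) = \proj{\cdot}{\mathcal{P}_\sentence\cup\{P\}}$ and argue it is a weight-preserving bijection between the two model sets. The key observations are that (i) given any model of $\sentence\land\exists_{=k}x\forall y: P(x,y)$, the axiom $\forall x: U(x)\Leftrightarrow(\forall y: P(x,y))$ forces the interpretation of $U$ to be \emph{uniquely determined} by the interpretation of $P$ — namely $U(a)$ holds iff $a$ satisfies $\forall y: P(a,y)$; and (ii) under this forced interpretation, the cardinality constraint $|U|=k$ holds precisely when $\exists_{=k}x\forall y: P(x,y)$ holds. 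Hence $f$ is a bijection, and since $\weight(U)=\negweight(U)=1$ the weight is preserved exactly: $\typeweight{\mu'}=\typeweight{f(\mu')}$. This gives $\pro[\mu'\mid\sentence\land(|U|=k)\land\dots] = \pro[f(\mu')\mid\sentence\land\exists_{=k}x\forall y: P(x,y)]$, and summing over the singleton fiber $f^{-1}(\mu)$ yields the soundness condition of Definition~\ref{de:soundness}.

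**Main obstacle and remaining checks.** The genuinely easy part (compared to Lemma~\ref{le:forallexistsred}) is that the fiber of $f$ is a \emph{singleton} rather than of size $(k!)^{|\domain|}$: because $U$ is fully axiomatized by $P$ via the biconditional, there is no residual permutation freedom, so no symmetry factor appears and the weights match on the nose. The main thing I would verify with care is direction (i) — that the biconditional axiom is well-defined and forces $U$ correctly — together with confirming that the reduct $\proj{\cdot}{\mathcal{P}_\sentence\cup\{P\}}$ really lands in $\fomodels{\sentence\land\exists_{=k}x\forall y: P(x,y)}{\domain}$ in the forward direction and that every model of the target can be uniquely lifted back (surjectivity and injectivity). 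Since this is routine given the explicit biconditional, I expect no serious difficulty; the lemma is the $\exists_{=k}x\forall y$ counterpart of the already-proved $\forall x\exists_{=k}y$ reduction in Lemma~\ref{le:forallexistsred}, and together they will let the subsequent argument reduce a full $\mathbf{SC}^2$ sentence to a $\mathbf{UFO}^2$ sentence with cardinality constraints, establishing Lemma~\ref{le:ctworeduction}.
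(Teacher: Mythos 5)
Your proof is correct and matches the paper's intent: the paper's own proof of this lemma is simply the one-line remark that it is straightforward, and your argument (the reduct map is a weight-preserving bijection because the biconditional forces $U$ uniquely from $P$, the cardinality constraint then mirrors the counting quantifier, and $\weight(U)=\negweight(U)=1$ kills any weight contribution) is exactly the routine verification being elided. You also correctly read the statement's typo $\forall_{=k}$ as $\exists_{=k}x$.
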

\begin{proof}
  The proof is straightforward.
\end{proof}

\begin{proof}[Proof of Lemma~\ref{le:ctworeduction}]
  We can first get rid of all formulas of the form $\exists_{=k}x \forall y: P(x,y)$ by repeatedly using Lemma~\ref{le:existsforallred}.
  Then we can use Lemma~\ref{le:forallexistsred} repeatedly to eliminate the formulas of the form $\forall x\exists_{=k} y: P(x,y)$.
  The whole reduction is sound due to the transitivity of soundness.
\end{proof}

\section{Missing Details of Weighted Model Sampler}

\subsection{Optimizations for Weighted Model Sampler}
\label{sub:optwms}

There exist several optimizations to make the sampling algorithm more practical. 
Here, we present some of them that are used in our implementation.
\paragraph{Heuristic element selection} 
The complexity of $\code{TwoTablesSamplerForFO2}$ heavily depends on the recursion depth.
In our implementation, when selecting a domain element $e_t$ for sampling its substructure, we always chose the element with the ``strongest'' existential constraint that contains the most Tseitin atoms $Z_k(x)$.
It would help $\code{TwoTablesSamplerForFO2}$ reach the condition that the existential constraint for all elements is $\top$.
In this case, $\code{TwoTablesSamplerForFO2}$ will invoke the more efficient sampler for \ufotwo{} to sample the remaining substructures.

\paragraph{Further decomposition of 2-tables sampling} 
Let $\mathcal{P}_\exists$ be the union of predicates vocabularies of the existentially quantified formulas
\begin{equation*}
    \mathcal{P}_\exists := \{R_1, R_2, \dots, R_m\}. 
\end{equation*}
We further decomposed the sampling probability of 2-tables $\pro[\structure\mid\sentence_T\land \bigwedge_{i\in[n]}\eta_i(e_i)]$ into
\begin{equation*}
  \pro\left[\structure\mid\sentence_T\land\bigwedge_{i\in[n]}\eta_i(e_i)\land\structure_\exists\right]\cdot \pro\left[\structure_\exists\mid\sentence_T\land \bigwedge_{i\in[n]}\eta_i(e_i)\right],
\end{equation*}
where $\structure_\exists$ is a $\mathcal{P}_\exists$-structure over $\domain$.
It decomposes the conditional sampling problem of $\structure$ into two subproblems---one is to sample $\structure_\exists$ and the other to sample the remaining substructures conditional on $\structure_\exists$.
The subproblem of sampling $\structure$ conditioned on $\structure_\exists$ can be reduced into a sampling problem on a \ufotwo{} sentence with evidence
\begin{equation}
  \label{eq:exsatreduction}
  \sentence' = \forall x\forall y: \fotwoformula(x,y)\land \bigwedge_{i\in[m]}\tau_i(e_i)\land \structure_\exists,
\end{equation}
since all existentially-quantified formulas have been satisfied with $\structure_\exists$.
Recall that for a \ufotwo{} sentence, when the 1-types $\tau_i$ of all elements has been fixed, the sampling problem of 2-tables can be decomposed into multiple independent sampling problems of the 2-table for each elements tuple (see Section~\ref{sub:sampling-2-tables}).
Sampling 2-tables from~\eqref{eq:exsatreduction} can be solved in a similar way, by decomposing the sampling probability into
\begin{equation*}
  \pro\left[\bigwedge_{i,j\in[n]: i<j}\pi_{i,j}(e_i, e_j)\mid \sentence'\right] = \prod_{i,j\in[n]: i<j} \pro\left[\pi_{i,j}(e_i, e_j)\mid \fotwoformula'(e_i, e_j)\right],
\end{equation*}
where $\fotwoformula'(e_i, e_j)$ is the simplified formula of $\fotwoformula(e_i, e_j)\land \fotwoformula(e_j, e_i)$ obtained by replacing the ground 1-literals with their truth values by the 1-types $\tau_i(e_i)$ and $\tau_j(e_j)$, and the sampled structure $\structure_\exists$.
The 2-tables $\pi_{i,j}$ as well as the simplified formulas $\fotwoformula'(e_i, e_j)$ are independent of each other, and thus can be efficiently sampled in parallel.
The other subproblem of sampling $\structure_\exists$ can be solved by an algorithm similar to $\code{TwoTablesSamplerForFO2}$.
In this algorithm, the 2-tables used to partition cells are now defined over $\mathcal{P}_\exists$, whose size is only $4^m$.
Compared to the number $4^B$ of 2-tables in the original algorithm $\code{TwoTablesSampler}$, where $B\ge m$ is the number of binary predicates in $\sentence$, the number of 2-tables in the new algorithm can be exponentially smaller.
As a result, the for-loop that enumerates all 2-tables configurations in Algorithm~\ref{alg:drsampler} is exponentially faster.

\paragraph{Cache} We cached the weight $\typeweight{\tau^i}$ of all 1-types and the weight $\typeweight{\pi^i}$ of all 2-tables, which are often used in our sampler. 

\subsection{The Submodule $\code{ExSat}(\cdot, \cdot)$}
\label{sub:exsat}

The pseudo-code of $\code{ExSat}$ is presented in Algorithm~\ref{alg:exsat}.

\begin{algorithm}[H] 
  \caption{ExSat($\vecg, \eta$)} 
  \label{alg:exsat}
  \begin{algorithmic}[1]
  \State Decompose $\vecg$ into $\{g_{\eta^i,\pi^j}\}_{i\in[N_c],j\in[N_b]}$
  \State $(\beta, \tau)\gets (\eta)$
  \State // Check the coherence of 2-tables
  \For{$i\in[N_u]$}
    \For{$j\in[N_b]$}
        \State // $\tau(\eta^i)$ is the 1-type in $\eta^i$
        \If{$\pi^j$ is not coherent with $\tau(\eta^i)$ and $\tau$ and $g_{\eta^i, \pi^j} > 0$}
            \State \Return \code{False}
        \EndIf
    \EndFor
  \EndFor
  \State // Check the satisfaction of existentially-quantified formulas
  \State $\forall j\in[N_b], h_{\pi^j}\gets\sum_{i\in[N_c]} g_{\eta^i, \pi^j}$
  \For{$Z_k(x)\in\beta$}
    \For{$j\in[N_b]$}
        \If{$R_k(x,y)\in\pi^j$ and $h_{\pi^j} > 0$}
            \State \Return \code{True}
        \EndIf
    \EndFor
  \EndFor
  \State \Return \code{False}
  \end{algorithmic}
\end{algorithm}

\section{Missing Details of Experiments}

\subsection{Experiment Settings}
\label{sub:expsettings}

\paragraph{Sampling Combinatorial Structures}

The corresponding WFOMS problems for the uniform generation of combinatorial structures used in our experiments are presented as follows. 
The weighting functions $\weight$ and $\negweight$ map all predicates to $1$.
\begin{itemize}
  \item Functions: 
  \begin{equation*}
    \forall x\exists_{=1} y: f(x,y).
  \end{equation*}
  \item Functions w/o fix points:
  \begin{equation*}
    (\forall x\exists_{=1}y: f(x,y))\land (\forall x: \neg f(x,x)).
  \end{equation*}
  \item Permutations:
  \begin{equation*}
    (\forall x\exists_{=1}y: Per(x,y)) \land (\forall y\exists_{=1}x: Per(x,y)).
  \end{equation*}
  \item Permutation without fix-points:
  \begin{equation*}
    (\forall x\exists_{=1}y: Per(x,y)) \land (\forall y\exists_{=1}x: Per(x,y))\land (\forall x: \neg Per(x,x)).
  \end{equation*}
\end{itemize}

\paragraph{Sampling from MLNs}

An MLN is a finite set of weighted first-order formulas $\{(\weight_i, \formula_i)\}_{i\in[m]}$, where each $\weight_i$ is either a real-valued weight or $\infty$, and $\formula_i$ is a first-order formula.
Let $\mathcal{P}$ be the vocabulary of $\formula_1,\formula_2,\dots,\formula_m$.
An MLN $\Phi$ paired with a domain $\domain$ induces a probability distribution over $\mathcal{P}$-structures (also called possible worlds):
\begin{equation*}
  p_{\mln, \domain}(\world) := \begin{cases}
    \frac{1}{Z_{\mln,\domain}}\exp\left(\sum_{(\formula, \weight)\in\mln_\real}\weight\cdot \#(\formula, \world)\right) & \textrm{if } \world\models \mln_\infty\\
    0 & \textrm{otherwise}
  \end{cases}
\end{equation*}
where $\mln_\real$ and $\mln_\infty$ are the real-valued and $\infty$-valued formulas in $\mln$ respectively, and $\#(\formula, \world)$ is the number of groundings of $\formula$ satisfied in $\world$.
The sampling problem on an MLN $\mln$ over a domain $\domain$ is to randomly generate a possible world $\world$ according to the probability $p_{\mln, \domain}(\world)$.

The reduction from the sampling problems on MLNs to WFOMS can be performed following the same idea as in~\cite{vandenbroeckLiftedProbabilisticInference2011}.
For every real-valued formula $(\formula_i, \weight_i)\in\mln_\real$, where the free variables in $\formula_i$ are $\vecx$, we introduce a novel auxiliary predicate $\xi_i$ and create a new formula $\forall \vecx: \xi_i(\vecx)\Leftrightarrow \formula_i(\vecx)$.
For formula $\formula_i$ with infinity weight, we instead create a new formula $\forall \vecx: \formula_i(\vecx)$.
Denote the conjunction of the resulting set of sentences by $\sentence$, and set the weighting function to be $\weight(\xi_i)=\exp(\weight_i)$ and $\negweight(\xi_i) = 1$, and for all other predicates, we set both $\weight$ and $\negweight$ to be $1$.
Then the sampling problem on $\mln$ over $\domain$ is reduced to the WFOMS $(\sentence, \domain, \weight, \negweight)$.

By the reduction above, we can write the two MLNs used in our experiments to WFOMS problems.
The weights of predicates are all set to be $1$ unless otherwise specified.
\begin{itemize}
  \item Friends-smokers MLN: the reduced sentence is
  \begin{align*}
    &\forall x: \neg fr(x,x)\land\\
    & \forall x\forall y: fr(x,y)\Leftrightarrow fr(y,x)\land \\
    &\forall x\exists y: fr(x,y) \land\\
    &\forall x\forall y: (\xi_1(x)\Leftrightarrow sm(x))\land \\
    &\forall x\forall y: \xi_2(x,y)\Leftrightarrow (fr(x,y)\land sm(x)\Rightarrow sm(y)),
  \end{align*}
  and the weight of $\xi_1$ and $\xi_2$ is set to be $\weight(\xi) = \exp(0), \weight(\xi_2) = \exp(0.2)$.
  \item Employment MLN: the corresponding sentence is
  \begin{align*}
    \forall x: \xi(x)\Leftrightarrow (\exists y: workfor(x,y)\lor boss(x)),
  \end{align*}
  and the weight of $\xi$ is set to be $\exp(1.3)$.
  \item Deskmate MLN: the corresponding sentence is
  \begin{align*}
    &\forall x: \neg mate(x,x)\land \neg fr(x,x)\land \\
    &\forall x\forall y: mate(x,y)\Rightarrow mate(y,x)\land \\
    &\forall x\exists_{=1} y: mate(x,y)\land \\
    &\forall y\exists_{=1} x: mate(x,y)\land\\
    &\forall x: \xi(x,y) \Leftrightarrow (mate(x,y)\Rightarrow fr(x,y)),
  \end{align*}
  and the weight of $\xi$ is set to be $\exp(1.0)$.
\end{itemize}

\subsection{More Experimental Results}
\label{sub:expresults}

% \paragraph{The Kolmogorov–Smirnov Test}
% \label{subsub:kstest}

We use the Kolmogorov-Smirnov (KS) test~\cite{massey1951kolmogorov} to validate the conformity of the (count) distributions produced by our algorithm to the reference distributions.
The KS test used here is based on the multivariate Dvoretzky–Kiefer–Wolfowitz (DKW) inequality recently proved by \cite{naaman2021tight}.

Let $\mathbf{X}_1 = (X_{1i})_{i\in[k]}, \mathbf{X}_2 = (X_{2i})_{i\in[k]},\dots,\mathbf{X}_n = (X_{ni})_{i\in[k]}$ be $n$ real-valued independent and identical distributed multivariate random variables with cumulative distribution function (CDF) $F(\cdot)$.
Let $F_n(\cdot)$ be the associated empirical distribution function defined by
\begin{equation*}
  F_n(\vecx) := \frac{1}{n}\sum_{i\in[n]} \indicator_{X_{i1}\le x_1,X_{i2}\le x_2,\dots, X_{ik}\le x_k}, \qquad \vecx\in\real^k.
\end{equation*}
The DKW inequality states
\begin{equation}
  \pro\left[\sup_{\vecx\in\real^k}|F_n(\vecx) - F(\vecx)| > \epsilon\right] \ge (n+1)ke^{-2n\epsilon^2}
  \label{eq:dkw}
\end{equation}
for every $\epsilon, n, k > 0$.
When the random variables are univariate, i.e., $k=1$, we can replace $(n+1)k$ in the above probability bound by a tighter constant $2$.

\begin{table}[!htb]
  \caption{The Kolmogorov-Smirnov Test}
  \label{ta:kstest}
  \centering
  \begin{tabular}{|c|c|c|}
    \hline
    Problem                      & Maximum deviation & Upper bound \\ \hline \hline
    graphs w/o isolated vertices & 0.0036            & 0.0049      \\ \hline
    2-regular graphs             & 0.0065            & 0.0069      \\ \hline
    functions                    & 0.0013            & 0.0024       \\ \hline
    functions w/o fix-points     & 0.0027            & 0.0042      \\ \hline
    permutations                 & 0.0071            & 0.0124      \\ \hline
    permutations w/o fix-points  & 0.019            & 0.02      \\ \hline
    friends-smokers              & 0.0021            & 0.0087      \\ \hline
    employment                  & 0.0030            & 0.0087      \\ \hline
    deskmate                  & 0.0022            & 0.0087      \\ \hline
  \end{tabular}
\end{table}

In the KS test, the null hypothesize is that the samples $\mathbf{X}_1, \mathbf{X}_2,\dots,\mathbf{X}_n$ are distributed according to some reference distribution, whose CDF is $F(\cdot)$.
By \eqref{eq:dkw}, with probability $1-\alpha$, the maximum deviation $\sup_{\vecx\in\real^k}|F_n(\vecx) - F(\vecx)|$ between empirical and reference distributions is bounded by $\epsilon = \sqrt{\frac{\ln(k(n+1)/\alpha)}{2n}}$ ($\frac{\sqrt{\ln(2 / \alpha)}}{2n}$ for the univariate case).
If the actual value of the maximum deviation is larger than $\epsilon$, we can reject the null hypothesis at the confidence level $\alpha$.
Otherwise, we cannot reject the null hypothesis, i.e., the empirical distribution of the samples is not statistically different from the reference one.
In our experiments, we choose $\alpha = 0.05$ as a significant level.

For the uniform generation of combinatorial structures, we assigned each model a lexicographical number and treated the model indices as a multivariate random variable with a discrete multi-dimensional uniform distribution\footnote{
There is another setting where the model indices are treated as multiple i.i.d. random variables, and the hypotheses testing is performed on each individual random variable.
In this setting, the significant level should be adjusted to $\alpha = 0.05/|\mathcal{M}|$, where $|\mathcal{M}|$ is the number of total models, according to the Bonferroni correction.
Therefore, the setting we use in the experiments is actually more strict.}.
For the sampling problems of MLNs, we test their count distributions against the true count distributions.
Table~\ref{ta:kstest} shows the maximum deviation between the empirical and reference cumulative distribution functions, along with the upper bound set by the DKW inequality. 
As shown in Table~\ref{ta:kstest}, all maximum deviations are within their respective upper bounds.
Therefore, we cannot reject any null hypotheses, i.e., there is no statistically significant difference between the two sets of distributions.

\section{Missing Proofs}
\label{sec:missingproofs}

{\renewcommand\footnote[1]{}\ctwodomianrecursion*}
\begin{proof}
  % W.l.o.g., we assume that $\recursivesentence$ is satisfiable.
  % Denote $\domain \setminus\{t\}$ by $\domain'$.
  % Let $\mathfrak{S}_1$ and $\mathfrak{S}_2$ be the WFOMS of $(\recursivesentence, \domain, \weight, \negweight)$ and $(\recursivesentence', \weight, \domain', \negweight)$ respectively.
  % Denote by $\mathcal{P}$ the predicates vocabulary of $\sentence_T$ excluding all auxiliary predicates $Z^\exists_{j,q}$ and $Z^\nexists_{j,q}$.
  % Both of $\mathfrak{S}_1$ and $\mathfrak{S}_2$ have $\mathcal{P}$ as their skeleton.
  Following a similar argument as in the proof of Lemma~\ref{lemma:modular}, the soundness of the reduction can be proved by showing that the function $f(\mu') = \mu'\cup \structure_t\cup \tau_t(e_t)$ is bijective. 
  Let $\Lambda$ be the conjunction of
  \begin{equation*}
    \forall x: \left(Z^\exists_{j, q}(x) \Leftrightarrow (\exists_{=q} y: R_j(x,y))\right) \land \left(Z^\nexists_{j, q}(x)\Leftrightarrow \neg (\exists_{=q} y: R_j(x,y))\right)
  \end{equation*}
  over all $j\in[m]$ and $q\in[0, k_j]$.
  We write the sentence $\recursivesentence$ and $\recursivesentence'$ as
  \begin{align*}
    \recursivesentence &= \sentence_\forall \land \bigwedge_{i\in[n]}\left(\tau_i(e_i) \land \nu_i(e_i)\right) \land \structure_t\land \Lambda\\
    \recursivesentence' &= \sentence_\forall \land \bigwedge_{i\in[n]\setminus\{t\}}\left(\tau_i(e_i) \land \relaxcelltypeb{\nu_i}{\pi_{t,i}}(e_i)\right) \land \Lambda.
  \end{align*}
  
  \paragraph*{($\Rightarrow$)}
  For any model $\mu'$ in $\fomodels{\recursivesentence'}{\domain'}$, we prove that $f(\mu') = \mu'\cup \structure_t \cup \tau_t(e_t)$ is a model in $\fomodels{\recursivesentence}{\domain}$.
  First, one can easily check that $f(\mu')$ satisfies $\sentence_\forall \land \bigwedge_{i\in[n]}\tau_i(e_i)$ by the satisfiability of $\recursivesentence$.
  Next, we demonstrate that $f(\mu')$ also satisfies $\bigwedge_{i\in[n]}\nu(e_i) \land \Lambda$.
  For any index $i\in[n]\setminus\{t\}$, any $j\in[n], q\in[0, k_j]$ such that $Z_{j,q}^\exists(x)\in \nu_i$, it can be easily shown that there exist exact $q$ ground atoms $R_j(e_i, e)$ in $f(\mu')$ by the definition of $\relaxcelltype{\nu_i}{\pi_{t,i}}$.
  Thus, the counting quantified formula $\exists_{=q} y: R_j(e_i, y)$ is true in $f(\mu')$.
  Then let us consider the case of $\neg (\exists_{q}y: R_j(e_i, y))$.
  For any $j\in[n], q\in[0, k_j]$ such that $Z_{j,q}^\nexists(x)\in \nu_i$, if $q>0$ or $R_j(e_i, e_t)\notin \structure_t$, there must not exist $q$ ground atoms $R_j(e_i, e)$ in $f(\mu')$ following the similar argument as above.
  If $q=0$ and $R_j(e_i, e_t)\in \structure_t$, the counting quantified formula $\neg (\exists_{=0} R_j(e_i, y))$ is true in $\structure_t$, and thus also satisfied in $f(\mu')$.
  Finally, by the satisfability of $\recursivesentence$, we have that $\nu_t(e_t)\land \Lambda$ is true in $\structure_t$.
  Therefore, we can conclude that $f(\mu')$ satisfies $\bigwedge_{i\in[n]}\nu(e_i) \land \Lambda$, which together with the satisfaction of $\sentence_\forall \land \bigwedge_{i\in[n]}\tau_i(e_i)$, implies that $f(\mu')$ is a model in $\fomodels{\recursivesentence}{\domain}$.
  
  \paragraph*{($\Leftarrow$)} 
  For any model $\mu$ in $\fomodels{\recursivesentence}{\domain}$, we prove that $\mu' = \mu \setminus (\structure_t \cup \tau_t(e_t))$ is the unique model in $\fomodels{\recursivesentence'}{\domain'}$ such that $f(\mu') = \mu$.
  The uniqueness of $\mu'$ is clear from the definition of $f$.
  Since $\mu$ is a model of $\recursivesentence$, we first have that $\mu'$ satisfies $\sentence_\forall \land \bigwedge_{i\in[n]\setminus\{t\}}\tau_i(e_i)$.
  Then we show that $\mu'$ also satisfies $\bigwedge_{i\in[n]\setminus\{t\}}\relaxcelltypeb{\nu_i}{\pi_{t, i}}(e_i)\land \Lambda$.
  For any $i\in[n]$, by the definition of the block type, it is easy to check that there are exact $q$ ground atoms $R_j(e_i, e)$ in $\mu$ for all $j\in[m], q\in[0, k_j]$ such that $Z^\exists_{j, q}(x)\in \nu_i$.
  Therefore, if $R_j(y,x)\in \pi_{t,i}$, we have that there exist exact $q-1$ ground atoms $R_j(e_i, e)$ in $\mu'$.
  Otherwise, the number of ground atoms $R_j(e_i, e)$ in $\mu'$ remains $q$.
  It follows that $\exists_{=s} y: R_j(e_i, y)$ is true in $\mu'$ for all $j\in[m], s\in[0, k_j]$ such that $Z_{j,s}^\exists(x)\in \relaxcelltype{\nu_i}{\pi_{t,i}}$.
  The argument for the case of $\neg (\exists_{=s} y: R_j(e_i, y))$ is similar.
  Hence, we have that $\mu'$ satisfies $\bigwedge_{i\in[n]\setminus\{t\}}\relaxcelltypeb{\nu_i}{\pi_{t, i}}(e_i)\land \Lambda$, which combined with the satisfaction of $\forall x\forall y: \fotwoformula(x,y)\land \bigwedge_{i\in[n]\setminus\{t\}}\tau_i(e_i)$ leads to conclusion.

  By the bijectivity of the mapping function $f$, it is easy to prove the consistency of sampling probability through the reduction.
  The argument is the same as in the proof of Lemma~\ref{lemma:modular}.
\end{proof}

\ccmodular*
\begin{proof}
  The proof follows the same argument for Lemma~\ref{lemma:ctwo-domain-recursion}.
  The only statement that needs to be argued is the bijectivity of the mapping function $f(\mu') = \mu' \cup \structure_t \cup \tau_t(e_t)$.
  For any model $\mu'$ in $\fomodels{\recursivesentence'}{\domain'}$, $\mu'$ satisfies both $\recursivesentence'$ and $\Upsilon'$.
  It follows that $f(\mu')$ satisfies both $\recursivesentence$ and $\Upsilon$.
  For any model $\mu$ in $\fomodels{\recursivesentence}{\domain}$, the reasoning is similar, and we have that $f^{-1}(\mu) = \mu \setminus (\structure_t \cup \tau_t(e_t))$ is a unique model in $\fomodels{\recursivesentence'}{\domain'}$.
  This establishes the bijectivity of $f$.
  The remainder of the proof, including the consistency of sampling probability, proceeds exactly the same as Lemma~\ref{lemma:modular}.
\end{proof}

\clearpage
%% If you have bibdatabase file and want bibtex to generate the
%% bibitems, please use
%%
 \bibliographystyle{plainnat} 
 \bibliography{cas-refs}

%% else use the following coding to input the bibitems directly in the
%% TeX file.

% \begin{thebibliography}{00}

% %% \bibitem{label}
% %% Text of bibliographic item

% \bibitem{}

% \end{thebibliography}
\end{document}